\documentclass[letterpaper]{article} 
\usepackage{aaai2027}  
\usepackage[hyphens]{url}  
\usepackage{graphicx} 
\usepackage{natbib}  
\usepackage{caption} 
\usepackage{algorithm}
\usepackage{algorithmic}

\usepackage{newfloat}
\usepackage{listings}
\DeclareCaptionStyle{ruled}{labelfont=normalfont,labelsep=colon,strut=off} 
\floatstyle{ruled}
\newfloat{listing}{tb}{lst}{}
\floatname{listing}{Listing}

\usepackage{amsmath}
\usepackage{amsthm}
\usepackage{amssymb}
\usepackage{stackengine}
\usepackage{tikz}
\usetikzlibrary{arrows.meta, positioning}

\makeatletter
\newtheorem*{rep@theorem}{\rep@title}
\newcommand{\newreptheorem}[2]{%
\newenvironment{rep#1}[1]{%
 \def\rep@title{#2 \ref{##1}}%
 \begin{rep@theorem}}%
 {\end{rep@theorem}}}
\makeatother

\newtheorem{theorem}{Theorem}
\newreptheorem{theorem}{Theorem}
\newtheorem{definition}[theorem]{Definition}

\def\delequal{\mathrel{\ensurestackMath{\stackon[1pt]{=}{\scriptstyle\Delta}}}}

\usepackage{booktabs}

\title{General Probabilities of Causation with Causal Knowledge}
\author {
    Xin Shu,
    Zhen Lei,
    Ang Li\corresponding
}
\affiliations {
    Department of Computer Science, Florida State University\\
    xs24a@fsu.edu, zl24a@fsu.edu, angli@cs.fsu.edu
}

\begin{document}

\maketitle

\begin{abstract}
Probabilities of causation (PoCs) characterize individual causal responses that cannot be directly observed and therefore generally require partial identification. Tian and Pearl first derived theoretically sharp bounds for binary PoCs, including the probability of necessity (PN), the probability of sufficiency (PS), and the probability of necessity and sufficiency (PNS). Mueller et al. subsequently tightened the bounds for binary PNS by incorporating causal information encoded in covariates and mediators. More recently, Li and Pearl, as well as Shu et al., extended PoCs to multivalued settings and derived corresponding theoretical bounds. These developments naturally raise the question of whether additional causal knowledge can further tighten the bounds in multivalued settings. This paper addresses this question by deriving tighter bounds for multivalued PoCs through the incorporation of causal information encoded in covariates and mediators. We illustrate the theoretical results with toy examples, while simulation studies further demonstrate that the proposed bounds are tighter than existing nonbinary bounds.

\end{abstract}

\section{Introduction}
The modern study of probabilities of causation (PoCs) began with ~\citet{pearl1999probabilities}, who introduced three counterfactual measures of causation for binary treatments and outcomes within the Structural Causal Model (SCM) framework~\cite{galles1998axiomatic,halpern2000axiomatizing}: $\mathrm{PN}$, $\mathrm{PS}$, and $\mathrm{PNS}$.  \citet{tian2000probabilities} subsequently derived theoretically sharp bounds for these quantities by combining experimental and observational data using Balke’s linear programming framework~\cite{balke1995probabilistic}. 
\citet{li2019unit,li2022unit} later 
applied them to unit-level utility analysis.
Although the bounds derived by Tian and Pearl are sharp given the available distributional information, subsequent research has shown that incorporating covariates, mediators, and structural restrictions encoded by causal graphs can yield narrower bounds on probabilities of causation~\cite{kuroki2011statistical,dawid2017probability,mueller2021causeseffectslearningindividual}.

Recent research has extended probabilities of causation to settings with nonbinary treatments and outcomes. \citet{zhang2022partial} and \citet{li2022bounds} developed numerical methods based on nonlinear programming for computing bounds on nonbinary probabilities of causation. \citet{li2024probabilities} subsequently derived recursive theoretical bounds. More recently, \citet{shu2026identificationprobabilitiescausationrecursive} derived closed form bounds by combining experimental and observational data. 

Despite these advances, whether causal graph information can further tighten the bounds of PoCs in nonbinary settings remains an open question. The answer does not follow immediately from the binary case, because a multivalued PoC involves a joint probability of several potential outcomes and therefore introduces a substantially more complex counterfactual structure. Moreover, the information contributed by an auxiliary variable depends on whether it is a pretreatment covariate, a partial mediator, or a pure mediator.


Consider a pharmaceutical company developing a new drug to prevent diabetes, which is increasingly common among younger populations. Its effectiveness can be evaluated using drug-use frequency and diabetes status, both of which may take multiple values. The probability of necessity and sufficiency ($\mathrm{PNS}$) characterizes such individual response patterns by jointly considering the individual's potential diabetes outcomes under different frequencies of taking the drug. However, such bounds may remain wide (e.g., $[0.1,0.9]$) and therefore be of limited use for decision making if they do not incorporate the causal structure underlying an individual's response. In practice, additional variables often encode information about this causal structure.
For example, lifestyle regularity may influence both the frequency of taking the drug and diabetes status, appetite may partially mediate the effect of the drug, and blood glucose regulation may fully mediate this effect. A causal graph distinguishes these roles and imposes corresponding restrictions on feasible counterfactual response patterns. By incorporating these restrictions together with the available experimental and observational distributions, causal graph information can rule out response patterns that are mathematically possible but structurally incompatible, thereby yielding narrower and more informative $\mathrm{PNS}$ bounds.

Building on the nonbinary bounds of \citet{shu2026identificationprobabilitiescausationrecursive}, this paper develops bounds with multivalued treatments and outcomes by incorporating causal graph information. We exploit causal graphs involving non-descendant covariates, including confounders and backdoor adjustment sets \cite{pearl1993aspects, pearl1995causal}, as well as partial and pure mediators, to impose structural constraints on the feasible joint counterfactual distributions. For each causal structure, we derive bounds by integrating the available experimental and observational distributions with graph-based constraints on the feasible joint counterfactual distributions. These results extend the use of causal structure from binary to multivalued settings and show when auxiliary variables can exclude structurally incompatible response patterns, thereby tightening existing bounds on individual causal responses.

\subsection{Key Contributions}
\begin{itemize}
    \item We extend existing bounds on PoCs that incorporate causal knowledge from the binary to the multivalued setting. Specifically, we derive bounds not only for $\mathrm{PNS}$ but also for $\mathrm{PSub}$, $\mathrm{PRep}$, and $\mathrm{PN}$ for general treatment and outcome settings. Unlike the binary case, the multivalued setting requires characterizing joint counterfactual probabilities across multiple treatment levels, introducing substantial new technical challenges while naturally including the binary setting as a special case.

    \item We derive results for the four representative forms of multivalued PoCs introduced by \citet{shu2026identificationprobabilitiescausationrecursive}, which are sufficient to represent any discrete PoC. Consequently, the proposed framework immediately applies to a broad class of multivalued PoCs, substantially extending the scope and theoretical impact of incorporating causal knowledge.

    \item We consider multiple causal structures involving covariates that are not descendants of the treatment, covariates satisfying the back-door criterion, partial mediators, and pure mediators.
\end{itemize}

\section{Preliminaries}\label{related work}
In this section, we review the basic concepts of probabilities of causation (PoC) and the relevant notation used in Structural Causal Models (SCMs)~\cite{galles1998axiomatic,halpern2000axiomatizing}. $Y_x=y$ denotes the counterfactual statement that ``variable $Y$ would have the value $y$ had $X$ been set to $x$.'' Here, $X$ denotes the treatment variable, while $Y$ denotes the outcome or effect. For simplicity, we use $y_x$ as shorthand for $Y_x=y$. Since this paper considers nonbinary settings, we use ${y_j}_{x_j}$ to denote $Y_{x_j}=y_j$ throughout. In general, $P({y_j}_{x_j})$ refers to a probability obtained from experimental data, whereas $P({x_j},{y_j})$ refers to a probability obtained from observational data.

Let $X$ and $Y$ be binary variables in a causal model $M$. Let $x$ and $y$ denote the events $X=\mathrm{true}$ and $Y=\mathrm{true}$, respectively, and let $x'$ and $y'$ denote the events $X=\mathrm{false}$ and $Y=\mathrm{false}$, respectively. The three basic probabilities of causation are defined below~\cite{pearl1999probabilities}:
\begin{definition}[Probabilities of Causation] \cite{pearl1999probabilities}
\begin{align*}
\text{PNS}\delequal P(y_x,y'_{x'})
\text{PN} \delequal  P(y'_{x'}|x,y),
\text{PS} \delequal P(y_x|y',x'),
\end{align*}
\end{definition}

In the multivalued setting, let the treatment variable $X$ and the outcome variable $Y$ each take $n$ values. ${PNS}(k)=P({y_1}_{x_1},\ldots,{y_k}_{x_k})$, where $k\le n$, was first introduced by \citet{li2024probabilities}, represents the probability that the same individual would exhibit outcome $y_j$ if the treatment were set to $x_j$ for every $j\in\{1,\ldots,k\}$. It therefore characterizes a joint individual causal response across multiple treatment levels.
\begin{align*}
&\textbf{Probability of necessity and sufficiency(k):}\\
&\mathrm{PNS}(k) = P({y_{1}}_{x_{1}},...,{y_{k}}_{x_{k}}) \\
&\textbf{Probability of substitute(k, p): } p \ne j \text{ for } 1\le j \le k\\
&\mathrm{PSub}(k,p) = P({y_{1}}_{x_{1}},...,{y_{k}}_{x_{k}},x_p)\\
&\textbf{Probability of replacement(k, q):}\\
&\mathrm{PRep}(k,q) = P({y_{1}}_{x_{1}},...,{y_{k}}_{x_{k}},y_q) \\
&\textbf{Probability of necessity(k, p, q): } p \ne j \text{ for } 1\le j \le k\\
&\mathrm{PN}(k,p,q) = P({y_{1}}_{x_{1}},...,{y_{k}}_{x_{k}},x_p,y_q)
\end{align*}

Later, \citet{shu2026identificationprobabilitiescausationrecursive} derived closed-form bounds for nonbinary probabilities of causation. The four main theorems are presented above. Further details, together with the equivalence and replaceability theorems, are omitted here because of space limitations and can be found in \citet{shu2026identificationprobabilitiescausationrecursive}.

In this paper, we derive narrower bounds for these four PoCs by incorporating information from causal graphs involving covariates and mediators. Under the specified graphical conditions, the proposed nonbinary bounds tighten existing bounds. We establish this improvement mathematically and use simulations to demonstrate and quantify the extent of the tightening.

\section{Bounds with Causal Diagram}
In this section, we assume that variables $X$ and $Y$ each take $n$ values, so that $|X|=|Y|=n$, with $k\leq n$. The generalization was formally established by \citet{shu2026identificationprobabilitiescausationrecursive}. Proofs of the following theorems are provided in the appendix.

For each graphical structure, we derive improved versions of all four existing bounds for multivalued probabilities of causation ($\mathrm{PNS}(k)$, $\mathrm{PSub}(k,p)$, $\mathrm{PRep}(k,q)$, $\mathrm{PN}(k,p,q)$).

\subsection{Non-descendant Covariates}

\begin{figure}[!b]
\centering
\begin{tikzpicture}[>=Stealth, thick, scale=0.55]
\begin{scope}[shift={(-5,0)}]
    \node[circle, fill=black, inner sep=2.3pt,
          label=above:$Z$] (Za) at (0,3) {};
    \node[circle, fill=black, inner sep=2.3pt,
          label=above:$W$] (Wa) at (0,1.25) {};
    \node[circle, fill=black, inner sep=2.3pt,
          label=left:$X$] (Xa) at (-1.3,0) {};
    \node[circle, fill=black, inner sep=2.3pt,
          label=right:$Y$] (Ya) at (1.3,0) {};

    \draw[->] (Za) -- (Xa);
    \draw[->] (Za) -- (Ya);
    \draw[->] (Wa) -- (Xa);
    \draw[->] (Wa) -- (Ya);
    \draw[->] (Xa) -- (Ya);

    \node[below, align=center] at (0,-0.55)
          {(a) Covariates\\ \(Z\) and \(W\)};
\end{scope}

\begin{scope}[shift={(0,0)}]
    \node[circle, fill=black, inner sep=2pt,
          label=above:$Z$] (Zb) at (0,1.5) {};
    \node[circle, fill=black, inner sep=2pt,
          label=left:$X$] (Xb) at (-1,0) {};
    \node[circle, fill=black, inner sep=2pt,
          label=right:$Y$] (Yb) at (1,0) {};

    \draw[->] (Zb) -- (Xb);
    \draw[->] (Zb) -- (Yb);
    \draw[->] (Xb) -- (Yb);

    \node[below, align=center] at (0,-0.55)
          {(b) Confounder \\ \(Z\)};
\end{scope}

\begin{scope}[shift={(5,0)}]
    \node[circle, fill=black, inner sep=2pt,
          label=above:$Z$] (Zc) at (0,1.5) {};
    \node[circle, fill=black, inner sep=2pt,
          label=left:$X$] (Xc) at (-1,0) {};
    \node[circle, fill=black, inner sep=2pt,
          label=right:$Y$] (Yc) at (1,0) {};

    \draw[->] (Xc) -- (Yc);
    \draw[->] (Zc) -- (Yc);

    \node[below, align=center] at (0,-0.55)
          {(c) Outcome-\\ affecting \(Z\)};
\end{scope}

\end{tikzpicture}

\caption{In all three graphs, \(Z\) contains no descendants of
\(X\). In graphs (b) and (c), \(Z\) additionally satisfies the
back-door criterion.}
\label{fig:z-not-descendant}
\end{figure}

Given a causal diagram $G$, we restrict $Z$ to be a set of variables containing no descendants of $X$ in $G$. Fig.~\ref{fig:z-not-descendant} (a) illustrates such a causal structure. This assumption ensures that $Z$ is not affected by interventions on $X$ and allows the conditional quantity $P({y_j}_{x_j}\mid z)$ to be measured. Here, unless otherwise specified, $j \in \{1,\ldots,n\}$ throughout the rest of the paper.

\begin{theorem}\label{nnk1}
\begin{align*}
\sum_{z}{\max \left \{
\begin{array}{cc}
0, \\
\\
\displaystyle \sum_{j = 1}^{k}P({y_{j}}_{x_{j}}\mid z) - k + 1, \\
\\
\text{For }i \in  \{1, ..., k\}:\\
\quad \displaystyle \sum_{\substack{1 \le j \le k \\ j \ne i}}
\Big[P({y_{j}}_{x_{j}}\mid z)+\\
+P({x_{j}\mid z})-P({x_{j}, y_{j}}\mid z)\Big]+\\
+ P({x_{i}, y_{i}}\mid z) - k + 1
\end{array}
\right \}\nonumber
} \times P(z) \\\le \mathrm{PNS}(k)
\end{align*}
\begin{align*}
\sum_{z}{\min \left \{
\begin{array}{cc}
\displaystyle \sum_{j = 1}^{k}P({x_{j}, y_{j}}\mid z) +\\
+\displaystyle\sum_{j = k+1}^{n}P({x_{j}}\mid z), \\
\\
\text{For }j \in \{1,...,k\}:\\
P({y_{j}}_{x_{j}}\mid z),\\
\\
\text{For }m\in\{1,...,k-1\},\\
t_j\in\{1,...,k\}:\\
\displaystyle \frac{1}{m}\Big[\sum_{j = 0}^{m}P({y_{t_j}}_{x_{t_j}}\mid z) - \\
-P({x_{t_j}, y_{t_j}\mid z})\Big]
\end{array} 
\right \}\nonumber
} \times P(z) \\ \ge \mathrm{PNS}(k)
\end{align*}
\end{theorem}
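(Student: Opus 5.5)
The proof has two steps. First, apply the unconditional nonbinary bounds of \citet{shu2026identificationprobabilitiescausationrecursive} stratum by stratum, i.e.\ within each level $Z=z$. Then average over $P(z)$ using the law of total probability,
\[
\mathrm{PNS}(k)=\sum_z P({y_1}_{x_1},\ldots,{y_k}_{x_k}\mid z)\,P(z).
\]
The graphical condition enters only to justify the conditioning. Because $Z$ contains no descendants of $X$, we have $Z_x=Z$ for every $x$. Hence conditioning on $Z=z$ does not disturb the counterfactuals, and $P({y_j}_{x_j}\mid z)$ is an experimental quantity identifiable within the stratum.

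The stratum-level bound requires that, within each stratum, the conditional model $M\mid Z=z$ carry the same structure the unconditional bounds assume. That is, the conditional joint law of $(X,Y,Y_{x_1},\ldots,Y_{x_n})$ given $z$ must satisfy consistency ($X=x_j \Rightarrow Y=Y_{x_j}$) and must reproduce the conditional experimental margins $P({y_j}_{x_j}\mid z)$ and observational margins $P(x_j,y_j\mid z)$, $P(x_j\mid z)$. Consistency holds pointwise in the units, so it survives conditioning on any event. The experimental marginals are $P(Y_{x_j}=y_j\mid Z=z)$ by definition, and these are what are estimated as $P(y_j\mid do(x_j),z)$ because $Z$ is unaffected by the intervention. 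The derivation in \citet{shu2026identificationprobabilitiescausationrecursive} uses only these ingredients, namely consistency, Fréchet/Boole-type inequalities on the response-type distribution, and the given margins. Its closed-form PNS$(k)$ bounds therefore hold verbatim with every probability replaced by its $z$-conditional counterpart. Specifically, the lower bound is the max of $0$, the Boole term $\sum_j P({y_j}_{x_j}\mid z)-k+1$, and the mixed terms for each $i$; the upper bound is the min of the observational term, the single experimental terms, and the averaged terms $\frac1m[\cdots]$.

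Multiplying the stratum inequalities $L(z)\le P(\mathrm{PNS}(k)\mid z)\le U(z)$ by $P(z)\ge 0$ and summing over $z$ gives the stated bounds.

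For completeness, I would briefly re-derive the less standard terms in conditional form so the argument is self-contained. For the mixed lower term with index $i$, the event $\{X=x_i,Y=y_i\}$ implies ${y_i}_{x_i}$ by consistency. For $j\ne i$, the probability of $\neg{y_j}_{x_j}$ is at most $1-P({y_j}_{x_j}\mid z)-P(x_j\mid z)+P(x_j,y_j\mid z)$, since the units with $X=x_j,\ Y\ne y_j$ all fail ${y_j}_{x_j}$ and are disjoint from $\{X=x_i\}$. A union bound then gives the term. For the averaged upper term, note that $P({y_{t}}_{x_{t}}\mid z)-P(x_t,y_t\mid z)$ is the mass of ${y_t}_{x_t}$ with $X\ne x_t$. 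Each unit in $\mathrm{PNS}(k)$ that has $X$ outside $\{x_{t_0},\ldots,x_{t_m}\}$ is counted $m+1$ times, and one with $X=x_{t_\ell}$ is counted $m$ times; summing therefore gives at least $m\cdot\mathrm{PNS}(k)$.

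The main obstacle I expect is this last bookkeeping, checking the indexing $j=0,\ldots,m$ and that the multiplicity count gives exactly the factor $1/m$. Still, it is just the unconditional argument relativized to $z$.

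A further point is that the resulting bound is at least as tight as the unconditional one. This follows because $\sum_z \max\{\cdot\}P(z)\ge\max\{\sum_z\cdot\,P(z)\}$ and dually for min, by convexity of max and concavity of min.
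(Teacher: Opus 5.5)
Your argument is correct and follows the paper's skeleton. You write $\mathrm{PNS}(k)=\sum_z P({y_1}_{x_1},\ldots,{y_k}_{x_k}\mid z)P(z)$, bound each stratum, and average with weights $P(z)$. The non-descendant assumption serves only to make $P({y_j}_{x_j}\mid z)$ an experimental quantity.

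The difference is in how the stratum-level inequalities are obtained. The paper re-derives every term from scratch by adding and subtracting sums of joint counterfactual probabilities and then applying Fréchet inequalities. For the $\frac{1}{m}$ family it works out $m=1$ and $m=2$ explicitly and states that $3\le m\le k-1$ is similar. Along the way it records sufficient conditions under which each inequality is attained.

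You instead reduce to Shu et al.\ applied to the conditional law, and back this up with direct event-level arguments. For the mixed lower terms you use a union bound inside $\{X=x_i,Y=y_i\}$. For the $\frac{1}{m}$ terms you note that $P({y_t}_{x_t}\mid z)-P(x_t,y_t\mid z)$ is the mass of ${y_t}_{x_t}$ on $\{X\neq x_t\}$, and then count multiplicities. That count handles all $m$ uniformly and is more transparent, and more complete, than the paper's iterated manipulation. It needs $t_0,\ldots,t_m$ to be distinct, which the paper also assumes.

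Two small repairs are needed.
\begin{itemize}
\item In the mixed lower term, the quantity bounded by $1-P({y_j}_{x_j}\mid z)-P(x_j\mid z)+P(x_j,y_j\mid z)$ is $P(\neg{y_j}_{x_j},\,X\neq x_j\mid z)$. For $j\ne i$ this dominates the overlap of $\neg{y_j}_{x_j}$ with $\{X=x_i,Y=y_i\}$, which is what the union bound needs. It is not $P(\neg{y_j}_{x_j}\mid z)$ itself, which equals $1-P({y_j}_{x_j}\mid z)$.
\item To be fully self-contained, include the one-line proof of the observational upper term rather than relying on Shu et al.'s derivation transferring verbatim. By consistency, $P({y_1}_{x_1},\ldots,{y_k}_{x_k},x_j\mid z)\le P(x_j,y_j\mid z)$ for $j\le k$, and trivially $\le P(x_j\mid z)$ for $j>k$. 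Summing over $j$ gives the term.
\end{itemize}
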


Here, we derive bounds on the conditional z-specific $\mathrm{PNS}(k)$ for each possible value $z$ and then take a weighted sum of these stratum-specific bounds to obtain tighter bounds on the $\mathrm{PNS}(k)$. Bounds on $\mathrm{PSub}(k), \mathrm{PRep}(k), \mathrm{PN}(k)$ can be derived similarly.

\begin{theorem} \label{nnk+x_p1}
$p \ne j$ for $1\le j \le k$,
\begin{align*}
\sum_{z}{\max \left \{
\begin{array}{cc}
0, \\
\\
\displaystyle \sum_{j=1}^{k}\Big[P({y_{j}}_{x_{j}}\mid z)+P({x_{j}}\mid z)-\\-P({x_{j}, y_{j}}\mid z)\Big] + P({x_{p}}\mid z) - k
\end{array}
\right \}\nonumber
} \times P(z) \\ \le \mathrm{PSub}(k,p)
\end{align*}
\begin{align*}
\sum_{z}{\min \left \{
\begin{array}{cc}
P({x_{p}}\mid z),\\
\\
\text{For } j \in \{1,...,k\}:\\
P({y_{j}}_{x_{j}}\mid z) - P({x_{j}, y_{j}}\mid z)\\
\end{array} 
\right \}\nonumber
} \times P(z) \\ \ge \mathrm{PSub}(k,p)\\
\end{align*}
\end{theorem}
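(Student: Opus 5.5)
The plan is to mirror the argument sketched after Theorem~\ref{nnk1}: first bound the $z$-specific quantity $\mathrm{PSub}(k,p\mid z)=P({y_1}_{x_1},\ldots,{y_k}_{x_k},x_p\mid z)$ within each stratum, then average over $P(z)$. The averaging step is immediate from the law of total probability, $\mathrm{PSub}(k,p)=\sum_z \mathrm{PSub}(k,p\mid z)P(z)$, provided the stratum bounds hold for every $z$. Because $Z$ contains no descendants of $X$, $Z$ is unaffected by interventions on $X$. Hence $P({y_j}_{x_j}\mid z)$ is identified from experimental data stratified by $z$, and consistency ($X=x_j\Rightarrow Y_{x_j}=Y$) holds within each stratum. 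Every ingredient of the unconditional derivation of \citet{shu2026identificationprobabilitiescausationrecursive} therefore survives conditioning on $Z=z$, and the stratum bounds are the closed-form $\mathrm{PSub}(k,p)$ bounds of that work with every probability conditioned on $z$.

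To make this self-contained, I would verify the two stratum bounds directly.

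\textbf{Upper bound.} The event $\{{y_1}_{x_1},\ldots,{y_k}_{x_k},x_p\}$ is contained in $\{x_p\}$, which gives the term $P(x_p\mid z)$. For each $j\le k$, the event is also contained in $\{{y_j}_{x_j},X\ne x_j\}$, because $p\ne j$ forces $X=x_p\ne x_j$. By consistency,
\[
P({y_j}_{x_j},X\ne x_j\mid z)=P({y_j}_{x_j}\mid z)-P({y_j}_{x_j},x_j\mid z)=P({y_j}_{x_j}\mid z)-P(x_j,y_j\mid z).
\]
Taking the minimum of these containments gives the upper bound.

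\textbf{Lower bound.} Write the event as the intersection of $k+1$ events $A_j=\{{y_j}_{x_j}\}\cap\{X\ne x_j\}$ for $j\le k$, together with $B=\{x_p\}$. This is legitimate since $x_p$ already implies $X\ne x_j$. Bonferroni's inequality then gives
\[
P\Big(\bigcap_j A_j\cap B\mid z\Big)\ge \sum_j P(A_j\mid z)+P(B\mid z)-k .
\]
By the computation above, $P(A_j\mid z)=P({y_j}_{x_j}\mid z)-P(x_j,y_j\mid z)$. This is not quite the stated term, which also contains $+P(x_j\mid z)$.

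The main obstacle is reconciling this $+P(x_j\mid z)$ term. A better decomposition keeps $A_j'=\{{y_j}_{x_j}\}\cup\{X=x_j\}$ instead of $A_j$. Its probability is
\[
P(A_j'\mid z)=P({y_j}_{x_j}\mid z)+P(x_j\mid z)-P(x_j,y_j\mid z),
\]
by inclusion–exclusion and consistency. Moreover, $\bigcap_j A_j'\cap B=\bigcap_j\{{y_j}_{x_j}\}\cap B$, since $B$ excludes every $X=x_j$. Applying Bonferroni to these $k+1$ sets yields exactly the stated lower bound; together with $0$, this gives the stratum lower bound. Pointwise validity in each stratum, multiplied by $P(z)$ and summed, then establishes the theorem.

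Finally, tightness relative to the unconditional bounds comes from the convexity of $\max$ and concavity of $\min$ (Jensen). This is not required for validity, so I would note it only as a remark.
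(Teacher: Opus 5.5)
Your proof is correct. Its skeleton matches the paper's: bound $P({y_1}_{x_1},\ldots,{y_k}_{x_k},x_p\mid z)$ in each stratum, then average with $P(z)$. Your upper bound via containment in $\{{y_j}_{x_j}\}\cap\{X\neq x_j\}$ is also what the paper does when it subtracts $P({y_j}_{x_j},x_j\mid z)$ from $P({y_j}_{x_j}\mid z)$ using $p\neq j$.

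The genuine difference is the stratum lower bound. The paper writes the target as $P({y_1}_{x_1},\ldots,{y_k}_{x_k}\mid z)-\sum_{j\neq p}P({y_1}_{x_1},\ldots,{y_k}_{x_k},x_j\mid z)$ and adds the zero term $\sum_j P(x_j\mid z)-1$. It then combines three separate inequalities:
\begin{itemize}
\item the Fr\'echet bound $\sum_j P({y_j}_{x_j}\mid z)-(k-1)$ on the joint term;
\item $P(\ldots,x_j,y_j\mid z)\le P(x_j,y_j\mid z)$ for $j\le k$, by consistency;
\item $P(\ldots,x_j\mid z)\le P(x_j\mid z)$ for $j>k$, $j\neq p$.
\end{itemize}
You obtain the identical expression from a single Bonferroni inequality over the $k+1$ sets $\{{y_j}_{x_j}\}\cup\{X=x_j\}$ and $\{X=x_p\}$.

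Your route is shorter. It also explains where the $+P(x_j\mid z)$ term comes from: enlarging $\{{y_j}_{x_j}\}$ by $\{X=x_j\}$ is free once you intersect with $\{X=x_p\}$. The same device with $\{x_p,y_q\}$ in place of $\{x_p\}$ gives the lower bound of Theorem~\ref{nnk+x_p+y_q1} at once. The paper's longer chain instead mirrors the unconditional derivation of Shu et al.\ term by term and annotates equality cases along the way. Your argument does not address equality cases, but the theorem does not require them. Your closing Jensen remark matches the paper's concluding claim that the stratified bounds are no looser than the unconditional ones.
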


\begin{theorem} \label{nnk+y_q1}
\begin{align*}
\sum_{z}{\max \left \{
\begin{array}{cc}
0, \\
\\
\displaystyle \sum_{j=1}^{k}\Big[P({y_{j}}_{x_{j}}\mid z)+\\
+P({x_{j}}\mid z)-P({x_{j}, y_{j}}\mid z)\Big]+\\
+\displaystyle \sum_{\substack{k+1 \le j \le n \\ j \ne q}}{P({x_{j}, y_{q}}\mid z)} +\\+ P({x_{q}, y_{q}}\mid z) - k,\\
\\
\text{If } q \in \{1,...,k\}:\\
\displaystyle \sum_{\substack{1 \le j \le k \\ j \ne q}}\Big[P({y_{j}}_{x_{j}}\mid z)+\\
+P({x_{j}}\mid z)-P({x_{j}, y_{j}}\mid z)\Big]+\\
+ P({x_{q}, y_{q}}\mid z) - (k-1) 
\end{array}
\right \}\nonumber
} \times P(z) \\ \le \mathrm{PRep}(k,q)
\end{align*}
\begin{align*}
\sum_{z}{\min \left \{
\begin{array}{cc}
\text{If } q \in \{1,...,k\}:\\
P({y_{q}}_{x_{q}}\mid z), \\
\\
P({x_{q}, y_{q}}\mid z) +\\
+\displaystyle \sum_{\substack{k+1 \le j \le n \\ j \ne q}}P({x_{j}, y_{q}}\mid z),\\
\\
\text{For }j \in \{1,...,k\},\text{ and } j \ne q\\
P({y_{j}}_{x_{j}}\mid z) - P({x_{j}, y_{j}}\mid z),\\
\end{array} 
\right \}\nonumber
} \times P(z) \\ \ge \mathrm{PRep}(k,q)\\
\end{align*}
\end{theorem}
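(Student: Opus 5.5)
The plan follows the stratification argument sketched after Theorem~\ref{nnk1}. Since $Z$ contains no descendants of $X$, intervening on $X$ leaves $Z$ unchanged. Hence for every $z$ the $z$-specific experimental quantities $P({y_j}_{x_j}\mid z)$ are well defined (identified from experimental data stratified on $Z$). Conditioning on $Z=z$ also preserves the counterfactual consistency relation ($X=x_j \Rightarrow Y=Y_{x_j}$) within each stratum. So in the subpopulation $Z=z$ we have exactly the data structure assumed by \citet{shu2026identificationprobabilitiescausationrecursive}: experimental marginals $P({y_j}_{x_j}\mid z)$ together with an observational joint $P(x_j,y_i\mid z)$ that is compatible with them. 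The first step is to record this, and then to apply the closed-form $\mathrm{PRep}(k,q)$ bounds of \citet{shu2026identificationprobabilitiescausationrecursive} verbatim to the conditional quantity $\mathrm{PRep}(k,q\mid z)=P({y_1}_{x_1},\ldots,{y_k}_{x_k},y_q\mid z)$. This yields, for each $z$, exactly the max-expression as a lower bound and the min-expression as an upper bound appearing in the statement.

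The second step is aggregation. By the law of total probability,
\begin{align*}
\mathrm{PRep}(k,q)=\sum_z \mathrm{PRep}(k,q\mid z)\,P(z).
\end{align*}
Multiplying each stratum inequality by $P(z)\ge 0$ and summing therefore gives both inequalities of the theorem.

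If the cited bounds must be re-derived rather than invoked, I would check each term directly within a stratum. Write $A=\{{y_1}_{x_1},\ldots,{y_k}_{x_k}\}$ and split the event $y_q$ by the observed treatment value.
\begin{itemize}
\item For $X=x_j$ with $j\le k$ and $j\ne q$, consistency forces $Y=y_j\ne y_q$, so these cells contribute nothing.
\item For $X=x_q$ or $X=x_j$ with $j>k$, the event $A\cap\{x_j,y_q\}$ is contained in $\{x_j,y_q\}$. This gives the upper term $P(x_q,y_q\mid z)+\sum_{j>k,\,j\ne q}P(x_j,y_q\mid z)$.
\item $A\cap y_q\subseteq {y_j}_{x_j}\setminus\{x_j,y_j\}$ for $j\ne q$, since on $A$ the unit cannot have $X=x_j$ while ending with $Y=y_q$. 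This gives $P({y_j}_{x_j}\mid z)-P(x_j,y_j\mid z)$.
\item If $q\le k$, we also have $A\cap y_q\subseteq {y_q}_{x_q}$.
\end{itemize}
For the lower bounds, we use Bonferroni on complements. Partition on $X$: the event ${y_j}_{x_j}\cup\{X\ne x_j\}$ has probability $P({y_j}_{x_j}\mid z)+P(x_j\mid z)-P(x_j,y_j\mid z)$. The intersection of these $k$ events, together with the event $\{X\in\{x_q\}\cup\{x_{k+1},\ldots,x_n\},\,Y=y_q\}$, is contained in $A\cap y_q$. Summing probabilities minus $k$ gives the first lower term. When $q\le k$, the cell $\{x_q,y_q\}$ together with the $k-1$ relaxed events for $j\ne q$ gives the second term.

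The main obstacle I expect is justifying the stratum-level reduction. One must argue that conditioning on a non-descendant $Z$ preserves the exogenous structure and consistency needed for the cited bounds, in particular that $P({y_j}_{x_j}\mid z)=P({y_j}_{x_j},z)/P(z)$ is estimable. The fully general statement (Fig.~\ref{fig:z-not-descendant}(a)) requires this from experiments recorded with $Z$. Once that is settled, the rest is linear aggregation.
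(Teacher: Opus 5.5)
Your top-level argument matches the paper's. You bound the $z$-specific quantity $P({y_1}_{x_1},\dots,{y_k}_{x_k},y_q\mid z)$, multiply by $P(z)$, and sum. The paper does not cite \citet{shu2026identificationprobabilitiescausationrecursive} inside each stratum; it re-derives every $z$-specific bound.

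For the lower bounds, the paper writes $P(A,y_q\mid z)$ as $P(A\mid z)$ minus the cells incompatible with $y_q$. These are $\{x_j,y_j\}$ for $j\le k$, $j\neq q$, and $\{x_j,y_l\}$ for $j>k$, $l\neq q$; for the second term it drops the $q$-th counterfactual. It then applies Fr\'echet to $P(A\mid z)$ and bounds each cell by its observational probability. Your event-containment/Bonferroni derivation is an equivalent and somewhat more transparent route. Your four upper-bound containments are correct.

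There is a slip in your lower-bound step. The relaxed event must be $\{Y_{x_j}=y_j\}\cup\{X=x_j\}$, not $\{Y_{x_j}=y_j\}\cup\{X\neq x_j\}$. As you wrote it, two things go wrong:
\begin{itemize}
\item The event has probability $1-P(x_j\mid z)+P(x_j,y_j\mid z)$, not the expression you state.
\item The containment fails. On your cell event $\{X\in\{x_q,x_{k+1},\dots,x_n\},Y=y_q\}$ we have $X\neq x_j$ for every $j\le k$ with $j\neq q$. So your events are automatically satisfied there and impose no constraint on $Y_{x_j}$.
\end{itemize}

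With $\{X=x_j\}$ in place of $\{X\neq x_j\}$, the argument goes through:
\begin{itemize}
\item Inclusion--exclusion plus consistency, $\{X=x_j\}\cap\{Y_{x_j}=y_j\}=\{X=x_j,Y=y_j\}$, give probability $P({y_j}_{x_j}\mid z)+P(x_j\mid z)-P(x_j,y_j\mid z)$.
\item On the cell event, the disjunct $X=x_j$ is impossible for $j\neq q$.
\item On the cell event, $X=x_q,Y=y_q$ forces $Y_{x_q}=y_q$.
\end{itemize}
So the intersection lies in $A\cap\{Y=y_q\}$, and Bonferroni yields both lower terms exactly. The same fix applies to the $k-1$ relaxed events used with the cell $\{x_q,y_q\}$.

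The ``obstacle'' you flag is not a threat to validity. Consistency is a unit-level relation, so each stratum inequality holds under $P(\cdot\mid z)$ for any $Z$. The non-descendant assumption is needed only so that $P({y_j}_{x_j}\mid z)$ is the stratum-specific experimental quantity, which the paper simply notes.
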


\begin{theorem} \label{nnk+x_p+y_q1}
$p \ne j$ for $1\le j \le k$,
\begin{align*}
\sum_{z}{\max \left \{
\begin{array}{cc}
0, \\
\\
\displaystyle \sum_{j=1}^{k}\Big[P({y_{j}}_{x_{j}}\mid z)+\\
+P({x_{j}}\mid z)-P({x_{j}, y_{j}}\mid z)\Big] +\\
+P({x_{p}, y_{q}}\mid z) - k\\
\end{array}
\right \}\nonumber
} \times P(z) \\ \le \mathrm{PN}(k,p,q)
\end{align*}
\begin{align*}
\sum_{z}{\min \left \{
\begin{array}{cc}
P({x_{p}},{y_{q}}\mid z), \\
\\
\text{For }j \in \{1,...,k\}:\\
P({y_{j}}_{x_{j}}\mid z) - P({x_{j}, y_{j}}\mid z) \\
\end{array} 
\right \}\nonumber
} \times P(z)\\ \ge \mathrm{PN}(k,p,q)
\end{align*}
\end{theorem}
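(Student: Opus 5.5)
We prove Theorem~\ref{nnk+x_p+y_q1} by first bounding the $z$-specific quantity $\mathrm{PN}(k,p,q\mid z)=P({y_1}_{x_1},\ldots,{y_k}_{x_k},x_p,y_q\mid z)$ within each stratum, and then averaging over $z$. Because $Z$ contains no descendants of $X$, the counterfactuals $Y_{x_j}$ are defined in the stratum $Z=z$ and their conditional marginals $P({y_j}_{x_j}\mid z)$ are identified; in particular conditioning on $z$ is legitimate for the intervened variables. So in the subpopulation $Z=z$ we have exactly the same data structure as in the unconditional problem: the experimental margins $P({y_j}_{x_j}\mid z)$ and the observational joint $P(x,y\mid z)$. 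Consistency ($X=x_j\Rightarrow Y_{x_j}=Y$) also holds conditionally on $z$. Hence the unconditional $\mathrm{PN}(k,p,q)$ bounds of \citet{shu2026identificationprobabilitiescausationrecursive}, applied to the conditional distribution, give
\[
L(z)\le \mathrm{PN}(k,p,q\mid z)\le U(z),
\]
where $L(z)$ and $U(z)$ are the max and min expressions inside the sums of the statement. Then $\mathrm{PN}(k,p,q)=\sum_z \mathrm{PN}(k,p,q\mid z)P(z)$ by the law of total probability, and multiplying by $P(z)\ge 0$ and summing yields the claim.

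For self-containment I would re-derive the stratum bounds directly rather than merely cite them. \emph{Upper bound:} the event is contained in $\{X=x_p,Y=y_q\}$, giving $P(x_p,y_q\mid z)$. For each $j\le k$, since $p\neq j$ the event forces $X\neq x_j$, so it is contained in $\{Y_{x_j}=y_j, X\neq x_j\}$. By consistency this has conditional probability $P({y_j}_{x_j}\mid z)-P(x_j,y_j\mid z)$.

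\emph{Lower bound:} write the event as the intersection of $A_j=\{Y_{x_j}=y_j\}$ for $j\le k$ and $B=\{X=x_p,Y=y_q\}$. Bonferroni gives $P(\cap A_j\cap B\mid z)\ge \sum_j P(A_j\mid z)+P(B\mid z)-k$. This is then sharpened using the fact that $B\subseteq\{X\ne x_j\}$, so one may replace $A_j$ by $A_j\cup\{X=x_j\}$ without changing the intersection with $B$. Its probability is $P({y_j}_{x_j}\mid z)+P(x_j\mid z)-P(x_j,y_j\mid z)$, again by consistency. Combining with the trivial bound $0$ gives $L(z)$.

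The only step needing care is the conditioning argument: verifying that the conditional margins $P({y_j}_{x_j}\mid z)$ are meaningful and estimable. This follows because $Z$ is a non-descendant of $X$, so $Z_{x}=Z$ and $P({y_j}_{x_j}\mid z)$ equals the experimental distribution restricted to $Z=z$. I would also remark that the weighted bounds are at least as tight as the unconditional ones, since $\sum_z \max\{\cdot\}P(z)\ge\max\{\sum_z \cdot P(z)\}$ and dually for $\min$. This is not needed for validity.
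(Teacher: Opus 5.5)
Your proposal is correct and has the same skeleton as the paper's proof. You bound the stratum-specific quantity $P({y_1}_{x_1},\ldots,{y_k}_{x_k},x_p,y_q\mid z)$, then weight by $P(z)$ and sum by the law of total probability. Your upper bounds are the paper's in substance. One is Fr\'echet for $P(x_p,y_q\mid z)$. For each $j\le k$, the paper's add-and-subtract computation amounts to your containment in $\{Y_{x_j}=y_j,\,X\neq x_j\}$.

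The difference is in the nontrivial lower bound. Write $E=\bigcap_j\{Y_{x_j}=y_j\}$ and $B=\{X=x_p,Y=y_q\}$.
\begin{itemize}
\item The paper writes the target as $P(E\mid z)-P(E\cap B^c\mid z)$. It lower-bounds $P(E\mid z)$ by the $k$-fold Fr\'echet bound $\sum_j P({y_j}_{x_j}\mid z)-(k-1)$. It then expands $P(E\cap B^c\mid z)$ over the observational cells, using consistency so that only $y_j$ survives when $X=x_j$ with $j\le k$. Each piece is bounded term by term by $P(x_j,y_j\mid z)$, $P(x_j\mid z)$ or $P(x_p,y_l\mid z)$.
\item You apply Bonferroni directly to the $k+1$ events $A_j\cup\{X=x_j\}$ and $B$. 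The inclusion $B\subseteq\{X\neq x_j\}$ lets you enlarge each $A_j$ at no cost.
\end{itemize}

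Both computations yield the identical expression. Your route is shorter and avoids the index bookkeeping; the paper's expansion even carries a slip, writing $l\neq p$ where $l\neq q$ is meant. It also shows exactly where the hypothesis $p\notin\{1,\ldots,k\}$ enters. The paper's route, in exchange, follows the same template as its PNS, PSub and PRep proofs and exposes, term by term, the conditions under which each inequality is tight.

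Your closing remark that the $z$-weighted bounds are no looser than the unconditional ones matches the paper's final observation.
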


When $Z$ not only contains no descendants of $X$ in $G$ but also satisfies the back-door criterion \cite{pearl1993aspects, pearl1995causal} (Figures~\ref{fig:z-not-descendant}(b) and \ref{fig:z-not-descendant}(c) illustrate two examples of this setting), the back-door adjustment formula gives 
\begin{align*}
P({y_j}_{x_j}\mid z)=P(y_j\mid x_j,z).
\end{align*}
Substituting this equality into Theorems~\ref{nnk1}--\ref{nnk+x_p+y_q1} yields four new theorems that require only observational data, without the need for experimental data. For instance, the upper bound of PN(k,p,q) becomes
\begin{align*}
\sum_{z}{\min \left \{
\begin{array}{cc}
P({x_{p}},{y_{q}}\mid z), \\
\\
\text{For }j \in \{1,...,k\}:\\
P({y_{j}}\mid{x_{j}}, z) - P({x_{j}, y_{j}}\mid z) \\
\end{array} 
\right \}\nonumber
} \times P(z)\\ \ge \mathrm{PN}(k,p,q)
\end{align*}

\subsection{Mediation}
When $Z$ is a descendant of $X$, the previous results no longer apply. However, under certain mediation structures, information about $Z$ can still tighten the upper bound.
\begin{figure}[t]
\centering
\begin{tikzpicture}[>=Stealth, thick, scale=0.55]

\begin{scope}[shift={(-3.2,0)}]
    \node[
        circle,
        fill=black,
        inner sep=2pt,
        label=above:$U$
    ] (Ua) at (-1.2,1.8) {};

    \node[
        circle,
        fill=black,
        inner sep=2pt,
        label=above:$Z$
    ] (Za) at (0.8,1.8) {};

    \node[
        circle,
        fill=black,
        inner sep=2pt,
        label=left:$X$
    ] (Xa) at (-0.5,0) {};

    \node[
        circle,
        fill=black,
        inner sep=2pt,
        label=right:$Y$
    ] (Ya) at (1.8,0) {};

    \draw[dashed,->] (Ua) -- (Xa);
    \draw[dashed,->] (Ua) -- (Za);
    \draw[->] (Xa) -- (Za);
    \draw[->] (Za) -- (Ya);
    \draw[->] (Xa) -- (Ya);

    \node[below] at (0.3,-0.6)
        {(a) With direct effect};
\end{scope}

\begin{scope}[shift={(3.2,0)}]
    \node[
        circle,
        fill=black,
        inner sep=2pt,
        label=above:$Z$
    ] (Zb) at (0,1.5) {};

    \node[
        circle,
        fill=black,
        inner sep=2pt,
        label=left:$X$
    ] (Xb) at (-1,0) {};

    \node[
        circle,
        fill=black,
        inner sep=2pt,
        label=right:$Y$
    ] (Yb) at (1,0) {};

    \draw[->] (Xb) -- (Zb);
    \draw[->] (Zb) -- (Yb);

    \node[below] at (0,-0.6)
        {(b) With no direct effect};
\end{scope}

\end{tikzpicture}

\caption{In both graphs, \(Z\) is a mediator: (a) with a direct
effect and (b) with no direct effect.}
\label{med}
\end{figure}

\subsubsection{Partial Mediator}
In Fig.~\ref{med}(a), when $Z$ is a mediator and $X$ also has a direct effect on $Y$, let $Z$ be a set of variables such that $\forall x_j, x_i \in X:\ x_j \neq x_i,
\quad
(Y_{x_j} \perp\!\!\!\perp X \cup Z_{x_i} \mid Z_{x_j})$,
the lower bound remains unchanged from that of \citet{shu2026identificationprobabilitiescausationrecursive}, while the upper bound includes an additional term. The proof of the additional term can be found in the appendix.




\begin{theorem} \label{nnk3}
Let the bounds on $\mathrm{PNS}(k)$ given in \citet{shu2026identificationprobabilitiescausationrecursive} be denoted by
\begin{align*}
    {PNS}_{{LB}}
    \leq \mathrm{PNS}(k)
    \leq
    {PNS}_{{UB}}.
\end{align*}
By treating $Z$ as a partial mediator, we can derive a new upper bound on $\mathrm{PNS}(k)$ as follows:

\begin{align*}
\min \left \{
\begin{array}{cc}
PNS_{UB},\\
\\
\text{For }j \in \{1,\dots,k\}:\\
\displaystyle \sum_{\substack{{z_1},...,{z_k} \\ \in \{1,\dots,n\}^{k}}}
\Big[{\min_j\{P({y_j}\mid {x_j},{z_j})\}} \times \\
\quad\times \displaystyle{\min_j \{P({z_j}_{x_j})\} }\Big]
\end{array} 
\right \}\nonumber
\ge \mathrm{PNS}(k)
\end{align*}
\end{theorem}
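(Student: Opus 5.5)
Since $\mathrm{PNS}(k)\le PNS_{UB}$ already holds by \citet{shu2026identificationprobabilitiescausationrecursive}, it suffices to show that $\mathrm{PNS}(k)$ is bounded by the new mediator term; the minimum of two valid upper bounds is then a valid upper bound. The plan mirrors the binary argument of \citet{mueller2021causeseffectslearningindividual}. We first decompose the joint counterfactual event over the potential values of the mediator under each treatment level, and then use the conditional independence assumption to factor each summand.

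\textbf{Step 1 (decomposition).} By the law of total probability over the joint potential mediator values $(Z_{x_1},\dots,Z_{x_k})$,
\begin{align*}
\mathrm{PNS}(k)=\sum_{z_1,\dots,z_k} P\big({y_1}_{x_1},\dots,{y_k}_{x_k}, {z_1}_{x_1},\dots,{z_k}_{x_k}\big).
\end{align*}
Writing each summand as $P({y_1}_{x_1},\dots,{y_k}_{x_k}\mid {z_1}_{x_1},\dots,{z_k}_{x_k})\,P({z_1}_{x_1},\dots,{z_k}_{x_k})$, we bound it by the product of two minima. A joint probability never exceeds any of its marginals, so $P({z_1}_{x_1},\dots,{z_k}_{x_k})\le \min_j P({z_j}_{x_j})$ and
\begin{align*}
P({y_1}_{x_1},\dots,{y_k}_{x_k}\mid \cdot)\le \min_j P({y_j}_{x_j}\mid {z_1}_{x_1},\dots,{z_k}_{x_k}).
\end{align*}
Note that we must keep the factorization (conditional times marginal) rather than apply the min to the joint directly. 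Only this form yields the product $\min_j\{\cdot\}\times\min_j\{\cdot\}$ appearing in the statement.

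\textbf{Step 2 (identification of the conditional factor).} We need $P({y_j}_{x_j}\mid {z_1}_{x_1},\dots,{z_k}_{x_k}) = P(y_j\mid x_j,z_j)$. The hypothesis $Y_{x_j}\perp\!\!\!\perp X\cup Z_{x_i}\mid Z_{x_j}$ for $i\ne j$ lets us drop the conditioning on the other $Z_{x_i}$, giving $P({y_j}_{x_j}\mid {z_j}_{x_j})$. The same independence from $X$ gives $P({y_j}_{x_j}\mid {z_j}_{x_j}) = P({y_j}_{x_j}\mid {z_j}_{x_j}, x_j)$. Consistency ($X=x_j$ implies $Y_{x_j}=Y$ and $Z_{x_j}=Z$) then turns this into $P(y_j\mid x_j,z_j)$.

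\textbf{Main obstacle.} The delicate point is that the stated independence is pairwise in $i$, whereas Step 2 conditions on all $Z_{x_i}$ ($i\ne j$) simultaneously. I would read ``$X\cup Z_{x_i}$'' as the set-valued, joint version covering all other treatment levels, as in the binary Mueller et al.\ assumption. Failing that, I would add this joint form as a mild strengthening and note that it coincides with the stated assumption when $k=2$.

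\textbf{Step 3 (assembly).} Substituting the two bounds into the sum over $(z_1,\dots,z_k)$ gives exactly the new term. Intersecting with $PNS_{UB}$ gives the claimed minimum. The sum ranges over $\{1,\dots,n\}^k$ because $|Z|=n$ is implicitly assumed; the argument works verbatim for any finite support of $Z$.
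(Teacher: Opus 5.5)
Your proposal is correct and follows the paper's proof essentially step for step. You decompose over $(Z_{x_1},\dots,Z_{x_k})$ and bound the conditional and marginal factors by their respective minima. You then drop the other mediator potentials via $Y_{x_j}\perp\!\!\!\perp \{Z_{x_i}: i\neq j\}\mid Z_{x_j}$, and use $Y_{x_j}\perp\!\!\!\perp X\mid Z_{x_j}$ together with consistency to reach $P(y_j\mid x_j,z_j)$. Your concern about the pairwise wording is well placed: the paper's proof invokes exactly the joint, set-valued independence you propose, and justifies it from the graph rather than from the pairwise hypothesis as stated.
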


\begin{theorem} \label{nnk+x_p3}
Given $p \ne j$ for $1\le j \le k$. Let the bounds on $\mathrm{PSub}(k,p)$ given in \citet{shu2026identificationprobabilitiescausationrecursive} be denoted by
\begin{align*}
    {PSub}_{{LB}}
    \leq \mathrm{PSub}(k,p)
    \leq
    {PSub}_{{UB}}.
\end{align*}
By treating $Z$ as a partial mediator, we can derive a new upper bound on $\mathrm{PSub}(k,p)$ as follows:
\begin{align*}
\min \left \{
\begin{array}{cc}
PSub_{UB},\\
\\
\text{For }j \in \{1,...,k\}:\\
\displaystyle \sum_{\substack{{z_1},\dots,{z_k} \\ \in \{1,\dots,n\}^{k}}}
\Big[{\min_j\{P({y_j}\mid {x_j},{z_j})\}} \times \\
\times {\displaystyle \min_j \{P({z_j}_{x_j}), P(x_p)\} }\Big]
\end{array} 
\right \}\nonumber
\\ \ge \mathrm{PSub}(k,p)
\end{align*}
\end{theorem}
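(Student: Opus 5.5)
Since $\mathrm{PSub}_{UB}$ is already a valid upper bound by \citet{shu2026identificationprobabilitiescausationrecursive}, it suffices to show that the new summation term also upper bounds $\mathrm{PSub}(k,p)=P({y_1}_{x_1},\dots,{y_k}_{x_k},x_p)$. The minimum of two valid upper bounds is then a valid upper bound. I will follow the argument used for Theorem~\ref{nnk3}, conditioning on the counterfactual mediator values, and add the event $X=x_p$ to the joint event at every step.

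\textbf{Step 1 (decomposition).} By the law of total probability over the joint counterfactual mediator values,
\begin{align*}
\mathrm{PSub}(k,p)=\sum_{z_1,\dots,z_k}P\big({y_1}_{x_1},\dots,{y_k}_{x_k},{z_1}_{x_1},\dots,{z_k}_{x_k},x_p\big).
\end{align*}
Each summand is the probability of an intersection of events. It is therefore bounded above by the probability of the smaller intersection that keeps all the $y$-events and only one conditioning-type event.

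\textbf{Step 2 (factorization).} Write each summand as
\begin{align*}
P\big({y_1}_{x_1},\dots,{y_k}_{x_k}\mid {z_1}_{x_1},\dots,{z_k}_{x_k},x_p\big)\times P\big({z_1}_{x_1},\dots,{z_k}_{x_k},x_p\big).
\end{align*}
The first factor is at most $P({y_j}_{x_j}\mid {z_1}_{x_1},\dots,{z_k}_{x_k},x_p)$ for every $j$.

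By the partial-mediator assumption $Y_{x_j}\perp\!\!\!\perp X\cup Z_{x_i}\mid Z_{x_j}$, this conditional probability reduces to $P({y_j}_{x_j}\mid {z_j}_{x_j})$. Under the graph of Fig.~\ref{med}(a), this in turn equals $P(y_j\mid x_j,z_j)$, by the same identification used in the proof of Theorem~\ref{nnk3}. Taking the best index gives the factor $\min_j P(y_j\mid x_j,z_j)$.

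The second factor is at most each of its marginals, namely $P({z_j}_{x_j})$ for each $j$ and $P(x_p)$. Hence it is at most $\min_j\{P({z_j}_{x_j}),P(x_p)\}$. Multiplying the two factor bounds and summing over $(z_1,\dots,z_k)$ yields the stated term.

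\textbf{Main obstacle.} The delicate step is the conditional-independence reduction. It requires conditioning simultaneously on all the counterfactual mediators and on the factual event $X=x_p$, and the stated assumption must be read as a joint independence of $Y_{x_j}$ from $(X, Z_{x_i})_{i\neq j}$ given $Z_{x_j}$.

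I will also verify that the reduction is legitimate when some conditioning events have probability zero. Such summands vanish anyway, so they can be dropped from the sum.

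Finally, the identification $P({y_j}_{x_j}\mid {z_j}_{x_j})=P(y_j\mid x_j,z_j)$ needs the same assumption: taking $X$ into account, it gives $Y_{x_j}\perp\!\!\!\perp X\mid Z_{x_j}$. Consistency then turns this into the observational conditional. I will take this step directly from the argument for Theorem~\ref{nnk3}.
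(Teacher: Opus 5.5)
Your proposal is correct and follows essentially the same route as the paper. Both decompose over the counterfactual mediator values $(Z_{x_1},\dots,Z_{x_k})$, bound the conditional joint probability of the $Y$-events by its smallest single-$j$ conditional and the joint probability of the mediators together with $x_p$ by its smallest marginal, and then use the cross-world independences and consistency to arrive at $P(y_j\mid x_j,z_j)$. The one difference is how $x_p$ is handled: you keep $x_p$ in the conditioning set and remove it together with the other mediators via $Y_{x_j}\perp\!\!\!\perp (X,\{Z_{x_i}\}_{i\neq j})\mid Z_{x_j}$, whereas the paper first factors out $X$ using the vector-level independence $(Y_{x_1},\ldots,Y_{x_k})\perp\!\!\!\perp X\mid (Z_{x_1},\ldots,Z_{x_k})$, so your version relies only on the per-$j$ form of the stated assumption.
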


\begin{theorem} \label{nnk+y_q3}
Let the bounds on $\mathrm{PRep}(k,q)$ given in \citet{shu2026identificationprobabilitiescausationrecursive} be denoted by
\begin{align*}
    {PRep}_{{LB}}
    \leq \mathrm{PRep}(k,q)
    \leq
    {PRep}_{{UB}}.
\end{align*}
By treating $Z$ as a partial mediator, we can derive a new upper bound on $\mathrm{PRep}(k,q)$ as follows:
\begin{align*}
\min \left \{
\begin{array}{cc}
{PRep}_{UB},\\
\\
\text{For }j \in \{1,...,k\}:\\
\displaystyle \sum_{\substack{{z_1},...,{z_k} \\ \in \{1,...,n\}^{k}}}
\Big[{\min_j\{P({y_j}\mid {x_j},{z_j})\}}\times \\
\times \displaystyle{\min_j \{P({z_j}_{x_j})\} }\Big]
\end{array} 
\right \}\nonumber
\\ \ge \mathrm{PRep}(k,q)
\end{align*}
\end{theorem}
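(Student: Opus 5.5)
Since $\mathrm{PRep}(k,q)=P({y_1}_{x_1},\dots,{y_k}_{x_k},y_q)$ is the probability of a sub-event of the event defining $\mathrm{PNS}(k)=P({y_1}_{x_1},\dots,{y_k}_{x_k})$, we have $\mathrm{PRep}(k,q)\le \mathrm{PNS}(k)$. The plan is therefore to take the extra mediation term already established for Theorem~\ref{nnk3}, note that it bounds $\mathrm{PNS}(k)$ and hence also $\mathrm{PRep}(k,q)$, and combine it by a minimum with the existing bound $\mathrm{PRep}(k,q)\le PRep_{UB}$ from \citet{shu2026identificationprobabilitiescausationrecursive}. The minimum of two valid upper bounds is again an upper bound, which gives the displayed statement.

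To keep the argument self-contained, I would also re-derive the mediation term in the form used for Theorem~\ref{nnk3}.
\begin{enumerate}
\item Decompose over the joint values of the counterfactual mediators:
\begin{align*}
\mathrm{PNS}(k)=\sum_{z_1,\dots,z_k} P\big({y_1}_{x_1},\dots,{y_k}_{x_k},{z_1}_{x_1},\dots,{z_k}_{x_k}\big).
\end{align*}
\item For each term, use the Fréchet bound: a joint probability is at most the smallest marginal of its components.
\item Write each marginal as $P({y_j}_{x_j},{z_j}_{x_j})=P({y_j}_{x_j}\mid {z_j}_{x_j})\,P({z_j}_{x_j})$.
\item Use the assumed independence $Y_{x_j}\perp\!\!\!\perp X\cup Z_{x_i}\mid Z_{x_j}$, together with consistency, to identify $P({y_j}_{x_j}\mid {z_j}_{x_j})=P(y_j\mid x_j,z_j)$.
\item Bound each summand by $\min_j\{P(y_j\mid x_j,z_j)\}\cdot\min_j\{P({z_j}_{x_j})\}$.
\end{enumerate}

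The main obstacle is step 5. The Fréchet step gives $\min_j\{P(y_j\mid x_j,z_j)P({z_j}_{x_j})\}$, but the theorem uses the product of two separate minima, which is not in general an upper bound for the minimum of the products. To justify it, I would first condition on the mediator values:
\begin{align*}
P\big({z_1}_{x_1},\dots,{z_k}_{x_k}\big)\; P\big({y_1}_{x_1},\dots,{y_k}_{x_k}\mid {z_1}_{x_1},\dots,{z_k}_{x_k}\big).
\end{align*}
I would then bound the two factors separately.
\begin{itemize}
\item The first factor is at most $\min_j P({z_j}_{x_j})$ by Fréchet.
\item For the second factor, the independence assumption says that given $Z_{x_j}$, $Y_{x_j}$ is independent of the other $Z_{x_i}$. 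So each conditional marginal $P({y_j}_{x_j}\mid {z_1}_{x_1},\dots,{z_k}_{x_k})$ reduces to $P(y_j\mid x_j,z_j)$. Applying Fréchet to the conditional joint then bounds it by the smallest of these marginals.
\end{itemize}
Care is needed to confirm that the stated assumption really delivers this reduction when conditioning on all $k$ mediators jointly. This is exactly the role of the $Z_{x_i}$ part of the independence condition, and it is the point I would write out in detail.

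The remaining checks are routine: that the sum runs over every mediator configuration, and that the final $\min\{PRep_{UB},\cdot\}$ is valid.
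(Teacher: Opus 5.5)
Your proposal is correct and matches the paper's proof, which is exactly the one-line reduction $\mathrm{PRep}(k,q)\le \mathrm{PNS}(k)$ followed by the mediation term of Theorem~\ref{nnk3}, with the minimum against $PRep_{UB}$ left implicit. Your optional re-derivation also follows the paper's proof of Theorem~\ref{nnk3}: condition on the joint event $\{{z_1}_{x_1},\dots,{z_k}_{x_k}\}$ and bound the two factors separately. The point you flag is real, since the paper simply asserts the joint cross-world independence $Y_{x_j}\perp\!\!\!\perp \{Z_{x_i}:i\neq j\}\mid Z_{x_j}$ from the graph rather than deriving it from the pairwise condition stated in the text.
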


\begin{theorem} \label{nnk+x_p+y_q3}
Given $p \ne j$ for $1\le j \le k$. Let the bounds on $\mathrm{PN}(k,p,q)$ given in \citet{shu2026identificationprobabilitiescausationrecursive} be denoted by
\begin{align*}
    {PN}_{{LB}}
    \leq \mathrm{PN}(k,p,q)
    \leq
    {PN}_{{UB}}.
\end{align*}
By treating $Z$ as a partial mediator, we can derive a new upper bound on $\mathrm{PN}(k,p,q)$ as follows:
\begin{align*}
\min \left \{
\begin{array}{cc}
PN_{UB},\\
\\
\text{For }j \in \{1,...,k\}:\\
\displaystyle \sum_{\substack{{z_1},...,{z_k},{z_p} \\ \in \{1,...,n\}^{k+1}}}
\Bigg[\min_j\left\{
\begin{array}{c}
P(y_j\mid x_j,z_j),
\\
P(y_q\mid x_p,z_p)
\end{array}
\right\}\times \\
\times {\displaystyle \min_j \{P({z_j}_{x_j}),P({z_p}_{x_p}), P(x_p)\} }\Bigg]
\end{array} 
\right \}\nonumber
\\ \ge \mathrm{PN}(k,p,q)
\end{align*}
\end{theorem}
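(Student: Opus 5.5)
The statement to prove is Theorem~\ref{nnk+x_p+y_q3}. Since $PN_{UB}$ is already a valid upper bound by \citet{shu2026identificationprobabilitiescausationrecursive}, it suffices to show that the new summation term is also an upper bound on $\mathrm{PN}(k,p,q)$; the minimum of two valid upper bounds is then a valid upper bound. My plan is to decompose the joint event over the counterfactual values of the mediator, as in the binary argument of \citet{mueller2021causeseffectslearningindividual}, and then bound each cell by a product of two minima.

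\textbf{Step 1 (mediator decomposition).} I begin with the law of total probability over the potential mediator values $Z_{x_1},\dots,Z_{x_k},Z_{x_p}$:
\begin{align*}
\mathrm{PN}(k,p,q)=\sum_{z_1,\dots,z_k,z_p} P\big({y_1}_{x_1},\dots,{y_k}_{x_k},x_p,y_q,{z_1}_{x_1},\dots,{z_k}_{x_k},{z_p}_{x_p}\big).
\end{align*}
By consistency, on the event $X=x_p$ we have $Y=Y_{x_p}$ and $Z=Z_{x_p}$. Hence $y_q$ can be replaced by ${y_q}_{x_p}$, and the summand equals
\begin{align*}
P\big({y_1}_{x_1},\dots,{y_k}_{x_k},{y_q}_{x_p}, x_p, {z_1}_{x_1},\dots,{z_k}_{x_k},{z_p}_{x_p}\big).
\end{align*}

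\textbf{Step 2 (factorization).} I write each summand as
\begin{align*}
P(\text{all } Y\text{-events}\mid x_p,\text{all } Z\text{-events}) \times P(x_p,{z_1}_{x_1},\dots,{z_p}_{x_p}).
\end{align*}

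For the first factor, the joint event is contained in each single event ${y_j}_{x_j}$ and in ${y_q}_{x_p}$, so the factor is at most the minimum of the corresponding conditional probabilities. I then simplify each of these using the assumption $(Y_{x_j}\perp\!\!\!\perp X\cup Z_{x_i}\mid Z_{x_j})$:
\begin{align*}
P({y_j}_{x_j}\mid x_p,\ldots)=P({y_j}_{x_j}\mid {z_j}_{x_j}).
\end{align*}
I identify this with $P(y_j\mid x_j,z_j)$ by the standard mediation argument: $Y_{x_j}$ given $Z_{x_j}=z_j$ behaves like $Y_{x_j z_j}$, whose law matches the observational conditional under the same independence. This is the step used in the binary case.

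For ${y_q}_{x_p}$ I condition on $Z_{x_p}=z_p$ and $X=x_p$. The independence removes the conditioning on the other $Z$'s, and consistency then gives $P(y_q\mid x_p,z_p)$.

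For the second factor, the joint event is contained in each single event, so it is bounded by
\begin{align*}
\min_j\{P({z_j}_{x_j}),\,P({z_p}_{x_p}),\,P(x_p)\}.
\end{align*}

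\textbf{Step 3 (conclusion).} Multiplying the two minima and summing over $(z_1,\dots,z_k,z_p)$ gives the displayed term. Combining it with $PN_{UB}$ by taking the minimum completes the proof.

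\textbf{Main obstacle.} The delicate step is Step 2: justifying that conditioning on $X=x_p$ together with all the other counterfactual mediator values leaves $P({y_j}_{x_j}\mid {z_j}_{x_j})$ unchanged, and that this quantity equals the observational $P(y_j\mid x_j,z_j)$. The ${y_q}_{x_p}$ term is especially tricky because $X=x_p$ is the very treatment being intervened on. I would first try to apply the independence assumption literally, with $X\cup Z_{x_i}$ as the conditioning set. If that does not go through, I would fall back on the SCM structure of Fig.~\ref{med}(a), in which $U$ does not affect $Y$, to argue the required independence directly.
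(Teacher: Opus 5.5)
Your proposal is correct and follows essentially the same route as the paper's proof: decompose over $(Z_{x_1},\dots,Z_{x_k},Z_{x_p})$, rewrite $y_q$ as ${y_q}_{x_p}$ by consistency, bound the $Y$-factor and the $(Z,x_p)$-factor separately by minima, and convert to $P(y_j\mid x_j,z_j)$ using the cross-world independences, $Y_{x_j}\perp\!\!\!\perp X\mid Z_{x_j}$, and consistency. The differences are minor. The paper drops $X=x_p$ before taking the minimum, invoking the joint independence $(Y_{x_1},\dots,Y_{x_k},Y_{x_p})\perp\!\!\!\perp X\mid (Z_{x_1},\dots,Z_{x_k},Z_{x_p})$, whereas you take the minimum first, so you need only the per-coordinate independences. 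For ${y_j}_{x_j}$, your detour through $Y_{x_j z_j}$ is unnecessary: the direct argument you gave for ${y_q}_{x_p}$ (add $X=x_j$ to the conditioning set via $Y_{x_j}\perp\!\!\!\perp X\mid Z_{x_j}$, then apply consistency) works and is what the paper does.
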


\subsubsection{Pure Mediator}

In Figure~\ref{med}(b), $X$ has no direct effect on $Y$ and affects $Y$ only through the mediator $Z$. The lower bound remains unchanged from that of \citet{shu2026identificationprobabilitiescausationrecursive}, while the upper bound includes an additional term using observational quantities involving $Z$.


\begin{theorem} \label{nnk4}
Let the bounds on $\mathrm{PNS}(k)$ given in \citet{shu2026identificationprobabilitiescausationrecursive} be denoted by
\begin{align*}
    {PNS}_{{LB}}
    \leq \mathrm{PNS}(k)
    \leq
    {PNS}_{{UB}}.
\end{align*}
By treating $Z$ as a pure mediator, we can derive a new upper bound on $\mathrm{PNS}(k)$ as follows:
\begin{align*}
\min \left \{
\begin{array}{cc}
{PNS}_{{UB}},\\
\\
\text{For }j \in \{1,...,k\}:\\
\displaystyle \sum_{\substack{{z_1}\neq...\neq{z_k} \\ \in \{1,...,n\}^{k}}}
\Big[{\min_j\{P({y_j}\mid {z_j})\}} \times \\ 
\qquad\quad\times \displaystyle{\min_j \{P({z_j}\mid {x_j})\} }\Big]
\end{array} 
\right \}\nonumber
\ge \mathrm{PNS}(k)
\end{align*}
\end{theorem}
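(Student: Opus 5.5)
The bound $\mathrm{PNS}(k)\le \mathrm{PNS}_{UB}$ is already given by \citet{shu2026identificationprobabilitiescausationrecursive}, so it suffices to show that $\mathrm{PNS}(k)$ is also at most the new sum. The minimum of the two upper bounds is then valid.

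\textbf{Decomposition over mediator values.} Write $\mathrm{PNS}(k)=P({y_1}_{x_1},\dots,{y_k}_{x_k})$ and expand it over the joint counterfactual values of the mediator:
\begin{align*}
\mathrm{PNS}(k)=\sum_{z_1,\dots,z_k} P\big({y_1}_{x_1},\dots,{y_k}_{x_k},{z_1}_{x_1},\dots,{z_k}_{x_k}\big).
\end{align*}
In the pure-mediator graph of Fig.~\ref{med}(b), with no latent confounding of $X$ with $Z$ or $Y$, we have $Y_{x}=Y_{Z_x}$ and $Y_z=f_Y(z,U_Y)$, where $U_Y$ is independent of $U_Z$. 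The summand therefore equals
\begin{align*}
P\big({y_1}_{z_1},\dots,{y_k}_{z_k}\big)\,P\big({z_1}_{x_1},\dots,{z_k}_{x_k}\big).
\end{align*}

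\textbf{Bounding each factor.} The Fréchet upper bound gives $P({y_1}_{z_1},\dots,{y_k}_{z_k})\le \min_j P({y_j}_{z_j})$. In this graph $P({y_j}_{z_j})=P(y_j\mid z_j)$, since $Z$ has no back-door path to $Y$. Likewise $P({z_1}_{x_1},\dots,{z_k}_{x_k})\le\min_j P({z_j}_{x_j})$, and $P({z_j}_{x_j})=P(z_j\mid x_j)$ because $X$ is exogenous. Summing these products over $(z_1,\dots,z_k)$ gives the stated form.

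\textbf{Restriction to distinct $z$'s (main obstacle).} The statement sums only over tuples with $z_1\neq\dots\neq z_k$, so the remaining tuples must contribute zero. If $z_i=z_j$ for some $i\ne j$, then $Y_{z_i}$ and $Y_{z_j}$ are the same random variable. When $y_i\neq y_j$, the event ${y_i}_{z_i}\wedge{y_j}_{z_j}$ is therefore empty. This requires the $y_j$ to be distinct, which holds in the multivalued setting where $y_1,\dots,y_k$ index distinct outcome values. Under that reading, every tuple with a repeated $z$ contributes zero.

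If some of the $y_j$ coincide, the argument fails, and I would state distinctness of the $y_j$ as the operative assumption. This is the step I expect to require the most care: checking the interpretation of ``$z_1\neq\dots\neq z_k$'' as pairwise distinct, and confirming that the factorization step legitimately uses independence of $U_Y$ from the mediator's exogenous noise. Combining the three steps with the earlier $\mathrm{PNS}_{UB}$ gives the theorem.
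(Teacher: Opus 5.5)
Your proposal is correct and follows the paper's proof essentially step for step. You expand over the counterfactual mediator values, factor into $P({y_1}_{z_1},\dots,{y_k}_{z_k})\,P({z_1}_{x_1},\dots,{z_k}_{x_k})$, apply the Fr\'echet upper bound to each factor, and identify $P({y_j}_{z_j})=P(y_j\mid z_j)$ and $P({z_j}_{x_j})=P(z_j\mid x_j)$. The paper drops the tuples with repeated $z$'s without comment, so your argument fills a step it leaves implicit: $Y_{x_i}=Y_{x_j}$ whenever $Z_{x_i}=Z_{x_j}$, so the event is empty for distinct $y_i\neq y_j$. That argument should be applied to the exact sum before the Fr\'echet bounds are taken.
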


\begin{theorem} \label{nnk+x_p4}
Given $p \ne j$ for $1\le j \le k$. Let the bounds on $\mathrm{PSub}(k,p)$ given in \citet{shu2026identificationprobabilitiescausationrecursive} be denoted by
\begin{align*}
    {PSub}_{{LB}}
    \leq \mathrm{PSub}(k,p)
    \leq
    {PSub}_{{UB}}.
\end{align*}
By treating $Z$ as a pure mediator, we can derive a new upper bound on $\mathrm{PSub}(k,p)$ as follows:
\begin{align*}
\min \left \{
\begin{array}{cc}
PSub_{UB},\\
\\
\text{For }j \in \{1,...,k\}:\\
\displaystyle \sum_{\substack{{z_1}\neq...\neq{z_k} \\ \in \{1,...,n\}^{k}}}
\Big[{\min_j\{P({y_j}\mid {z_j})\}} \times \\ 
\times \displaystyle{\min_j \{P({z_j}\mid {x_j})\} }\Big]\times P(x_p)
\end{array} 
\right \}\nonumber
\ge \mathrm{PSub}(k,p)
\end{align*}
\end{theorem}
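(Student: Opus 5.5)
Since $\mathrm{PSub}(k,p)\le \mathrm{PSub}_{UB}$ already holds by \citet{shu2026identificationprobabilitiescausationrecursive}, it is enough to show that $\mathrm{PSub}(k,p)$ is bounded by the new mediator term. The minimum of two valid upper bounds is again an upper bound. I would follow the structure used for Theorem~\ref{nnk4}, which I take as established, and add the extra event $X=x_p$.

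\textbf{Step 1: decomposition through the mediator.} In the pure-mediator graph of Fig.~\ref{med}(b) we have $Y_{x_j}=Y_{Z_{x_j}}$, so
\begin{align*}
\mathrm{PSub}(k,p)=\sum_{z_1,\dots,z_k} P\big({y_1}_{z_1},\dots,{y_k}_{z_k},{z_1}_{x_1},\dots,{z_k}_{x_k},x_p\big).
\end{align*}
The sum runs over mediator values. Whenever $z_i=z_j$ with $i\neq j$, the event requires $Y_{z_i}=y_i$ and $Y_{z_i}=y_j$ at once, which is impossible because $y_i\neq y_j$. Those terms vanish, so only tuples with distinct entries $z_1\neq\dots\neq z_k$ survive. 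This matches the index set in the statement.

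\textbf{Step 2: factorization.} Graph (b) has no hidden confounding, so the $Y$-counterfactuals $\{Y_z\}$ depend only on the exogenous $U_Y$. The pair $(X,\{Z_x\})$ depends only on $(U_X,U_Z)$. These two groups are independent, so each summand factors as
\begin{align*}
P\big({y_1}_{z_1},\dots,{y_k}_{z_k}\big)\cdot P\big({z_1}_{x_1},\dots,{z_k}_{x_k},x_p\big).
\end{align*}
Next I would bound each factor by a Fréchet upper bound:
\begin{itemize}
\item The first factor is at most $\min_j P({y_j}_{z_j})=\min_j P(y_j\mid z_j)$. The equality holds because $Z\to Y$ is unconfounded.
\item In the second factor, $U_X$ is independent of $U_Z$, so $X$ is independent of $\{Z_x\}$. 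The factor therefore equals $P({z_1}_{x_1},\dots,{z_k}_{x_k})\,P(x_p)$.
\item That joint probability is at most $\min_j P({z_j}_{x_j})=\min_j P(z_j\mid x_j)$. This equality holds because $X\to Z$ is unconfounded.
\end{itemize}
Summing over the distinct tuples gives exactly the new term in the statement.

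\textbf{Main obstacle.} The delicate step is the factorization, in particular pulling out $P(x_p)$ as a product rather than only as a minimum. This requires that no hidden common causes of $X$, $Z$, or $Y$ are present in graph (b), which is the Markovian reading of the figure. I would state this assumption explicitly.

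If only the weaker conditional-independence reading is available, I would fall back to $\min_j\{P({z_j}_{x_j}),P(x_p)\}$, the form used in Theorem~\ref{nnk+x_p3}. That version remains valid under weaker assumptions, but it is not the product form claimed here.
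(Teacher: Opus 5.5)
Your proposal is correct and follows the paper's proof essentially step by step. You decompose over mediator values via $Y_{x_j}=Y_{Z_{x_j}}$, keep only pairwise-distinct tuples, factor each summand as $P({y_1}_{z_1},\dots,{y_k}_{z_k})\,P({z_1}_{x_1},\dots,{z_k}_{x_k})\,P(x_p)$, apply the Fr\'echet upper bounds, and identify $P({y_j}_{z_j})=P(y_j\mid z_j)$ and $P({z_j}_{x_j})=P(z_j\mid x_j)$. You also justify dropping non-distinct tuples (using $y_i\neq y_j$) and state the Markovian assumption behind the factorization, which the paper invokes only as $X\perp\!\!\!\perp\{Y_{z_j},Z_{x_j}\}$, so your version is slightly more explicit than the paper's.
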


\begin{theorem} \label{nnk+y_q4}
Let the bounds on $\mathrm{PRep}(k,q)$ given in \citet{shu2026identificationprobabilitiescausationrecursive} be denoted by
\begin{align*}
    {PRep}_{{LB}}
    \leq \mathrm{PRep}(k,q)
    \leq
    {PRep}_{{UB}}.
\end{align*}
By treating $Z$ as a pure mediator, we can derive a new upper bound on $\mathrm{PRep}(k,q)$ as follows:
\begin{align*}
\min \left \{
\begin{array}{cc}
{PRep}_{{UB}},\\
\\
\text{For }j \in \{1,...,k\}:\\
\displaystyle \sum_{\substack{{z_1}\neq...\neq{z_k} \\ \in \{1,...,n\}^{k}}}
\Big[{\min_j\{P({y_j}\mid {z_j})\}} \times \\ 
\qquad\quad\times {\min_j \{P({z_j}\mid {x_j})\} }\Big]
\end{array} 
\right \}\nonumber
\ge \mathrm{PRep}(k,q)
\end{align*}
\end{theorem}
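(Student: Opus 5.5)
The plan is to reduce the claim to Theorem~\ref{nnk4}, proved just above, together with the bound $\mathrm{PRep}(k,q)\le \mathrm{PRep}_{UB}$ from \citet{shu2026identificationprobabilitiescausationrecursive}. A minimum of valid upper bounds is again a valid upper bound. The first entry of the $\min$, namely $\mathrm{PRep}_{UB}$, is already known to dominate $\mathrm{PRep}(k,q)$. So it suffices to show that the mediator sum
\begin{align*}
S \;=\; \sum_{z_1\neq\cdots\neq z_k}\min_j\{P(y_j\mid z_j)\}\,\min_j\{P(z_j\mid x_j)\}
\end{align*}
also upper-bounds $\mathrm{PRep}(k,q)$.

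The key step is the monotonicity of probability under intersection of events. By definition, $\mathrm{PRep}(k,q)=P({y_1}_{x_1},\dots,{y_k}_{x_k},y_q)$ is the probability of the event $\{Y_{x_1}=y_1,\dots,Y_{x_k}=y_k\}\cap\{Y=y_q\}$. This event is contained in $\{Y_{x_1}=y_1,\dots,Y_{x_k}=y_k\}$, whose probability is $\mathrm{PNS}(k)$. Hence $\mathrm{PRep}(k,q)\le \mathrm{PNS}(k)$. Theorem~\ref{nnk4}, under the same pure-mediator structure of Figure~\ref{med}(b), gives $\mathrm{PNS}(k)\le S$, and chaining the two inequalities yields $\mathrm{PRep}(k,q)\le S$.

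For completeness, I would recall why $\mathrm{PNS}(k)\le S$, since this is the only substantive ingredient. It rests on three facts about the pure-mediator graph.
\begin{itemize}
\item With no direct edge $X\to Y$, we have $Y_{x_j}=Y_{Z_{x_j}}$.
\item Partitioning on the values $z_j$ of $Z_{x_j}$ gives
\begin{align*}
\mathrm{PNS}(k)=\sum_{z_1,\dots,z_k}P\bigl(Z_{x_1}=z_1,\dots,Z_{x_k}=z_k,\;Y_{z_1}=y_1,\dots,Y_{z_k}=y_k\bigr).
\end{align*}
\item Because the exogenous parts of the $X\to Z$ and $Z\to Y$ mechanisms are independent in Figure~\ref{med}(b), each summand factorizes into a $Z$-response probability times a $Y$-response probability.
\end{itemize}
Each factor is then bounded by the minimum of its marginals. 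These marginals are identified observationally as $P(Z_{x_j}=z_j)=P(z_j\mid x_j)$ and $P(Y_{z_j}=y_j)=P(y_j\mid z_j)$, since neither $X\to Z$ nor $Z\to Y$ is confounded. The restriction of the sum to indices in which the $z_j$ are distinct is taken as given from Theorem~\ref{nnk4}; I do not re-derive it here.

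I expect no real obstacle, since the proof is essentially one line once Theorem~\ref{nnk4} is available. The only point needing care is that the extra observational event $Y=y_q$ can only shrink the event being measured. Consequently, unlike $\mathrm{PSub}$ in Theorem~\ref{nnk+x_p4}, no additional factor appears. If a sharper term involving $y_q$ were wanted, one would instead partition on $Z$ as well and bound by $P(z,y_q)$, but that refinement is not needed for the stated inequality.
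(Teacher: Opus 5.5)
Your proposal is correct and matches the paper's proof essentially line for line. The paper also drops the event $Y=y_q$ to get $\mathrm{PRep}(k,q)\le \mathrm{PNS}(k)$ and then invokes the pure-mediator bound of Theorem~\ref{nnk4}, with the $\mathrm{PRep}_{UB}$ entry of the minimum holding by assumption; your recap of why $\mathrm{PNS}(k)$ is bounded by the mediator sum, and your deferral of the distinct-$z_j$ restriction to that theorem, both mirror the paper.
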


\begin{theorem} \label{nnk+x_p+y_q4}
Given $p \ne j$ for $1\le j \le k$. Let the bounds on $\mathrm{PN}(k,p,q)$ given in \citet{shu2026identificationprobabilitiescausationrecursive} be denoted by
\begin{align*}
    {PN}_{{LB}}
    \leq \mathrm{PN}(k,p,q)
    \leq
    {PN}_{{UB}}.
\end{align*}
By treating $Z$ as a pure mediator, we can derive a new upper bound on $\mathrm{PN}(k,p,q)$ as follows:
\begin{align*}
\min \left \{
\begin{array}{cc}
PN_{UB},\\
\\
\text{For }j \in \{1,...,k\}:\\
\displaystyle \sum_{\substack{{z_1}\neq...\neq{z_k}\neq{z_p} \\ \in \{1,...,n\}^{k+1}}}
\Big[{\min_j\{P({y_j}\mid {z_j}),P({y_q}\mid {z_p})\}} \times \\
\times{\displaystyle\min_j \{P({z_j}\mid {x_j}),P({z_p}\mid {x_p})\} } \Big]\times P(x_p)
\end{array} 
\right \}\nonumber
\\ \ge \mathrm{PN}(k,p,q)
\end{align*}
\end{theorem}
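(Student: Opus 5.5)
Since the first term $PN_{UB}$ is already a valid upper bound by \citet{shu2026identificationprobabilitiescausationrecursive}, the whole argument reduces to showing that the new mediator term is also an upper bound on $\mathrm{PN}(k,p,q)=P({y_1}_{x_1},\dots,{y_k}_{x_k},x_p,y_q)$. The minimum of two valid upper bounds is then a valid upper bound.

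\textbf{Step 1: decompose over mediator values.} In the pure-mediator graph of Fig.~\ref{med}(b) we have $Y_{x}=Y_{Z_x}$. The event $\{x_p,y_q\}$ coincides with $\{X=x_p, Y_{x_p}=y_q\}$ by consistency. Partitioning on the counterfactual mediator values $Z_{x_1}=z_1,\dots,Z_{x_k}=z_k$ and $Z_{x_p}=z_p$ gives
\begin{align*}
\mathrm{PN}(k,p,q)=\sum_{z_1,\dots,z_k,z_p} P\big(\{Y_{z_j}=y_j\}_{j\le k},\,Y_{z_p}=y_q,\,\{Z_{x_j}=z_j\}_{j\le k},\,Z_{x_p}=z_p,\,x_p\big).
\end{align*}

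\textbf{Step 2: restrict the index set to distinct $z$'s.} If two indices share the same mediator value, then their $Y$-counterfactuals coincide because they are the same random variable $Y_z$. Such tuples either force incompatible outcome values, and so contribute zero, or they can be handled separately. This is the step I expect to be the main obstacle, since equal $z$'s with equal target $y$'s do not vanish automatically. To justify the $z_1\neq\dots\neq z_p$ indexing, I would first try to argue that the paper's setup pairs distinct treatments with distinct outcomes ($y_j$ are distinct labels), so that $Y_{z_i}=y_i$ and $Y_{z_j}=y_j$ with $z_i=z_j$ and $y_i\ne y_j$ is impossible. The same reasoning applies to $y_q$ versus $y_j$ when $q\ne j$. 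If $q$ coincides with some $j$, I would fall back on checking that the bound still holds by absorbing that term.

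\textbf{Step 3: bound each summand.} Each summand is at most the product of two factors: $\min\big(P(Y_{z_j}=y_j), P(Y_{z_p}=y_q)\big)$ and $\min\big(P(Z_{x_j}=z_j),P(Z_{x_p}=z_p)\big)$, times $P(x_p)$.

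To obtain this product, I would use the independence between the $Y$-mechanism and the $(X,Z)$-mechanism implied by the graph with no unobserved confounding. Under this independence, the summand factors as
\begin{align*}
P(\{Y_{z_j}=y_j\},Y_{z_p}=y_q)\cdot P(\{Z_{x_j}=z_j\},Z_{x_p}=z_p,x_p).
\end{align*}
For the second factor, exogeneity of $X$ lets $x_p$ split off as $P(x_p)$. The remaining joint probabilities are then bounded by the minimum of their marginals.

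\textbf{Step 4: identify the marginals.} In this graph, $P(Y_z=y)=P(y\mid z)$ and $P(Z_x=z)=P(z\mid x)$, which are identified from observational data. Summing over the tuples recovers exactly the stated term.
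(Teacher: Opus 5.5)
Your proposal follows the paper's proof step for step. You decompose over $(Z_{x_1},\dots,Z_{x_k},Z_{x_p})$ using $Y_x=Y_{Z_x}$ and consistency, and keep only pairwise-distinct mediator values. You then factor each summand into a $Y$-part, a $Z$-part and $P(x_p)$, bound each joint by its smallest marginal, and identify $P(Y_z=y)=P(y\mid z)$ and $P(Z_x=z)=P(z\mid x)$. Your Step~3 is stated more carefully than the paper's annotation $X\perp\!\!\!\perp\{Y_{z_j},Z_{x_j}\}$. That annotation alone does not separate the $Y$-part from the $Z$-part; what is needed is mutual independence of $U_X,U_Z,U_Y$. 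The paper writes the restricted sum over $z_1\neq\dots\neq z_k\neq z_p$ with no justification. Your argument supplies it correctly whenever $q\notin\{1,\dots,k\}$: distinct labels $y_1,\dots,y_k$ make $Y_z=y_i$ and $Y_z=y_j$ incompatible.

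The genuine gap is your fallback for $q=j_0\in\{1,\dots,k\}$. The statement permits this case: only $p$ is restricted, and the paper handles $q\in\{1,\dots,k\}$ explicitly for $\mathrm{PRep}$. Tuples with $z_p=z_{j_0}$ then require $Y_{z_{j_0}}=y_{j_0}$ twice, which is consistent, so they carry positive mass. No absorbing argument can save the formula, because the stated bound is false in this case.

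Take $n=2$, $k=1$, $p=2$, $q=1$, and a Markovian model with $Z$ independent of $X$, $P(Z=1)=1-\varepsilon$, $P(Z=2)=\varepsilon$ for some $0<\varepsilon<1/2$, and $Y\equiv y_1$. Then $\mathrm{PN}(1,2,1)=P(Y_{x_1}=y_1,x_2,y_1)=P(x_2)$. The mediator term sums only over $(z_1,z_p)\in\{(1,2),(2,1)\}$ and equals $2\min\{1-\varepsilon,\varepsilon\}\,P(x_2)=2\varepsilon P(x_2)<P(x_2)$. All the true mass sits on the discarded diagonal tuples $(1,1)$ and $(2,2)$. The strict inequality also survives small perturbations that make every arrow of $X\to Z\to Y$ active, so the counterexample is not an artifact of degeneracy.

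So the proof, yours and the paper's, establishes the theorem only under the extra hypothesis $q\notin\{1,\dots,k\}$. For $q=j_0\le k$, a correct version must keep the tuples with $z_p=z_{j_0}$. That means summing over $z_1,\dots,z_k$ pairwise distinct and $z_p\notin\{z_j:j\neq j_0\}$. On those extra tuples your Step~3 bound applies verbatim, since $P(y_q\mid z_p)=P(y_{j_0}\mid z_{j_0})$ already appears in the minimum.
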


\section{Examples}
In this section, for simplicity, we illustrate only the proposed $\mathrm{PNS}$ bounds under different causal structures and focus on three-dimensional cases.

Returning to the motivating example introduced in the Introduction, consider the pharmaceutical company that has developed a new drug claimed to help prevent diabetes. To illustrate the proposed bounds under different causal structures, we construct synthetic studies based on this setting.

Let $X$ denote how frequently an individual takes the drug, where $x_1$ indicates that the person takes the drug regularly and on time, $x_2$ indicates the person takes the drug occasionally, and $x_3$ indicates the person never takes the drug. Let $Y$ denote the individual's diabetes status,where $y_1$, $y_2$, and $y_3$ correspond to the person not developing diabetes, developing prediabetes, and developing diabetes, respectively. The target quantity is $\mathrm{PNS}(3) = P\left(y_{1x_1},y_{2x_2},y_{3x_3}\right)$,
which represents the proportion of individuals who would not develop diabetes if they took the drug regularly and on time, would develop prediabetes if they took it occasionally, and would develop diabetes if they never took it. This target quantity is chosen solely for ease of presentation; any joint counterfactual probability can be analyzed in the same manner using the proposed framework.

\subsection{Family History as a Covariate}\label{ex1}
We now incorporate family history as an observed covariate in the diabetes example. 
Let $Z$ denote an individual's family history of diabetes, where $z_1$, $z_2$, and $z_3$ correspond to no known family history of diabetes, diabetes only in a second-degree relative, and diabetes in a first-degree relative, respectively.
Individuals with a family history of diabetes may be more likely to develop the disease and may also be more likely to take preventive medication regularly, giving $Z\to Y$ and $Z\to X$. The corresponding causal structure is shown in Figure~\ref{fig:z-not-descendant}(b). In this setting, $Z$ also satisfies the back-door criterion. Therefore, in Theorem~\ref{nnk1}, all terms requiring experimental data can be replaced by their corresponding expressions based on observational data, yielding a modified version of the theorem that requires only observational data. The modified theorem is provided in the appendix.

To illustrate the calculation, we construct a synthetic study of 900 individuals. The sample was divided into three strata according to the levels of $Z$. The results are shown in Table~\ref{tb1}.

\begin{table}[t]
\centering
\begin{tabular}{|c|c|c|c|}
\hline
\multicolumn{4}{|c|}
{$Z=z_1$: No known family history}\\
\hline
$Y \backslash X$
& Regular use
& Occasional use
& No use\\
\hline
Diabetes-free & $46$ & $4$ & $54$\\
\hline
Prediabetes   & $12$ & $112$ & $120$\\
\hline
Diabetes      & $2$ & $4$ & $6$\\
\hline

\multicolumn{4}{c}{}\\
\hline
\multicolumn{4}{|c|}
{$Z=z_2$: Second-degree family history only}\\
\hline
$Y \backslash X$
& Regular use
& Occasional use
& No use\\
\hline
Diabetes-free & $3$ & $3$ & $2$\\
\hline
Prediabetes   & $75$ & $25$ & $2$\\
\hline
Diabetes      & $12$ & $2$ & $56$\\
\hline

\multicolumn{4}{c}{}\\
\hline
\multicolumn{4}{|c|}
{$Z=z_3$: First-degree family history}\\
\hline
$Y \backslash X$
& Regular use
& Occasional use
& No use\\
\hline
Diabetes-free & $192$ & $44$ & $4$\\
\hline
Prediabetes   & $24$ & $2$ & $2$\\
\hline
Diabetes      & $24$ & $14$ & $54$\\
\hline
\end{tabular}
\caption{Results of a drug study stratified by family history.}
\label{tb1}
\end{table}

Since $Z$ also satisfies back-door criterion, for $j\in \{1,2,3\}$, we can compute 
$P({y_j}_{x_j}) = \sum_z{P({y_j}\mid{x_j}, z)\times P(z)}$. 
So we can derive $P({y_1}_{x_1})=190/300, P({y_2}_{x_2})=166/300, P({y_3}_{x_3})=168/300$. 

The company emphasizes that the three interventional probabilities appearing in the target probability of causation,
$P({y_1}_{x_1})$, $P({y_2}_{x_2})$, and $P({y_3}_{x_3})$,
are all greater than $0.5$. Based on these marginal probabilities, the company claims that the drug is effective.

Applying the bounds on $\mathrm{PNS}(3)$ derived by \citet{shu2026identificationprobabilitiescausationrecursive} gives
\[
0\leq \mathrm{PNS}(3)\leq 0.551.
\]
This interval is too wide to determine whether the drug produces the target response pattern. 

By additionally incorporating information about family history, we obtain the tighter bounds (the detailed derivation is provided in the appendix):
\begin{align*}
0
\leq
\mathrm{PNS}(3)
\leq
0.033.
\end{align*}
The substantially smaller upper bound indicates that at most $3.3\%$ of the population would exhibit the target response pattern. Thus, the company's marginal comparisons provide limited evidence that the drug has the claimed effect.

\subsection{Blood glucose as a pure mediator}\label{ex2}
As in the previous example, let $X$ and $Y$ denote how frequently an individual takes the drug and the individual's diabetes status, respectively. A new investigation finds that the drug may affect an individual's blood glucose level and thereby help prevent the individual from developing diabetes. Accordingly, 
Let $Z$ denote the individual's intermediate blood glucose status measured after treatment but before the final diabetes assessment, where $z_1$, $z_2$, and $z_3$ correspond to normal, prediabetic, and diabetic blood glucose levels, respectively.
This setting corresponds to the causal structure shown in Figure~\ref{med}(b).

To illustrate the calculation under the pure-mediator structure, we construct a synthetic study of $2{,}700$ individuals. The synthetic observations are grouped according to the three levels of $Z$, as shown in Table~\ref{tb2}.

\begin{table}[t]
\centering
\begin{tabular}{|c|c|c|c|}
\hline
\multicolumn{4}{|c|}
{$Z=z_1$: normal blood glucose}\\
\hline
$Y \backslash X$
& Regular use
& Occasional use
& No use\\
\hline
Diabetes-free & $448$ & $304$ & $16$\\
\hline
Prediabetes   & $364$ & $247$ & $13$\\
\hline
Diabetes      & $28$  & $19$  & $1$\\
\hline

\multicolumn{4}{c}{}\\
\hline
\multicolumn{4}{|c|}
{$Z=z_2$: prediabetic blood glucose}\\
\hline
$Y \backslash X$
& Regular use
& Occasional use
& No use\\
\hline
Diabetes-free & $2$  & $16$  & $14$\\
\hline
Prediabetes   & $27$ & $216$ & $189$\\
\hline
Diabetes      & $1$  & $8$   & $7$\\
\hline

\multicolumn{4}{c}{}\\
\hline
\multicolumn{4}{|c|}
{$Z=z_3$: diabetic blood glucose}\\
\hline
$Y \backslash X$
& Regular use
& Occasional use
& No use\\
\hline
Diabetes-free & $6$  & $18$  & $132$\\
\hline
Prediabetes   & $3$  & $9$   & $66$\\
\hline
Diabetes      & $21$ & $63$  & $462$\\
\hline
\end{tabular}
\caption{Results of a drug study with blood glucose status.}
\label{tb2}
\end{table}

Applying the bounds on $\mathrm{PNS}(3)$ given by \citet{shu2026identificationprobabilitiescausationrecursive}, we obtain 
\[
0\leq \mathrm{PNS}(3)\leq 0.507.
\]
This interval is too wide to provide informative conclusions about the target response pattern.

By additionally incorporating information on blood glucose, we obtain the following tighter upper bound using Thm.~\ref{nnk4} (details are provided in the appendix):
\begin{align*}
0 \leq \mathrm{PNS}(3)\ \leq 0.151.
\end{align*}
The substantially smaller mediator-improved upper bound provides a substantially more informative characterization of the target response pattern.

\section{Simulated Results}
In this section, we empirically demonstrate that the proposed bounds, which incorporate information from the corresponding causal graphs, are tighter than the bounds derived by \citet{shu2026identificationprobabilitiescausationrecursive}. In the figures and tables that follow, ``Shu et al.'' refers to the bounds derived in that work. For simplicity, we focus on $\mathrm{PNS}$ to provide a clear visual illustration of the improvement achieved in practice.


\begin{figure}[b]
  \centering
  \includegraphics[width=\linewidth]{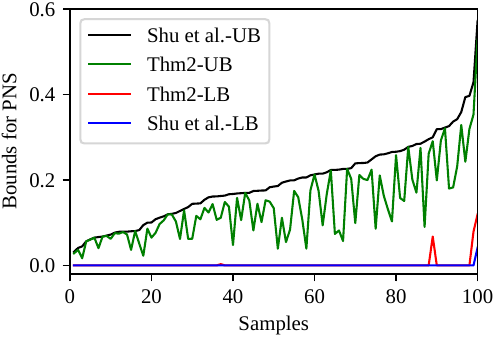}
  \caption{$\mathrm{PNS}(3)$ bounds under the non-descendant structure shown in Fig.~\ref{fig:z-not-descendant}(a).}
  \label{sim_pic1}
\end{figure}

\begin{figure}[t]
  \centering
  \includegraphics[width=\linewidth]{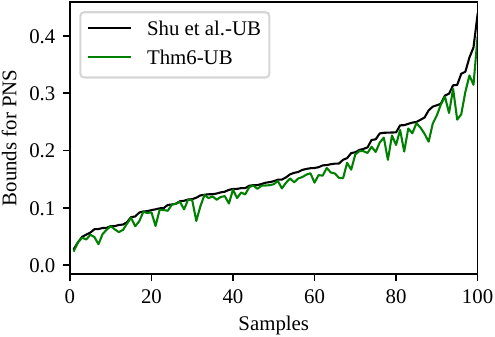}
  \caption{$\mathrm{PNS}(3)$ bounds under the partial mediator structure shown in Fig.~\ref{med}(a).}
  \label{sim_pic2}
\end{figure}

\begin{figure}[b]
  \centering
  \includegraphics[width=\linewidth]{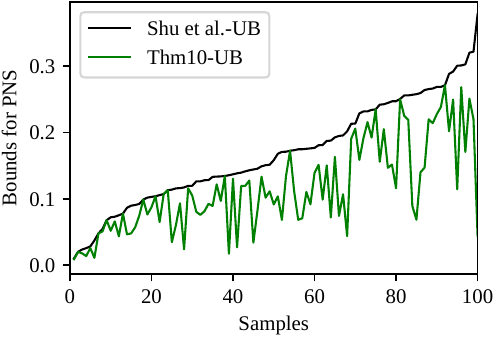}
  \caption{$\mathrm{PNS}(3)$ bounds under the pure mediator structure shown in Fig.~\ref{med}(b).}
  \label{sim_pic3}
\end{figure}

To illustrate the improvement for individual instances, we consider representative settings with $n,k\in \{3,4,5\}$, chosen for ease of presentation. For each setting, we generate $100{,}000$ sample distributions compatible with the causal diagrams shown in Figures~\ref{fig:z-not-descendant}(a), \ref{med}(a), and \ref{med}(b), corresponding to Theorems~\ref{nnk1}, \ref{nnk3}, and \ref{nnk4}, respectively, and compute the bounds using both methods. 
For sample $i$, let $[a_i,b_i]$ denote the bounds obtained from our theorem and $[c_i,d_i]$ those obtained from \citet{shu2026identificationprobabilitiescausationrecursive}. For each causal diagram, we summarize the following quantities:

\begin{itemize}
    \item Average improvement in the lower bound:
    $\frac{\sum (a_i-c_i)}{100{,}000}$.
    \item Average improvement in the upper bound:
    $\frac{\sum (d_i-b_i)}{100{,}000}$.
    \item Average width of Shu et al. PNS bounds:
    $\frac{\sum (d_i-c_i)}{100{,}000}$.
    \item Average width of the Theorems~\ref{nnk1}, \ref{nnk3}, and \ref{nnk4}:
    $\frac{\sum (b_i-a_i)}{100{,}000}$.
    \item Count of sample distributions benefiting from tighter bounds:
    $\sum f_i$, where $f_i=1$ if $a_i>c_i$ or $b_i<d_i$, and $f_i=0$ otherwise.
\end{itemize}

\begin{table}[t]
\centering
\begin{tabular}{
@{}l
@{\hspace{3pt}}c
@{\hspace{3pt}}c
@{\hspace{3pt}}c
@{\hspace{3pt}}c
@{\hspace{3pt}}c@{}
}
\toprule
&
\shortstack{Avg. LB\\gain}
&
\shortstack{Avg. UB\\gain}
&
\shortstack{Shu et al.\\width}
&
\shortstack{Thms\\width}
&
\shortstack{Improved\\(\%)}
\\
\midrule

\multicolumn{6}{c}{$n=k=3$}\\
\midrule
Non-desc.
  & 0.0014 & 0.0532 & 0.1953 & 0.1407 & 89.08\%\\
Part. med.
  & 0 & 6.52e-06 & 0.2038 & 0.2038 & 0.04\%\\
Pure med.
  & 0 & 0.0583 & 0.1776 & 0.1193 & 83.66\%\\

\midrule
\multicolumn{6}{c}{$n=k=4$}\\
\midrule
Non-desc.
  & 3.78e-07 & 0.0577 & 0.1346 & 0.0769 & 98.44\%\\
Part. med.
  & 0 & 0 & 0.1406 & 0.1406 & 0\%\\
Pure med.
  & 0 & 0.0137 & 0.1245 & 0.1108 & 36.58\%\\

\midrule
\multicolumn{6}{c}{$n=k=5$}\\
\midrule
Non-desc.
  & 0 & 0.0557 & 0.1033 & 0.0477 & 99.85\%\\
Part. med.
  & 0 & 0 & 0.1082 & 0.1082 & 0\%\\
Pure med.
  & 0 & 0.0003 & 0.0973 & 0.0970 & 1.26\%\\

\bottomrule
\end{tabular}
\caption{Performance metrics for Theorems~\ref{nnk1} (Non-desc.), \ref{nnk3} (Part. med.), and \ref{nnk4} (Pure med.). Here, LB and UB denote the lower and upper bounds, respectively.}
\label{sim_tab_1}
\end{table}

Table~\ref{sim_tab_1} reports these summary statistics across different theorems and dimensions. The partial-mediator structure yields only a small improvement because, as the dimension increases, the proposed bound is tighter than the existing bound in fewer and fewer simulation draws:
\begin{equation*}
\begin{aligned}
n=2,\ k=2 &: \quad 8.2\% \text{ of the bounds are tightened},\\
n=3,\ k=2 &: \quad 1.97\% \text{ of the bounds are tightened},\\
n=3,\ k=3 &: \quad 0.035\% \text{ of the bounds are tightened},\\
n=4,\ k=4 &: \quad 0 \text{ bounds are tightened in 20M draws}.
\end{aligned}
\label{sim_tab_2}
\end{equation*}

Thus, although the partial-mediator bound remains theoretically valid in higher-dimensional settings, it becomes increasingly unlikely to improve the existing bound in our simulations. Among the causal structures considered, the non-descendant covariate structure provides the most substantial tightening as the dimension increases.

In Figures~\ref{sim_pic1} and \ref{sim_pic3}, we randomly select $100$ of the $100{,}000$ sample distributions, plot the bounds obtained with and without incorporating causal graph information, and order the samples according to the upper bounds on $\mathrm{PNS}(3)$ given by \citet{shu2026identificationprobabilitiescausationrecursive}. For Figure~\ref{sim_pic2}, because Table~\ref{sim_tab_1} shows that Theorem~\ref{nnk3} yields narrower bounds less frequently, we continue generating samples until obtaining $100{,}000$ instances for which the bounds are narrowed and then randomly select $100$ of them for visualization.

\section{Conclusion}
In this paper, we extend bounds on probabilities of causation (PoCs) that incorporate causal graph information from binary to multivalued settings. Specifically, we derive new bounds under additional causal graph information for the four representative PoCs, $\mathrm{PNS}$, $\mathrm{PSub}$, $\mathrm{PRep}$, and $\mathrm{PN}$, which together could represent any discrete PoC. We consider several causal graph structures, including non-descendant covariates, backdoor adjustment sets, partial mediators, and pure mediators. Toy examples and simulation studies validate the proposed theorems and demonstrate that incorporating causal graph information consistently yields tighter bounds than approaches based solely on experimental and observational distributions.



\bibliography{ref} 
\clearpage 
\newpage 
\appendix
\section{Technical appendices and supplementary material}
\subsection{Theorem proofs}
\subsubsection{Proof of Theorem~\ref{nnk1}}
\begin{proof}
\begin{eqnarray}
    PNS(k) &=& P({y_{1}}_{x_{1}},...,{y_{k}}_{x_{k}})\nonumber\\
    &=& \sum_{z}{P({y_{1}}_{x_{1}},...,{y_{k}}_{x_{k}}\mid z)}\times P(z) \label{pnsk1*z}
\end{eqnarray}

First, we need to prove:
\begin{align*}
{\max \left \{
\begin{array}{cc}
0, \\
\\
\displaystyle \sum_{j = 1}^{k}P({y_{j}}_{x_{j}}\mid z) - k + 1, \\
\\
\text{For }i \in  \{1, ..., k\}:\\
\quad \displaystyle \sum_{\substack{1 \le j \le k \\ j \ne i}}
\Big[P({y_{j}}_{x_{j}}\mid z)+P({x_{j}\mid z})-\\
-P({x_{j}, y_{j}}\mid z)\Big]+ P({x_{i}, y_{i}}\mid z) - k + 1
\end{array}
\right \}\nonumber
} \\ \le  P({y_{1}}_{x_{1}},...,{y_{k}}_{x_{k}}\mid z)
\end{align*}
\begin{align*}
{\min \left \{
\begin{array}{cc}
\displaystyle \sum_{j = 1}^{k}P({x_{j}, y_{j}}\mid z) +\displaystyle\sum_{j = k+1}^{n}P({x_{j}}\mid z), \\
\\
\text{For }j \in \{1,...,k\}:\\
P({y_{j}}_{x_{j}}\mid z),\\
\\
\text{For }m\in\{1,...,k-1\},\\
t_j\in\{1,...,k\}:\\
\displaystyle \frac{1}{m}\Big[\sum_{j = 0}^{m}P({y_{t_j}}_{x_{t_j}}\mid z) -P({x_{t_j}, y_{t_j}\mid z})\Big]
\end{array} 
\right \}\nonumber
} \\ \ge  P({y_{1}}_{x_{1}},...,{y_{k}}_{x_{k}}\mid z)
\end{align*}

By the Fréchet Inequalities, for any $z$ with $P(z)>0$, we have
\begin{eqnarray*}
&&P({y_{1}}_{x_{1}},...,{y_{k}}_{x_{k}}\mid z)\\
&&\ge
\max\left\{0, \displaystyle \sum_{j = 1}^{k}P({y_{j}}_{x_{j}}\mid z) - k + 1\right\}\\
&&P({y_{1}}_{x_{1}},...,{y_{k}}_{x_{k}}\mid z)
\le
\min_{1\le j \le k} \left\{P({y_{j}}_{x_{j}}\mid z)\right\}
\end{eqnarray*}

The equality of the first lower bound holds when $\exists j\in [1,k], \text{that } P({y_j}_{x_j}\mid z)=0$.

The equality of the second lower bound holds when $\exists i \in [1,k]$, $P({y_j}_{x_j}\mid z)=1$ for $\forall j \in [1,k], j\ne i$.

The equality of the second upper bound holds when $\exists j \in [1,k]$, $P({y_{i}}_{x_{i}}\mid z) = 1$ for $\forall i \in [1,k]$, $i\ne j$.

Also,
\begin{eqnarray*}
    &&P({y_{1}}_{x_{1}},...,{y_{k}}_{x_{k}}\mid z) \\
    &=& \sum_{j=1}^{n} P({y_{1}}_{x_{1}},...,{y_{k}}_{x_{k}}, x_{j}\mid z)\\
    &\ge& \sum_{j=1}^{k} P({y_{1}}_{x_{1}},...,{y_{k}}_{x_{k}}, x_{j}\mid z)\\
    &&\text{here, the equal sign holds when }\forall {x_j} = 0 \text{ for } \\
    &&j \in [k+1, n].
\end{eqnarray*}

For $\forall i, \text{s.t.}, i \in \{1, ..., k\}$, 
\begin{eqnarray*}
&&\sum_{j=1}^{k} P({y_{1}}_{x_{1}},...,{y_{k}}_{x_{k}}, x_{j}\mid z)\\
&\ge& P({y_{1}}_{x_{1}},...,{y_{k}}_{x_{k}}, x_{i}\mid z)\\
&&\text{here, the equal sign holds when }\forall {x_j} = 0 \text{ for }\\
&& j \in [1, k] \text{ and } j\ne i.
\end{eqnarray*}
Thus, we have 
\begin{eqnarray*}
&& P({y_{1}}_{x_{1}},...,{y_{k}}_{x_{k}}\mid z) \\
&\ge& P({y_{1}}_{x_{1}},...,{y_{k}}_{x_{k}}, x_{i}\mid z) \\
&=& P({y_{1}}_{x_{1}},...,{y_{i-1}}_{x_{i-1}},{y_{i+1}}_{x_{i+1}},...,{y_{k}}_{x_{k}}, x_{i}, y_{i}\mid z) \\
&=& P({y_{1}}_{x_{1}},...,{y_{i-1}}_{x_{i-1}},{y_{i+1}}_{x_{i+1}},...,{y_{k}}_{x_{k}}, x_{i}, y_{i}\mid z) \\
&&+ \sum_{j=1}^{n}{P(x_{j}\mid z)} - 1\\ 
&&+ \sum_{j=1}^{n} \sum_{q=1}^{n} P({y_{1}}_{x_{1}},...,{y_{i-1}}_{x_{i-1}},{y_{i+1}}_{x_{i+1}},...,{y_{k}}_{x_{k}},\\
&&x_{j}, y_{q}\mid z) \\
&&- \sum_{j=1}^{n} \sum_{q=1}^{n} P({y_{1}}_{x_{1}},...,{y_{i-1}}_{x_{i-1}},{y_{i+1}}_{x_{i+1}},...,{y_{k}}_{x_{k}},\\ 
&&x_{j}, y_{q}\mid z) \\
&=& P({y_{1}}_{x_{1}},...,{y_{i-1}}_{x_{i-1}},{y_{i+1}}_{x_{i+1}},...,{y_{k}}_{x_{k}}, x_{i}, y_{i}\mid z) \\
&&+ \sum_{\substack{1 \le j \le k \\ j \ne i}}{P(x_{j}\mid z)} + P(x_{i}\mid z) + \sum_{j=k+1}^{n}{P(x_{j}\mid z)} \\
&&- 1 + P({y_{1}}_{x_{1}},...,{y_{i-1}}_{x_{i-1}},{y_{i+1}}_{x_{i+1}},...,{y_{k}}_{x_{k}}\mid z) \\
&&- \sum_{\substack{1 \le j \le k \\ j \ne i}}  P({y_{1}}_{x_{1}},...,{y_{i-1}}_{x_{i-1}},{y_{i+1}}_{x_{i+1}},...,{y_{k}}_{x_{k}},\\ 
&&x_{j}, y_{j}\mid z) \\
&&- \sum_{q=1}^{n} P({y_{1}}_{x_{1}},...,{y_{i-1}}_{x_{i-1}},{y_{i+1}}_{x_{i+1}},...,{y_{k}}_{x_{k}}, x_{i}, \\
&&y_{q}\mid z) \\
\end{eqnarray*}
\begin{eqnarray*}
&&- \sum_{j=k+1}^{n} \sum_{q=1}^{n} P({y_{1}}_{x_{1}},...,{y_{i-1}}_{x_{i-1}},{y_{i+1}}_{x_{i+1}},...,\\
&&{y_{k}}_{x_{k}}, x_{j}, y_{q}\mid z) \\
&=& P({y_{1}}_{x_{1}},...,{y_{i-1}}_{x_{i-1}},{y_{i+1}}_{x_{i+1}},...,{y_{k}}_{x_{k}}\mid z) - 1 \\
&&+ \sum_{\substack{1 \le j \le k \\ j \ne i}}{P(x_{j}\mid z)} \\
&&- \sum_{\substack{1 \le j \le k \\ j \ne i}}  P({y_{1}}_{x_{1}},...,{y_{i-1}}_{x_{i-1}},{y_{i+1}}_{x_{i+1}},...,{y_{k}}_{x_{k}},\\
&&x_{j}, y_{j}\mid z) \\
&&+ P(x_{i}\mid z) \\
&&- \sum_{q=1}^{n} P({y_{1}}_{x_{1}},...,{y_{i-1}}_{x_{i-1}},{y_{i+1}}_{x_{i+1}},...,{y_{k}}_{x_{k}},\\
&&x_{i}, y_{q}\mid z) \\
&&+ P({y_{1}}_{x_{1}},...,{y_{i-1}}_{x_{i-1}},{y_{i+1}}_{x_{i+1}},...,{y_{k}}_{x_{k}}, x_{i}, \\
&&y_{i}\mid z) \\
&&+ \sum_{j=k+1}^{n}{P(x_{j}\mid z)} \\
&&- \sum_{j=k+1}^{n} \sum_{q=1}^{n} P({y_{1}}_{x_{1}},...,{y_{i-1}}_{x_{i-1}},{y_{i+1}}_{x_{i+1}},...,\\
&&{y_{k}}_{x_{k}}, x_{j}, y_{q}\mid z)\\
&=& P({y_{1}}_{x_{1}},...,{y_{i-1}}_{x_{i-1}},{y_{i+1}}_{x_{i+1}},...,{y_{k}}_{x_{k}}\mid z) - 1 \\
&&+ \sum_{\substack{1 \le j \le k \\ j \ne i}}{P(x_{j}\mid z)} \\
&&- \sum_{\substack{1 \le j \le k \\ j \ne i}}  P({y_{1}}_{x_{1}},...,{y_{i-1}}_{x_{i-1}},{y_{i+1}}_{x_{i+1}},...,{y_{k}}_{x_{k}},\\ 
&&x_{j}, y_{j}\mid z) \\
&&+ P(x_{i}\mid z) \\
&&- \sum_{\substack{1 \le q \le n \\ q \ne i}} P({y_{1}}_{x_{1}},...,{y_{i-1}}_{x_{i-1}},{y_{i+1}}_{x_{i+1}},...,{y_{k}}_{x_{k}}, \\
&&x_{i}, y_{q}\mid z) \\
&&+ \sum_{j=k+1}^{n}{P(x_{j}\mid z)} \\
&&- \sum_{j=k+1}^{n} P({y_{1}}_{x_{1}},...,{y_{i-1}}_{x_{i-1}},{y_{i+1}}_{x_{i+1}},...,{y_{k}}_{x_{k}},\\
&&x_{j}\mid z)
\end{eqnarray*}
By applying the Frechet inequalities, for any $z$ with $P(z)>0$, we have \\
\begin{eqnarray*}
&&P({y_{1}}_{x_{1}},...,{y_{k}}_{x_{k}}\mid z) \\
&\ge& \sum_{\substack{1 \le j \le k \\ i \ne j}}{P({y_{j}}_{x_{j}}\mid z)} - (k-2) - 1 \\
&&+ \sum_{\substack{1 \le j \le k \\ j \ne i}}{P(x_{j}\mid z)} - \sum_{\substack{1 \le j \le k \\ j \ne i}}{P(x_{j}, y_{j}\mid z)} \\
&&+ P(x_{i}\mid z) - \sum_{\substack{1 \le q \le n \\ q \ne i}}{P(x_{i}, y_{q}\mid z)} \\
&&+ \sum_{j=k+1}^{n}{P(x_{j}\mid z)} - \sum_{j=k+1}^{n}{P(x_{j}\mid z)}\\
&&\text{here, the equal sign holds when }
\exists i \in [1,k], \\
&&\text{ and } P({y_j}_{x_j}\mid z)=1 \text{ for } \forall j \in [1,k], j\ne i.\\
\end{eqnarray*}
\begin{eqnarray*}
&=& \sum_{\substack{1 \le j \le k \\ i \ne j}}\left[P({y_{j}}_{x_{j}}\mid z)+P({x_{j}}\mid z)-P({x_{j}, y_{j}}\mid z)\right] \\
&&+ P({x_{i}, y_{i}}\mid z) - k + 1 \\
\end{eqnarray*}

To proof the first upper bound, 
\begin{eqnarray*}
    &&P({y_{1}}_{x_{1}},...,{y_{k}}_{x_{k}}\mid z) \\
    &=& \sum_{j=1}^{n} P({y_{1}}_{x_{1}},...,{y_{k}}_{x_{k}}, x_{j}\mid z)\\
    &=& \sum_{j=1}^{k} P({y_{1}}_{x_{1}},...,{y_{k}}_{x_{k}}, x_{j}\mid z) \\
    &&+ \sum_{j=k+1}^{n} P({y_{1}}_{x_{1}},...,{y_{k}}_{x_{k}}, x_{j}\mid z)\\
    &\le& \sum_{j=1}^{k} P(x_{j}, y_{j}\mid z) + \sum_{j = k+1}^{n}P({x_{j}}\mid z)
\end{eqnarray*}

The equality of the first upper bound holds when $P({y_j}_{x_j}\mid z)=1 \text{ for } \forall j \in [1,k].$

To proof the remaining upper bound, for $m\in\{1,...,k-1\}$, $t_j\in\{1,...,k\}$, we can first write

\begin{eqnarray}
    &&P({y_{1}}_{x_{1}},...,{y_{k}}_{x_{k}}\mid z) \nonumber\\
    &=& \sum_{j=1}^{n} P({y_{1}}_{x_{1}},...,{y_{k}}_{x_{k}}, x_{j}\mid z) \nonumber\\
    &=& \frac{1}{m} \times \left[(m) \sum_{j=1}^{n} P({y_{1}}_{x_{1}},...,{y_{k}}_{x_{k}}, x_{j}\mid z)\right]  \label{pns(k)1/m*m}
\end{eqnarray}
For $m = 1$, 
\begin{eqnarray*}
    &&\frac{1}{m}\left[\sum_{j = 0}^{m}P({y_{t_j}}_{x_{t_j}}\mid z) - P({x_{t_j}, y_{t_j}}\mid z)\right] \\
    &=& \sum_{j = 0}^{1}P({y_{t_j}}_{x_{t_j}}\mid z) - P({x_{t_j}, y_{t_j}}\mid z)\\
    &=& P({y_{t_0}}_{x_{t_0}}\mid z) + P({y_{t_1}}_{x_{t_1}}\mid z) - P({x_{t_0}, y_{t_0}}\mid z) \\
    &&- P({x_{t_1}, y_{t_1}}\mid z)
\end{eqnarray*}
W.L.O.G., let $1 \le t_0 < t_1 \le k$ :
\begin{eqnarray}
    && P({y_{1}}_{x_{1}},...,{y_{k}}_{x_{k}}\mid z) \nonumber\\
    &=& \sum_{j=1}^{n} P({y_{1}}_{x_{1}},...,{y_{k}}_{x_{k}}, x_{j}\mid z) \nonumber\\
    &=& \sum_{j=1}^{k} P({y_{1}}_{x_{1}},...,{y_{k}}_{x_{k}}, x_{j}\mid z) \nonumber\\
    &&+ \sum_{j=k+1}^{n} P({y_{1}}_{x_{1}},...,{y_{k}}_{x_{k}}, x_{j}\mid z) \nonumber\\
    &\le& \sum_{j=1}^{t_0}{P({y_{t_0}}_{x_{t_0}}, x_j, {y_{j+t_1-t_0}}_{x_{j+t_1-t_0}}\mid z)}
    \nonumber\\
    &&+ \sum_{j={t_0}+1}^{k-t_1+t_0} P({y_{t_0}}_{x_{t_0}}, x_{j}, {y_{j+t_1-t_0}}_{x_{j+t_1-t_0}}\mid z) \nonumber\\
    &&+ \sum_{j=1}^{t_1-t_0}{P({y_{t_0}}_{x_{t_0}}, x_{k-t_1+t_0+j}, {y_{j}}_{x_{j}}\mid z)} \nonumber\\
    &&+ \sum_{j=k+1}^{n} P({y_{1}}_{x_{1}},...,{y_{k}}_{x_{k}}, x_{j}\mid z) \nonumber\\
    &&\text{here, the equal sign holds when }
    \exists 1\le t_0 < t_1 \le k, \nonumber\\
    &&\text{ and } P({y_i}_{x_i}\mid z)=1 \text{ for } \forall i \in [1,k], i\ne t_0, \nonumber\\
    &&i\ne j+t_1-t_0 \text{ for } j\in [1,k-t_1+t_0].\nonumber\\
    &=& \sum_{j=1}^{t_0}{P({y_{t_0}}_{x_{t_0}}, x_j, {y_{j+t_1-t_0}}_{x_{j+t_1-t_0}}\mid z)} 
    \nonumber\\
    &&+ \sum_{j=t_0+1}^{k-t_1+t_0} P({y_{t_0}}_{x_{t_0}}, x_{j}, {y_{j+t_1-t_0}}_{x_{j+t_1-t_0}}\mid z) \nonumber\\
    &&+ \sum_{j=1}^{t_1-t_0}{P({y_{t_0}}_{x_{t_0}}, x_{k-t_1+t_0+j}, {y_{j}}_{x_{j}}\mid z)} 
    \nonumber\\
    &&+ \sum_{j=k+1}^{n} P({y_{1}}_{x_{1}},...,{y_{k}}_{x_{k}}, x_{j}\mid z) \nonumber\\
    &&+ P({y_{t_0}}_{x_{t_0}}\mid z) + P({y_{t_1}}_{x_{t_1}}\mid z) - P({x_{t_0}, y_{t_0}}\mid z) \nonumber\\
    &&- P({x_{t_1}, y_{t_1}}\mid z) \nonumber
\end{eqnarray}
\begin{eqnarray}
    &&- \sum_{\substack{\{i_1,...,i_{t_0-1},i_{t_0+1},...,i_{k+2}\} \\ \in \{1,...,n\}^{k+1}}}{P({y_{i_1}}_{x_1},...,{y_{i_{t_0-1}}}_{x_{t_0-1}},}\nonumber\\
    &&{{y_{t_0}}_{x_{t_0}},{y_{i_{t_0+1}}}_{x_{t_0+1}},...,{y_{i_{k}}}_{x_{k}},x_{i_{k+1}},y_{i_{k+2}}\mid z)} \nonumber\\
    &&- \sum_{\substack{\{i_1,...,i_{t_1-1},i_{t_1+1},...,i_{k+2}\} \\ \in \{1,...,n\}^{k+1}}}{P({y_{i_1}}_{x_1},...,{y_{i_{t_1-1}}}_{x_{t_1-1}},}\nonumber\\
    &&{{y_{t_1}}_{x_{t_1}},{y_{i_{t_1+1}}}_{x_{t_1+1}},...,{y_{i_{k}}}_{x_{k}},x_{i_{k+1}},y_{i_{k+2}}\mid z)} \nonumber\\ 
    &&+ \sum_{\substack{\{i_1,...,i_{t_0-1},i_{t_0+1},...,i_{k}\} \\ \in \{1,...,n\}^{k-1}}}{P({y_{i_1}}_{x_1},...,{y_{i_{t_0-1}}}_{x_{t_0-1}},}\nonumber\\
    &&{{y_{t_0}}_{x_{t_0}},{y_{i_{t_0+1}}}_{x_{t_0+1}},...,{y_{i_{k}}}_{x_{k}}, x_{t_0}, y_{t_0}\mid z)} \nonumber\\
    &&+ \sum_{\substack{\{i_1,...,i_{t_1-1},i_{t_1+1},...,i_{k}\} \\ \in \{1,...,n\}^{k-1}}}{P({y_{i_1}}_{x_1},...,{y_{i_{t_1-1}}}_{x_{t_1-1}},}\nonumber\\
    &&{{y_{t_1}}_{x_{t_1}},{y_{i_{t_1+1}}}_{x_{t_1+1}},...,{y_{i_{k}}}_{x_{k}}, x_{t_1}, y_{t_1}\mid z)} \nonumber\\
    &=& P({y_{t_0}}_{x_{t_0}}\mid z) + P({y_{t_1}}_{x_{t_1}}\mid z) - P({x_{t_0}, y_{t_0}}\mid z)  \nonumber\\
    &&- P({x_{t_1}, y_{t_1}}\mid z)\nonumber\\
    &&+ \sum_{j=1}^{t_0-1}{P({y_{t_0}}_{x_{t_0}}, x_j, {y_{j+t_1-t_0}}_{x_{j+t_1-t_0}}\mid z)}
    \nonumber\\
    &&+ {P({y_{t_0}}_{x_{t_0}}, x_{t_0}, {y_{t_1}}_{x_{t_1}}\mid z)}  \nonumber\\
    &&+ \sum_{j={t_0}+1}^{k-t_1+t_0} P({y_{t_0}}_{x_{t_0}}, x_{j}, {y_{j+t_1-t_0}}_{x_{j+t_1-t_0}}\mid z) 
    \nonumber\\
    &&+ \sum_{j=1}^{t_1-t_0}{P({y_{t_0}}_{x_{t_0}}, x_{k-t_1+t_0+j}, {y_{j}}_{x_{j}}\mid z)} \nonumber\\
    &&+ \sum_{j=k+1}^{n} P({y_{1}}_{x_{1}},...,{y_{k}}_{x_{k}}, x_{j}\mid z) \nonumber\\
    &&- \sum_{\substack{\{i_1,...,i_{t_0-1},i_{t_0+1},...,i_{k+2}\} \\ \in \{1,...,n\}^{k+1}}}{P({y_{i_1}}_{x_1},...,{y_{i_{t_0-1}}}_{x_{t_0-1}},}\nonumber\\
    &&{{y_{t_0}}_{x_{t_0}},{y_{i_{t_0+1}}}_{x_{t_0+1}},...,{y_{i_{k}}}_{x_{k}},x_{i_{k+1}},y_{i_{k+2}}\mid z)} \nonumber\\
    &&- \sum_{\substack{\{i_1,...,i_{t_1-1},i_{t_1+1},...,i_{k+2}\} \\ \in \{1,...,n\}^{k+1}}}{P({y_{i_1}}_{x_1},...,{y_{i_{t_1-1}}}_{x_{t_1-1}},}\nonumber\\
    &&{{y_{t_1}}_{x_{t_1}},{y_{i_{t_1+1}}}_{x_{t_1+1}},...,{y_{i_{k}}}_{x_{k}},x_{i_{k+1}},y_{i_{k+2}}\mid z)} \nonumber\\ 
    &&+ \sum_{\substack{\{i_1,...,i_{t_0-1},i_{t_0+1},...,i_{k}\} \\ \in \{1,...,n\}^{k-1}}}{P({y_{i_1}}_{x_1},...,{y_{i_{t_0-1}}}_{x_{t_0-1}},}\nonumber\\
    &&{{y_{t_0}}_{x_{t_0}},{y_{i_{t_0+1}}}_{x_{t_0+1}},...,{y_{i_{k}}}_{x_{k}}, x_{t_0}, y_{t_0}\mid z)} \nonumber\\
    &&+ \sum_{\substack{\{i_1,...,i_{t_1-1},i_{t_1+1},...,i_{k}\} \\ \in \{1,...,n\}^{k-1}}}{P({y_{i_1}}_{x_1},...,{y_{i_{t_1-1}}}_{x_{t_1-1}},}\nonumber\\
    &&{{y_{t_1}}_{x_{t_1}},{y_{i_{t_1+1}}}_{x_{t_1+1}},...,{y_{i_{k}}}_{x_{k}}, x_{t_1}, y_{t_1}\mid z)} \nonumber
\end{eqnarray}
\begin{eqnarray}
    &=& P({y_{t_0}}_{x_{t_0}}\mid z) + P({y_{t_1}}_{x_{t_1}}\mid z) - P({x_{t_0}, y_{t_0}}\mid z) \nonumber\\
    &&- P({x_{t_1}, y_{t_1}}\mid z) \nonumber\\
    &&- \sum_{\substack{\{i_1,...,i_{t_0-1},i_{t_0+1},...,i_{k+2}\} \\ \in \{1,...,n\}^{k+1}}}{P({y_{i_1}}_{x_1},...,{y_{i_{t_0-1}}}_{x_{t_0-1}},}\nonumber\\
    &&{{y_{t_0}}_{x_{t_0}},{y_{i_{t_0+1}}}_{x_{t_0+1}},...,{y_{i_{k}}}_{x_{k}},x_{i_{k+1}},y_{i_{k+2}}\mid z)} \nonumber\\
    &&+ \Bigg[\sum_{j=1}^{t_0-1}{P({y_{t_0}}_{x_{t_0}}, x_j, {y_{j+t_1-t_0}}_{x_{j+t_1-t_0}}\mid z)} 
    \nonumber\\
    &&+ \sum_{j=t_0+1}^{k-t_1+t_0} P({y_{t_0}}_{x_{t_0}}, x_{j}, {y_{j+t_1-t_0}}_{x_{j+t_1-t_0}}\mid z) \nonumber\\
    &&+ \sum_{j=1}^{t_1-t_0}{P({y_{t_0}}_{x_{t_0}}, x_{k-t_1+t_0+j}, {y_{j}}_{x_{j}}\mid z)} 
    \nonumber\\
    &&+ \sum_{j=k+1}^{n} P({y_{1}}_{x_{1}},...,{y_{k}}_{x_{k}}, x_{j}\mid z) \nonumber\\
    &&+ \sum_{\substack{\{i_1,...,i_{t_0-1},i_{t_0+1},...,i_{k}\} \\ \in \{1,...,n\}^{k-1}}}{P({y_{i_1}}_{x_1},...,{y_{i_{t_0-1}}}_{x_{t_0-1}},}\nonumber\\
    &&{{y_{t_0}}_{x_{t_0}},{y_{i_{t_0+1}}}_{x_{t_0+1}},...,{y_{i_{k}}}_{x_{k}}, x_{t_0}, y_{t_0}\mid z)} \Bigg] \nonumber\\
    &&- \sum_{\substack{\{i_1,...,i_{t_1-1},i_{t_1+1},...,i_{k+2}\} \\ \in \{1,...,n\}^{k+1}}}{P({y_{i_1}}_{x_1},...,{y_{i_{t_1-1}}}_{x_{t_1-1}},}\nonumber\\
    &&{{y_{t_1}}_{x_{t_1}},{y_{i_{t_1+1}}}_{x_{t_1+1}},...,{y_{i_{k}}}_{x_{k}},x_{i_{k+1}},y_{i_{k+2}}\mid z)} \nonumber\\
    &&+ \Bigg[{P({y_{t_0}}_{x_{t_0}}, x_{t_0}, {y_{t_1}}_{x_{t_1}}\mid z)} \nonumber\\
    &&+ \sum_{\substack{\{i_1,...,i_{t_1-1},i_{t_1+1},...,i_{k}\} \\ \in \{1,...,n\}^{k-1}}}{P({y_{i_1}}_{x_1},...,{y_{i_{t_1-1}}}_{x_{t_1-1}},}\nonumber\\
    &&{{y_{t_1}}_{x_{t_1}},{y_{i_{t_1+1}}}_{x_{t_1+1}},...,{y_{i_{k}}}_{x_{k}}, x_{t_1}, y_{t_1}\mid z)}\Bigg] \label{m=1}\\
    &\le& P({y_{t_0}}_{x_{t_0}}\mid z) + P({y_{t_1}}_{x_{t_1}}\mid z) - P({x_{t_0}, y_{t_0}}\mid z) \nonumber\\
    &&- P({x_{t_1}, y_{t_1}}\mid z) \label{t0t1}
\end{eqnarray}
Here, the equal sign holds when $\exists 1\le t_0 < t_1 \le k$, and $P({y_{t_0}}_{x_{t_0}}\mid z)=0$.\\
For $m = 2$, by applying equation (\ref{pns(k)1/m*m}), we can get:\\
\begin{eqnarray*}
    &&P({y_{1}}_{x_{1}},...,{y_{k}}_{x_{k}}\mid z)\\
    &=& \frac{1}{2} \times \left[2 \sum_{j=1}^{n} P({y_{1}}_{x_{1}},...,{y_{k}}_{x_{k}}, x_{j}\mid z)\right]\\
\end{eqnarray*}
\begin{eqnarray*}
    &=& \frac{1}{2} \times \Big[\sum_{j=1}^{n} P({y_{1}}_{x_{1}},...,{y_{k}}_{x_{k}}, x_{j}\mid z) \\
    && +\sum_{j=1}^{n} P({y_{1}}_{x_{1}},...,{y_{k}}_{x_{k}}, x_{j}\mid z)\Big]\\
\end{eqnarray*}
Then W.L.O.G., let $1 \le t_0 < t_1 < t_2 \le k$. By applying equation (\ref{m=1}), we have
\begin{eqnarray*}
    &&P({y_{1}}_{x_{1}},...,{y_{k}}_{x_{k}}) \\
    &\le& \frac{1}{2} \times \Bigg\{\Bigg[P({y_{t_0}}_{x_{t_0}}\mid z) + P({y_{t_1}}_{x_{t_1}}\mid z) \\
    &&- P({x_{t_0}, y_{t_0}}\mid z) - P({x_{t_1}, y_{t_1}}\mid z)\\
    &&- \sum_{\substack{\{i_1,...,i_{t_0-1},i_{t_0+1},...,i_{k+2}\} \\ \in \{1,...,n\}^{k+1}}}{P({y_{i_1}}_{x_1},...,{y_{i_{t_0-1}}}_{x_{t_0-1}},}\\
    &&{{y_{t_0}}_{x_{t_0}},{y_{i_{t_0+1}}}_{x_{t_0+1}},...,{y_{i_{k}}}_{x_{k}},x_{i_{k+1}},y_{i_{k+2}}\mid z)} \\
    &&+ \Bigg[\sum_{j=1}^{t_0-1}{P({y_{t_0}}_{x_{t_0}}, x_j, {y_{j+t_1-t_0}}_{x_{j+t_1-t_0}}\mid z)} \\
    &&
    + \sum_{j=t_0+1}^{k-t_1+t_0} P({y_{t_0}}_{x_{t_0}}, x_{j}, {y_{j+t_1-t_0}}_{x_{j+t_1-t_0}}\mid z) \\
    &&+ \sum_{j=1}^{t_1-t_0}{P({y_{t_0}}_{x_{t_0}}, x_{k-t_1+t_0+j}, {y_{j}}_{x_{j}}\mid z)} \\
    &&
    + \sum_{j=k+1}^{n} P({y_{1}}_{x_{1}},...,{y_{k}}_{x_{k}}, x_{j}\mid z) \\
    &&+ \sum_{\substack{\{i_1,...,i_{t_0-1},i_{t_0+1},...,i_{k}\} \\ \in \{1,...,n\}^{k-1}}}{P({y_{i_1}}_{x_1},...,{y_{i_{t_0-1}}}_{x_{t_0-1}},}\\
    &&{{y_{t_0}}_{x_{t_0}},{y_{i_{t_0+1}}}_{x_{t_0+1}},...,{y_{i_{k}}}_{x_{k}}, x_{t_0}, y_{t_0}\mid z)} \Bigg] \\
    &&- \sum_{\substack{\{i_1,...,i_{t_1-1},i_{t_1+1},...,i_{k+2}\} \\ \in \{1,...,n\}^{k+1}}}{P({y_{i_1}}_{x_1},...,{y_{i_{t_1-1}}}_{x_{t_1-1}},}\\
    &&{{y_{t_1}}_{x_{t_1}},{y_{i_{t_1+1}}}_{x_{t_1+1}},...,{y_{i_{k}}}_{x_{k}},x_{i_{k+1}},y_{i_{k+2}}\mid z)} \\
    &&+ \bigg[{P({y_{t_0}}_{x_{t_0}}, x_{t_0}, {y_{t_1}}_{x_{t_1}}\mid z)} \\
    &&+ \sum_{\substack{\{i_1,...,i_{t_1-1},i_{t_1+1},...,i_{k}\} \\ \in \{1,...,n\}^{k-1}}}{P({y_{i_1}}_{x_1},...,{y_{i_{t_1-1}}}_{x_{t_1-1}},}\\
    &&{{y_{t_1}}_{x_{t_1}},{y_{i_{t_1+1}}}_{x_{t_1+1}},...,{y_{i_{k}}}_{x_{k}}, x_{t_1}, y_{t_1}\mid z)}\bigg]\Bigg] \\
    &&+ \Bigg[P({y_{t_2}}_{x_{t_2}}\mid z) - P({x_{t_2}, y_{t_2}}\mid z)\\
\end{eqnarray*}
\begin{eqnarray*}
    &&- \sum_{\substack{\{i_1,...,i_{t_2-1},i_{t_2+1},...,i_{k+2}\} \\ \in \{1,...,n\}^{k+1}}}{P({y_{i_1}}_{x_1},...,{y_{i_{t_2-1}}}_{x_{t_2-1}},}\\
    &&{{y_{t_2}}_{x_{t_2}},{y_{i_{t_2+1}}}_{x_{t_2+1}},...,{y_{i_{k}}}_{x_{k}},x_{i_{k+1}},y_{i_{k+2}}\mid z)} \\ 
    &&+ \sum_{\substack{\{i_1,...,i_{t_2-1},i_{t_2+1},...,i_{k}\} \\ \in \{1,...,n\}^{k-1}}}{P({y_{i_1}}_{x_1},...,{y_{i_{t_2-1}}}_{x_{t_2-1}},}\\
    &&{{y_{t_2}}_{x_{t_2}},{y_{i_{t_2+1}}}_{x_{t_2+1}},...,{y_{i_{k}}}_{x_{k}}, x_{t_2}, y_{t_2}\mid z)} \\
    &&+ \sum_{\substack{1 \le j \le k \\ j \ne t_2}}{P({y_{1}}_{x_{1}},...,{y_{k}}_{x_{k}}, x_{j}\mid z)} \\
    &&+ {P({y_{1}}_{x_{1}},...,{y_{t_2-1}}_{x_{t_2+1}},{y_{t_2}}_{x_{t_2}},{y_{t_2+1}}_{x_{+1}},...,{y_{k}}_{x_{k}},}\\
    &&{x_{t_2}, y_{t_2}\mid z)} \\
    &&+ \sum_{j=k+1}^{n} P({y_{1}}_{x_{1}},...,{y_{k}}_{x_{k}}, x_{j}\mid z)\Bigg]\Bigg\}\\
    &\le& \frac{1}{2} \times \Bigg\{\Bigg[P({y_{t_0}}_{x_{t_0}}\mid z) + P({y_{t_1}}_{x_{t_1}}\mid z) \\
    &&- P({x_{t_0}, y_{t_0}}\mid z)- P({x_{t_1}, y_{t_1}}\mid z) \\
    &&- \sum_{\substack{\{i_1,...,i_{t_0-1},i_{t_0+1},...,i_{k+2}\} \\ \in \{1,...,n\}^{k+1}}}{P({y_{i_1}}_{x_1},...,{y_{i_{t_0-1}}}_{x_{t_0-1}},}\\
    &&{{y_{t_0}}_{x_{t_0}},{y_{i_{t_0+1}}}_{x_{t_0+1}},...,{y_{i_{k}}}_{x_{k}},x_{i_{k+1}},y_{i_{k+2}}\mid z)} \\
    &&+ \bigg[\sum_{j=1}^{t_0-1}{P({y_{t_0}}_{x_{t_0}}, x_j, {y_{j+t_1-t_0}}_{x_{j+t_1-t_0}}\mid z)}
    \\
    &&+ \sum_{j=t_0+1}^{k-t_1+t_0} P({y_{t_0}}_{x_{t_0}}, x_{j}, {y_{j+t_1-t_0}}_{x_{j+t_1-t_0}}\mid z) \\
    &&+ \sum_{j=1}^{t_1-t_0}{P({y_{t_0}}_{x_{t_0}}, x_{k-t_1+t_0+j}, {y_{j}}_{x_{j}}\mid z)}
    \\
    &&+ \sum_{j=k+1}^{n} P({y_{1}}_{x_{1}},...,{y_{k}}_{x_{k}}, x_{j}\mid z) \\
    &&+ \sum_{\substack{\{i_1,...,i_{t_0-1},i_{t_0+1},...,i_{k}\} \\ \in \{1,...,n\}^{k-1}}}{P({y_{i_1}}_{x_1},...,{y_{i_{t_0-1}}}_{x_{t_0-1}},}\\
    &&{{y_{t_0}}_{x_{t_0}},{y_{i_{t_0+1}}}_{x_{t_0+1}},...,{y_{i_{k}}}_{x_{k}}, x_{t_0}, y_{t_0}\mid z)} \bigg]\\
    &&- \sum_{\substack{\{i_1,...,i_{t_1-1},i_{t_1+1},...,i_{k+2}\} \\ \in \{1,...,n\}^{k+1}}}{P({y_{i_1}}_{x_1},...,{y_{i_{t_1-1}}}_{x_{t_1-1}},}\\
    &&{{y_{t_1}}_{x_{t_1}},{y_{i_{t_1+1}}}_{x_{t_1+1}},...,{y_{i_{k}}}_{x_{k}},x_{i_{k+1}},y_{i_{k+2}}\mid z)} \\
    &&+ \bigg[{P({y_{t_0}}_{x_{t_0}}, x_{t_0}, {y_{t_1}}_{x_{t_1}}\mid z)} \\
    &&+ \sum_{\substack{\{i_1,...,i_{t_1-1},i_{t_1+1},...,i_{k}\} \\ \in \{1,...,n\}^{k-1}}}{P({y_{i_1}}_{x_1},...,{y_{i_{t_1-1}}}_{x_{t_1-1}},}\\
\end{eqnarray*}
\begin{eqnarray*} 
    &&{{y_{t_1}}_{x_{t_1}},{y_{i_{t_1+1}}}_{x_{t_1+1}},...,{y_{i_{k}}}_{x_{k}}, x_{t_1}, y_{t_1}\mid z)}\bigg]\Bigg] \\
    &&+ \Bigg[P({y_{t_2}}_{x_{t_2}}\mid z) - P({x_{t_2}, y_{t_2}}\mid z)\\
    &&- \sum_{\substack{\{i_1,...,i_{t_2-1},i_{t_2+1},...,i_{k+2}\} \\ \in \{1,...,n\}^{k+1}}}{P({y_{i_1}}_{x_1},...,{y_{i_{t_2-1}}}_{x_{t_2-1}},}\\
    &&{{y_{t_2}}_{x_{t_2}},{y_{i_{t_2+1}}}_{x_{t_2+1}},...,{y_{i_{k}}}_{x_{k}},x_{i_{k+1}},y_{i_{k+2}}\mid z)}\\
    &&+ \sum_{\substack{\{i_1,...,i_{t_2-1},i_{t_2+1},...,i_{k}\} \\ \in \{1,...,n\}^{k-1}}}{P({y_{i_1}}_{x_1},...,{y_{i_{t_2-1}}}_{x_{t_2-1}},}\\
    &&{{y_{t_2}}_{x_{t_2}},{y_{i_{t_2+1}}}_{x_{t_2+1}},...,{y_{i_{k}}}_{x_{k}}, x_{t_2}, y_{t_2}\mid z)} \\
    &&+ \sum_{\substack{1 \le j \le k \\ j \ne t_2}}{P({y_{1}}_{x_{1}},...,{y_{k}}_{x_{k}}, x_{j}\mid z)} \\
    &&+ {P({y_{t_1}}_{x_{t_1}}, x_{t_2}, {y_{t_2}}_{x_{t_2}}\mid z)} \\
    &&+ \sum_{j=k+1}^{n} P({y_{1}}_{x_{1}},...,{y_{k}}_{x_{k}}, x_{j}\mid z)\Bigg]\Bigg\}\\
    &&\text{here, the equal sign holds when }\exists 1 \le t_1 < t_2 \le k,\\
    &&\forall P({y_j}_{x_j}\mid z) = 0 \text{ for } j \in [1, k], j\ne t_1.\\
    &=& \frac{1}{2} \times \Bigg\{\Bigg[P({y_{t_0}}_{x_{t_0}}\mid z) + P({y_{t_1}}_{x_{t_1}}\mid z) \\
    &&+ P({y_{t_2}}_{x_{t_2}}\mid z) - P({x_{t_0}, y_{t_0}}\mid z) \\
    &&- P({x_{t_1}, y_{t_1}}\mid z) - P({x_{t_2}, y_{t_2}}\mid z) \\
    &&- \sum_{\substack{\{i_1,...,i_{t_0-1},i_{t_0+1},...,i_{k+2}\} \\ \in \{1,...,n\}^{k+1}}}{P({y_{i_1}}_{x_1},...,{y_{i_{t_0-1}}}_{x_{t_0-1}},}\\
    &&{{y_{t_0}}_{x_{t_0}},{y_{i_{t_0+1}}}_{x_{t_0+1}},...,{y_{i_{k}}}_{x_{k}},x_{i_{k+1}},y_{i_{k+2}}\mid z)} \\
    &&+ \bigg[\sum_{j=1}^{t_0-1}{P({y_{t_0}}_{x_{t_0}}, x_j, {y_{j+t_1-t_0}}_{x_{j+t_1-t_0}}\mid z)}
    \\
    &&+ \sum_{j=t_0+1}^{k-t_1+t_0} P({y_{t_0}}_{x_{t_0}}, x_{j}, {y_{j+t_1-t_0}}_{x_{j+t_1-t_0}}\mid z) \\
    &&+ \sum_{j=1}^{t_1-t_0}{P({y_{t_0}}_{x_{t_0}}, x_{k-t_1+t_0+j}, {y_{j}}_{x_{j}}\mid z)} \\
    &&+ \sum_{j=k+1}^{n} P({y_{1}}_{x_{1}},...,{y_{k}}_{x_{k}}, x_{j}\mid z) \\
    &&+ \sum_{\substack{\{i_1,...,i_{t_0-1},i_{t_0+1},...,i_{k}\} \\ \in \{1,...,n\}^{k-1}}}{P({y_{i_1}}_{x_1},...,{y_{i_{t_0-1}}}_{x_{t_0-1}},}\\
    &&{{y_{t_0}}_{x_{t_0}},{y_{i_{t_0+1}}}_{x_{t_0+1}},...,{y_{i_{k}}}_{x_{k}}, x_{t_0}, y_{t_0}\mid z)} \bigg]
\end{eqnarray*}
\begin{eqnarray*}   
    &&- \sum_{\substack{\{i_1,...,i_{t_1-1},i_{t_1+1},...,i_{k+2}\} \\ \in \{1,...,n\}^{k+1}}}{P({y_{i_1}}_{x_1},...,{y_{i_{t_1-1}}}_{x_{t_1-1}},}\\
    &&{{y_{t_1}}_{x_{t_1}},{y_{i_{t_1+1}}}_{x_{t_1+1}},...,{y_{i_{k}}}_{x_{k}},x_{i_{k+1}},y_{i_{k+2}}\mid z)} \\
    &&+ \bigg[{P({y_{t_0}}_{x_{t_0}}, x_{t_0}, {y_{t_1}}_{x_{t_1}}\mid z)} \\
    &&+ {P({y_{t_1}}_{x_{t_1}}, x_{t_2}, {y_{t_2}}_{x_{t_2}}\mid z)} \\
    &&+ \sum_{\substack{\{i_1,...,i_{t_1-1},i_{t_1+1},...,i_{k}\} \\ \in \{1,...,n\}^{k-1}}}{P({y_{i_1}}_{x_1},...,{y_{i_{t_1-1}}}_{x_{t_1-1}},}\\
    &&{{y_{t_1}}_{x_{t_1}},{y_{i_{t_1+1}}}_{x_{t_1+1}},...,{y_{i_{k}}}_{x_{k}}, x_{t_1}, y_{t_1}\mid z)}\bigg]\Bigg] \\
    &&- \sum_{\substack{\{i_1,...,i_{t_2-1},i_{t_2+1},...,i_{k+2}\} \\ \in \{1,...,n\}^{k+1}}}{P({y_{i_1}}_{x_1},...,{y_{i_{t_2-1}}}_{x_{t_2-1}},}\\
    &&{{y_{t_2}}_{x_{t_2}},{y_{i_{t_2+1}}}_{x_{t_2+1}},...,{y_{i_{k}}}_{x_{k}},x_{i_{k+1}},y_{i_{k+2}}\mid z)} \\ 
    &&+ \Bigg[\sum_{\substack{\{i_1,...,i_{t_2-1},i_{t_2+1},...,i_{k}\} \\ \in \{1,...,n\}^{k-1}}}{P({y_{i_1}}_{x_1},...,{y_{i_{t_2-1}}}_{x_{t_2-1}},}\\
    &&{{y_{t_2}}_{x_{t_2}},{y_{i_{t_2+1}}}_{x_{t_2+1}},...,{y_{i_{k}}}_{x_{k}}, x_{t_2}, y_{t_2}\mid z)} \\
    &&+ \sum_{\substack{1 \le j \le k \\ j \ne t_2}}{P({y_{1}}_{x_{1}},...,{y_{k}}_{x_{k}}, x_{j}\mid z)} \\
    &&+ \sum_{j=k+1}^{n} P({y_{1}}_{x_{1}},...,{y_{k}}_{x_{k}}, x_{j}\mid z)\Bigg]\Bigg\}\\
    &\le& \frac{1}{2} \times \Big[P({y_{t_0}}_{x_{t_0}}\mid z) + P({y_{t_1}}_{x_{t_1}}\mid z) \\
    &&+ P({y_{t_2}}_{x_{t_2}}\mid z) - P({x_{t_0}, y_{t_0}}\mid z) \\
    &&- P({x_{t_1}, y_{t_1}}\mid z) - P({x_{t_2}, y_{t_2}}\mid z)\Big]
\end{eqnarray*}
Here, the equal sign holds when $\exists 1\le t_0 < t_1 \le k$, and $P({y_{t_0}}_{x_{t_0}}\mid z)=P({y_{t_1}}_{x_{t_1}}\mid z)=0$. This is just a sufficient condition illustrating tightness, while the equality can also hold in other cases.

For $3 \le m \le k-1$, W.L.O.G., let $1 \le t_0 < ... < t_{m} \le k$. 
By applying equation (\ref{pns(k)1/m*m}), we can get
\begin{eqnarray*}
    &&P({y_{1}}_{x_{1}},...,{y_{k}}_{x_{k}}\mid z) \\
    &=& \frac{1}{m} \times \Big[\sum_{j=1}^{n} P({y_{1}}_{x_{1}},...,{y_{k}}_{x_{k}}, x_{j}\mid z) \\
    &&+ (m-1) \sum_{j=1}^{n} P({y_{1}}_{x_{1}},...,{y_{k}}_{x_{k}}, x_{j}\mid z)\Big]\\
\end{eqnarray*}
By applying equation (\ref{t0t1}), we have
\begin{eqnarray*}
    &&P({y_{1}}_{x_{1}},...,{y_{k}}_{x_{k}}\mid z)\\
    &\le& \frac{1}{m} \times \Bigg[P({y_{t_0}}_{x_{t_0}}\mid z) + P({y_{t_1}}_{x_{t_1}}\mid z) \\ 
    &&- P({x_{t_0}, y_{t_0}}\mid z) - P({x_{t_1}, y_{t_1}}\mid z) \\ 
    &&+ (m-1) \sum_{j=1}^{n} P({y_{1}}_{x_{1}},...,{y_{k}}_{x_{k}}, x_{j}\mid z)\Bigg]\\
\end{eqnarray*}
Similar to the proof when $m = 2$, we can derive results for $3\le m\le k-1$:
\begin{eqnarray*}
    &&P({y_{1}}_{x_{1}},...,{y_{k}}_{x_{k}}\mid z)\\
    &\le& \frac{1}{m} \times \Bigg[P({y_{t_0}}_{x_{t_0}}\mid z) + P({y_{t_1}}_{x_{t_1}}\mid z) \\ 
    &&- P({x_{t_0}, y_{t_0}}\mid z) - P({x_{t_1}, y_{t_1}}\mid z) \\ 
    &&+ \sum_{j=2}^{m} \left[P({y_{t_j}}_{x_{t_j}}\mid z) - P({x_{t_j}, y_{t_j}}\mid z)\right]\Bigg]\\
    &=& \frac{1}{m} \times \sum_{j=0}^{m} \left[P({y_{t_j}}_{x_{t_j}}\mid z) - P({x_{t_j}, y_{t_j}}\mid z)\right]
\end{eqnarray*}
Overall, we have proved the third bound for $m\in [1,k-1]$.\\ \\
By substituting 
\begin{align*}
{\max \left \{
\begin{array}{cc}
0, \\
\\
\displaystyle \sum_{j = 1}^{k}P({y_{j}}_{x_{j}}\mid z) - k + 1, \\
\\
\text{For }i \in  \{1, ..., k\}:\\
\quad \displaystyle \sum_{\substack{1 \le j \le k \\ j \ne i}}
\Big[P({y_{j}}_{x_{j}}\mid z)+P({x_{j}\mid z})-\\
-P({x_{j}, y_{j}}\mid z)\Big]+ P({x_{i}, y_{i}}\mid z) - k + 1
\end{array}
\right \}\nonumber
} \\ \le  P({y_{1}}_{x_{1}},...,{y_{k}}_{x_{k}}\mid z)
\end{align*}
\begin{align*}
{\min \left \{
\begin{array}{cc}
\displaystyle \sum_{j = 1}^{k}P({x_{j}, y_{j}}\mid z) +\displaystyle\sum_{j = k+1}^{n}P({x_{j}}\mid z), \\
\text{For }j \in \{1,...,k\}:\\
P({y_{j}}_{x_{j}}\mid z),\\
\\
\text{For }m\in\{1,...,k-1\},\\
t_j\in\{1,...,k\}:\\
\displaystyle \frac{1}{m}\Big[\sum_{j = 0}^{m}P({y_{t_j}}_{x_{t_j}}\mid z) -P({x_{t_j}, y_{t_j}\mid z})\Big]
\end{array} 
\right \}\nonumber
} \\ \ge  P({y_{1}}_{x_{1}},...,{y_{k}}_{x_{k}}\mid z)
\end{align*}
into equation~(\ref{pnsk1*z}), and applying law of total probability, we have
\begin{eqnarray*}
    &&\sum_{z}{P({y_{1}}_{x_{1}},...,{y_{k}}_{x_{k}}\mid z)}\times P(z) \\
    &\ge& \sum_{z}{0}\times P(z) \\
    &=& 0 \\
    &&\sum_{z}{P({y_{1}}_{x_{1}},...,{y_{k}}_{x_{k}}\mid z)}\times P(z) \\
    &\ge& \sum_{z}{\left[\displaystyle \sum_{j = 1}^{k}P({y_{j}}_{x_{j}}\mid z) - k + 1\right]}\times P(z) \\
    &=& \displaystyle \sum_{j = 1}^{k}P({y_{j}}_{x_{j}}) - k + 1\\
    &&\sum_{z}{P({y_{1}}_{x_{1}},...,{y_{k}}_{x_{k}}\mid z)}\times P(z) \\
    &\ge& \sum_{z}{\Bigg\{\displaystyle \sum_{\substack{1 \le j \le k \\ j \ne i}} \Big[P({y_{j}}_{x_{j}}\mid z)+ P({x_{j}\mid z})}\\
    &&{-P({x_{j}, y_{j}}\mid z)\Big]+ P({x_{i}, y_{i}}\mid z) - k + 1\Bigg\}}\times P(z) \\
    &=& \displaystyle \sum_{\substack{1 \le j \le k \\ j \ne i}} \left[P({y_{j}}_{x_{j}})+P({x_{j}})-P({x_{j}, y_{j}})\right] \\
    &&+ P({x_{i}, y_{i}}) - k + 1
\end{eqnarray*}

\begin{eqnarray*}
    &&\sum_{z}{P({y_{1}}_{x_{1}},...,{y_{k}}_{x_{k}}\mid z)}\times P(z)\\ &\le& \sum_{z}{\left[\displaystyle \sum_{j = 1}^{k}P({x_{j}, y_{j}}\mid z) + \sum_{j = k+1}^{n}P({x_{j}}\mid z)\right]}\times P(z) \\
    &=& \displaystyle \sum_{j = 1}^{k}P({x_{j}, y_{j}}) + \sum_{j = k+1}^{n}P({x_{j}}) \\
    &&\sum_{z}{P({y_{1}}_{x_{1}},...,{y_{k}}_{x_{k}}\mid z)}\times P(z) \\&\le& \sum_{z}{P({y_{j}}_{x_{j}}\mid z)}\times P(z) \\
    &=& P({y_{j}}_{x_{j}})\\
    &&\sum_{z}{P({y_{1}}_{x_{1}},...,{y_{k}}_{x_{k}}\mid z)}\times P(z) \\&\le& \sum_{z}{\left\{\displaystyle \frac{1}{m}\left[\sum_{j = 0}^{m}P({y_{t_j}}_{x_{t_j}}\mid z) - P({x_{t_j}, y_{t_j}\mid z})\right]\right\}}\\
    &&\times P(z) \\
    &=& \displaystyle \frac{1}{m}\left[\sum_{j = 0}^{m}P({y_{t_j}}_{x_{t_j}}) - P({x_{t_j}, y_{t_j}})\right] \\
\end{eqnarray*}
Thus, we can guarantee that the proposed lower and upper bounds are no looser than the lower and upper bounds in \citet{shu2026identificationprobabilitiescausationrecursive}. Following the notation in \citet{mueller2021causeseffectslearningindividual}, $P({y_j}_{x_j}\mid z)$ denotes experimental data within the subpopulation characterized by $z$.
\end{proof}

\subsubsection{Proof of Theorem~\ref{nnk+x_p1}}
\begin{proof}
\begin{eqnarray}
    PSub(k,p) &=& P({y_{1}}_{x_{1}},...,{y_{k}}_{x_{k}},x_p)\nonumber\\
    &=& \sum_{z}{P({y_{1}}_{x_{1}},...,{y_{k}}_{x_{k}},x_p\mid z)}\nonumber \\
    &&\times P(z) \label{nnkxp1}
\end{eqnarray}

First, we need to prove:
\begin{align*}
{\max \left \{
\begin{array}{cc}
0, \\
\displaystyle \sum_{j=1}^{k}\Big[P({y_{j}}_{x_{j}}\mid z)+P({x_{j}}\mid z)-\\-P({x_{j}, y_{j}}\mid z)\Big] + P({x_{p}}\mid z) - k
\end{array}
\right \}\nonumber
} \\ \le {P({y_{1}}_{x_{1}},...,{y_{k}}_{x_{k}},x_p\mid z)}
\end{align*}
\begin{align*}
{\min \left \{
\begin{array}{cc}
P({x_{p}}\mid z),\\
\\
\text{For } j \in \{1,...,k\}:\\
P({y_{j}}_{x_{j}}\mid z) - P({x_{j}, y_{j}}\mid z)
\end{array} 
\right \}\nonumber
} \\ \ge {P({y_{1}}_{x_{1}},...,{y_{k}}_{x_{k}},x_p\mid z)}
\end{align*}

By the Fréchet Inequalities, for any $z$ with $P(z)>0$, we can obtain the first lower bound and the first upper bound, 
\begin{eqnarray*}
P({y_{1}}_{x_{1}},...,{y_{k}}_{x_{k}},{x_{p}}\mid z)&\ge& 0\\
P({y_{1}}_{x_{1}},...,{y_{k}}_{x_{k}},{x_{p}}\mid z)&\le& {P({x_{p}}\mid z)}.
\end{eqnarray*}
The equality of the first lower bound holds when 
$\exists j\in [1,k], \text{that } P({y_j}_{x_j}\mid z)=0$ or $P(x_p\mid z) = 0$, $p\ne j$.

The equality of the first upper bound holds when $P({y_{j}}_{x_{j}}\mid z) = 1$ for $\forall j \in [1,k]$, $j\ne p$.

For the second lower bound
\begin{eqnarray*}
&&P({y_{1}}_{x_{1}},...,{y_{k}}_{x_{k}}, x_p\mid z)\\
&=& P({y_{1}}_{x_{1}},...,{y_{k}}_{x_{k}}, x_p\mid z)\\ 
&&+ \sum_{j=1}^{n}{P({y_{1}}_{x_{1}},...,{y_{k}}_{x_{k}}, x_j\mid z)} \\
&&- \sum_{j=1}^{n}{P({y_{1}}_{x_{1}},...,{y_{k}}_{x_{k}}, x_j\mid z)}\\
&=& {P({y_{1}}_{x_{1}},...,{y_{k}}_{x_{k}}\mid z)} + P({y_{1}}_{x_{1}},...,{y_{k}}_{x_{k}}, x_p\mid z) \\
&&- \sum_{j=1}^{n}{P({y_{1}}_{x_{1}},...,{y_{k}}_{x_{k}}, x_j\mid z)}
\end{eqnarray*}
\begin{eqnarray*}
&=& {P({y_{1}}_{x_{1}},...,{y_{k}}_{x_{k}}\mid z)} + P({y_{1}}_{x_{1}},...,{y_{k}}_{x_{k}}, x_p\mid z) \\
&&+ \sum_{j=1}^{n}{P(x_j\mid z)} - 1 - \sum_{j=1}^{n}{P({y_{1}}_{x_{1}},...,{y_{k}}_{x_{k}}, x_j\mid z)}\\
&=& {P({y_{1}}_{x_{1}},...,{y_{k}}_{x_{k}}\mid z)} - 1\\
&&+ P({y_{1}}_{x_{1}},...,{y_{k}}_{x_{k}}, x_p\mid z) \\
&&+ \sum_{j=1}^{k}{P(x_j\mid z)} - \sum_{j=1}^{k}{P({y_{1}}_{x_{1}},...,{y_{k}}_{x_{k}}, x_j\mid z)} \\
&&+ \sum_{j=k+1}^{n}{P(x_j\mid z)} \\
&&- \sum_{j=k+1}^{n}{P({y_{1}}_{x_{1}},...,{y_{k}}_{x_{k}}, x_j\mid z)}\\
&=& {P({y_{1}}_{x_{1}},...,{y_{k}}_{x_{k}}\mid z)} - 1 \\
&&+ \sum_{j=1}^{k}{P(x_j\mid z)} \\
&&- \sum_{j=1}^{k}{P({y_{1}}_{x_{1}},...,{y_{j-1}}_{x_{j-1}},{y_{j+1}}_{x_{j+1}},...,{y_{k}}_{x_{k}},}\\
&&{x_j, y_j\mid z)} \\
&&+ \sum_{j=k+1}^{n}{P(x_j\mid z)} \\
&&- \sum_{j=k+1}^{n}{P({y_{1}}_{x_{1}},...,{y_{k}}_{x_{k}}, x_j\mid z)} \\
&&+ P({y_{1}}_{x_{1}},...,{y_{k}}_{x_{k}}, x_p\mid z)\\
&=& {P({y_{1}}_{x_{1}},...,{y_{k}}_{x_{k}}\mid z)} - 1 \\
&&+ \sum_{j=1}^{k}{P(x_j\mid z)} \\
&&- \sum_{j=1}^{k}{P({y_{1}}_{x_{1}},...,{y_{j-1}}_{x_{j-1}},{y_{j+1}}_{x_{j+1}},...,{y_{k}}_{x_{k}},}\\
&&{x_j, y_j\mid z)} \\
&&+ \sum_{j=k+1}^{n}{P(x_j\mid z)} \\
&&- \sum_{\substack{k+1 \le j \le n, \\ j \ne p}}{P({y_{1}}_{x_{1}},...,{y_{k}}_{x_{k}}, x_j\mid z)} \\
&\ge& \sum_{j=1}^{k}{P({y_j}_{x_j}\mid z)} - (k-1) - 1 + \sum_{j=1}^{k}{P(x_j\mid z)} \\
&&- \sum_{j=1}^{k}{P(x_j, y_j\mid z)} + \sum_{j=k+1}^{n}{P(x_j\mid z)} \\
\end{eqnarray*}
\begin{eqnarray*}
&&- \sum_{\substack{k+1 \le j \le n, \\ j \ne p}}{P(x_j\mid z)}\\
&&\text{here, the equal sign holds when }
\exists j \in [1,n], \\
&&\text{ and } P({y_i}_{x_i}\mid z)=1 \text{ for } \forall i \in [1,k], i\ne j.\\
&=& \sum_{j=1}^{k}{P({y_j}_{x_j}\mid z)} - k + \sum_{j=1}^{k}{P(x_j\mid z)} \\
&&- \sum_{j=1}^{k}{P(x_j, y_j\mid z)} + P(x_p\mid z)\\
&=& \sum_{j=1}^{k}\left[P({y_{j}}_{x_{j}}\mid z)+P({x_{j}}\mid z)-P({x_{j}, y_{j}}\mid z)\right] \\
&&+ P({x_{p}}\mid z) - k
\end{eqnarray*}
The equality of the second lower bound holds when $\exists j \in [1,n], P({y_i}_{x_i}\mid z)=1 \text{ for } \forall i \in [1,k], i\ne j$.\\

For the remaining upper bounds, $\forall j \in [1,k]$:
\begin{eqnarray*}
&&P({y_{1}}_{x_{1}},...,{y_{k}}_{x_{k}}, x_p\mid z)\\
&=& P({y_{1}}_{x_{1}},...,{y_{k}}_{x_{k}}, x_p\mid z) + P({y_j}_{x_j}\mid z)\\
&&- P({y_j}_{x_j}\mid z)\\
&=& P({y_{1}}_{x_{1}},...,{y_{k}}_{x_{k}}, x_p\mid z) + P({y_j}_{x_j}\mid z) \\
&&- \sum_{\substack{\{i_1,...,i_{j-1},i_{j+1},...,i_{k+1}\} \\ \in \{1,...,n\}^{k}}}{P({y_{i_1}}_{x_1},...,{y_{i_{j-1}}}_{x_{j-1}},}\\
&&{{y_{j}}_{x_{j}},{y_{i_{j+1}}}_{x_{j+1}},...,{y_{i_{k}}}_{x_{k}},x_{i_{k+1}}\mid z)}
\end{eqnarray*}
Since $p \ne j$ for $1\le j \le k$, 
\begin{eqnarray*}
&&P({y_{1}}_{x_{1}},...,{y_{k}}_{x_{k}}, x_p\mid z)\\
&\le& P({y_j}_{x_j}\mid z) \\
&&- \sum_{\substack{\{i_1,...,i_{j-1},i_{j+1},...,i_{k}\} \\ \in \{1,...,n\}^{k-1}}}{P({y_{i_1}}_{x_1},...,{y_{i_{j-1}}}_{x_{j-1}},{y_{j}}_{x_{j}},}\\
&&{{y_{i_{j+1}}}_{x_{j+1}},...,{y_{i_{k}}}_{x_{k}},x_{j}\mid z)}\\
&&\text{here, the equal sign holds when } P({x_j}\mid z)=0 \\
&&\text{ for } \forall j \in [k+1, n].\\
&=& P({y_j}_{x_j}\mid z) - P({y_{j}}_{x_{j}}, x_{j}\mid z)\\
&=& P({y_{j}}_{x_{j}}\mid z) - P({x_{j}, y_{j}}\mid z)
\end{eqnarray*}
The equality of the second upper bound holds when $P({x_j}\mid z)=0 \text{ for } \forall j \in [k+1, n]$.\\ \\
By substituting 
\begin{align*}
{\max \left \{
\begin{array}{cc}
0, \\
\displaystyle \sum_{j=1}^{k}\Big[P({y_{j}}_{x_{j}}\mid z)+P({x_{j}}\mid z)-\\-P({x_{j}, y_{j}}\mid z)\Big] + P({x_{p}}\mid z) - k
\end{array}
\right \}\nonumber
} \\ \le {P({y_{1}}_{x_{1}},...,{y_{k}}_{x_{k}},x_p\mid z)}
\end{align*}
\begin{align*}
{\min \left \{
\begin{array}{cc}
P({x_{p}}\mid z),\\
\\
\text{For } j \in \{1,...,k\}:\\
P({y_{j}}_{x_{j}}\mid z) - P({x_{j}, y_{j}}\mid z)
\end{array} 
\right \}\nonumber
} \\ \ge {P({y_{1}}_{x_{1}},...,{y_{k}}_{x_{k}},x_p\mid z)}
\end{align*}
into equation~(\ref{nnkxp1}), and applying law of total probability, we have
\begin{eqnarray*}
    &&\sum_{z}{P({y_{1}}_{x_{1}},...,{y_{k}}_{x_{k}},x_p\mid z)} \times P(z)\\
    &\ge& \sum_{z}{0}\times P(z) \\
    &=& 0 \\
    &&\sum_{z}{P({y_{1}}_{x_{1}},...,{y_{k}}_{x_{k}},x_p\mid z)} \times P(z)\\
    &\ge& \sum_{z}{\Bigg\{\displaystyle \sum_{j=1}^{k}\Big[P({y_{j}}_{x_{j}}\mid z)+P({x_{j}}\mid z)}\\
    &&{-P({x_{j}, y_{j}}\mid z)\Big] + P({x_{p}}\mid z) - k\Bigg\}}\times P(z) \\
    &=& \displaystyle \sum_{j=1}^{k}\left[P({y_{j}}_{x_{j}})+P({x_{j}})-P({x_{j}, y_{j}})\right] \\
    &&+ P({x_{p}}) - k
\end{eqnarray*}

\begin{eqnarray*}
    &&\sum_{z}{P({y_{1}}_{x_{1}},...,{y_{k}}_{x_{k}},x_p\mid z)} \times P(z)\\ 
    &\le& P({x_{p}}\mid z) \times P(z) \\
    &=& P({x_{p}}) \\
    &&\sum_{z}{P({y_{1}}_{x_{1}},...,{y_{k}}_{x_{k}},x_p\mid z)} \times P(z) \\
    &\le& \left[P({y_{j}}_{x_{j}}\mid z) - P({x_{j}, y_{j}}\mid z)\right]\times P(z) \\
    &=& P({y_{j}}_{x_{j}}) - P({x_{j}, y_{j}})\\
\end{eqnarray*}
Thus, we can guarantee that the proposed lower and upper bounds are no looser than those of \citet{shu2026identificationprobabilitiescausationrecursive}. Following the notation in \citet{mueller2021causeseffectslearningindividual}, $P({y_j}_{x_j}\mid z)$ denotes experimental data within the subpopulation characterized by $z$.
\end{proof}

\subsubsection{Proof of Theorem~\ref{nnk+y_q1}}
\begin{proof}
\begin{eqnarray}
    PRep(k,q) &=& P({y_{1}}_{x_{1}},...,{y_{k}}_{x_{k}}, y_q)\nonumber\\
    &=& \sum_{z}{P({y_{1}}_{x_{1}},...,{y_{k}}_{x_{k}},y_q\mid z)}\nonumber \\
    &&\times P(z) \label{nnkyq1}
\end{eqnarray}

First, we need to prove:
\begin{align*}
{\max \left \{
\begin{array}{cc}
0, \\
\\
\displaystyle \sum_{j=1}^{k}\Big[P({y_{j}}_{x_{j}}\mid z) +P({x_{j}}\mid z)-\\
-P({x_{j}, y_{j}}\mid z)\Big]
+\displaystyle \sum_{\substack{k+1 \le j \le n \\ j \ne q}}{P({x_{j}, y_{q}}\mid z)} +\\
+ P({x_{q}, y_{q}}\mid z) - k,\\
\\
\text{If } q \in \{1,...,k\}:\\
\displaystyle \sum_{\substack{1 \le j \le k \\ j \ne q}}\Big[P({y_{j}}_{x_{j}}\mid z)+P({x_{j}}\mid z)-\\
-P({x_{j}, y_{j}}\mid z)\Big]+ P({x_{q}, y_{q}}\mid z) - (k-1) 
\end{array}
\right \}\nonumber
} \\ \le  P({y_{1}}_{x_{1}},...,{y_{k}}_{x_{k}},y_q\mid z)
\end{align*}
\begin{align*}
{\min \left \{
\begin{array}{cc}
\text{If } q \in \{1,...,k\}:\\
P({y_{q}}_{x_{q}}\mid z), \\
\\
P({x_{q}, y_{q}}\mid z) +\displaystyle \sum_{\substack{k+1 \le j \le n \\ j \ne q}}P({x_{j}, y_{q}}\mid z),\\
\\
\text{For }j \in \{1,...,k\},\text{ and } j \ne q\\
P({y_{j}}_{x_{j}}\mid z) - P({x_{j}, y_{j}}\mid z),\\
\end{array} 
\right \}\nonumber
} \\ \ge  P({y_{1}}_{x_{1}},...,{y_{k}}_{x_{k}},y_q\mid z)
\end{align*}

By the Fréchet Inequalities, for any $z$ with $P(z)>0$, we can obtain the first lower bound and the first upper bound, 
\begin{eqnarray*}
P({y_{1}}_{x_{1}},...,{y_{k}}_{x_{k}},{y_{q}}\mid z)&\ge& 0\\
P({y_{1}}_{x_{1}},...,{y_{k}}_{x_{k}},{y_{q}}\mid z)&\le& {P({y_{q}}_{x_{q}}\mid z)}, \forall 1 \le q \le k.
\end{eqnarray*}
The equality of the first lower bound holds when
$\exists j\in [1,k], \text{that } P({y_j}_{x_j}\mid z)=0$ or $P(y_q\mid z) = 0$.

The equality of the first upper bound holds when $P({y_{j}}_{x_{j}}\mid z) = 1, P(y_q\mid z) = 1$ for $\forall j \in [1,k], j\ne q$.

For the second lower bound
\begin{eqnarray*}
&&P({y_{1}}_{x_{1}},...,{y_{k}}_{x_{k}}, y_q\mid z)\\
&=& P({y_{1}}_{x_{1}},...,{y_{k}}_{x_{k}}, y_q\mid z) + P({y_{1}}_{x_{1}},...,{y_{k}}_{x_{k}}\mid z) \\
&&- \sum_{j=1}^{n}\sum_{l=1}^{n}{P({y_{1}}_{x_{1}},...,{y_{k}}_{x_{k}}, x_j, y_l\mid z)}\\
&=& P({y_{1}}_{x_{1}},...,{y_{k}}_{x_{k}}, y_q\mid z) + P({y_{1}}_{x_{1}},...,{y_{k}}_{x_{k}}\mid z) \\
&&- \sum_{j=1}^{k}{P({y_{1}}_{x_{1}},...,{y_{k}}_{x_{k}}, x_j, y_j\mid z)} \\
&&- \sum_{j=k+1}^{n}\sum_{l=1}^{n}{P({y_{1}}_{x_{1}},...,{y_{k}}_{x_{k}}, x_j, y_l\mid z)}
\end{eqnarray*}
\begin{eqnarray*}
&=& P({y_{1}}_{x_{1}},...,{y_{k}}_{x_{k}}, y_q\mid z) + P({y_{1}}_{x_{1}},...,{y_{k}}_{x_{k}}\mid z) \\
&&+ \sum_{j=1}^{n}{P(x_j\mid z)} - 1\\
&&- \sum_{j=1}^{k}{P({y_{1}}_{x_{1}},...,{y_{k}}_{x_{k}}, x_j, y_j\mid z)} \\
&&- \sum_{j=k+1}^{n}\sum_{l=1}^{n}{P({y_{1}}_{x_{1}},...,{y_{k}}_{x_{k}}, x_j, y_l\mid z)}\\
&=& \sum_{j=1}^{n}{P({y_{1}}_{x_{1}},...,{y_{k}}_{x_{k}}, x_j,y_q\mid z)} \\
&&+ P({y_{1}}_{x_{1}},...,{y_{k}}_{x_{k}}\mid z) - 1\\
&&+ \sum_{j=1}^{k}{P(x_j\mid z)} - \sum_{j=1}^{k}{P({y_{1}}_{x_{1}},...,{y_{k}}_{x_{k}}, x_j, y_j\mid z)} \\
&&+ \sum_{j=k+1}^{n}{P(x_j\mid z)} \\
&&- \sum_{j=k+1}^{n}\sum_{l=1}^{n}{P({y_{1}}_{x_{1}},...,{y_{k}}_{x_{k}}, x_j, y_l\mid z)}
\end{eqnarray*}
If $q\in [1,k]$,
\begin{eqnarray*}
&&P({y_{1}}_{x_{1}},...,{y_{k}}_{x_{k}}, y_q\mid z)\\
&=& P({y_{1}}_{x_{1}},...,{y_{k}}_{x_{k}}\mid z) - 1\\
&&+ \sum_{j=1}^{k}{P(x_j\mid z)} - \sum_{j=1}^{k}{P({y_{1}}_{x_{1}},...,{y_{k}}_{x_{k}}, x_j, y_j\mid z)} \\
&&+ {P({y_{1}}_{x_{1}},...,{y_{k}}_{x_{k}}, x_q,y_q\mid z)} + \sum_{j=k+1}^{n}{P(x_j\mid z)} \\
&&- \sum_{j=k+1}^{n}\sum_{l=1}^{n}{P({y_{1}}_{x_{1}},...,{y_{k}}_{x_{k}}, x_j, y_l\mid z)} \\
&&+ \sum_{j=k+1}^{n}{P({y_{1}}_{x_{1}},...,{y_{k}}_{x_{k}}, x_j,y_q\mid z)}\\
&=& P({y_{1}}_{x_{1}},...,{y_{k}}_{x_{k}}\mid z) - 1\\
&&+ \sum_{j=1}^{k}{P(x_j\mid z)} \\
&&- \sum_{\substack{1\le j \le k,\\ j\ne q}}{P({y_{1}}_{x_{1}},...,{y_{k}}_{x_{k}}, x_j, y_j\mid z)}\\
&&+ \sum_{j=k+1}^{n}{P(x_j\mid z)} \\
&&- \sum_{j=k+1}^{n}\sum_{\substack{1\le l \le n,\\l\ne q}}{P({y_{1}}_{x_{1}},...,{y_{k}}_{x_{k}}, x_j, y_l\mid z)} 
\end{eqnarray*}
\begin{eqnarray*}
&\ge& \sum_{j=1}^{k}{P({y_j}_{x_j}\mid z)} - (k-1) - 1\\
&&+ \sum_{j=1}^{k}{P(x_j\mid z)} - \sum_{\substack{1\le j
\le k,\\ j\ne q}}{P(x_j, y_j\mid z)} \\
&&+ \sum_{j=k+1}^{n}{P(x_j\mid z)} - \sum_{j=k+1}^{n}\sum_{\substack{1\le l\le n,\\l\ne q}}{P(x_j, y_l\mid z)} \\
&&\text{here, the equal sign holds when }
\exists j \in [1,n], \\
&&\text{ and } P({y_i}_{x_i}\mid z)=1 \text{ for } \forall i \in [1,k], i\ne j.\\
&=& \sum_{j=1}^{k}{P({y_j}_{x_j}\mid z)} - k + \sum_{j=1}^{k}{P(x_j\mid z)} \\
&&- \sum_{j=1}^{k}{P(x_j, y_j\mid z)} + P(x_q, y_q\mid z) \\
&&+ \sum_{j=k+1}^{n}{P(x_j,y_q\mid z)}\\
&=& \sum_{j=1}^{k}\left[P({y_{j}}_{x_{j}}\mid z)+P({x_{j}}\mid z)-P({x_{j}, y_{j}}\mid z)\right] \\
&&+ \sum_{\substack{k+1\le j \le n, \\ j \ne q}}{P({x_{j}, y_{q}}\mid z)} + P({x_{q}, y_{q}}\mid z) - k
\end{eqnarray*}
If $q\in [k+1,n]$,
\begin{eqnarray*}
&&P({y_{1}}_{x_{1}},...,{y_{k}}_{x_{k}}, y_q\mid z)\\
&=& P({y_{1}}_{x_{1}},...,{y_{k}}_{x_{k}}\mid z) - 1+ \sum_{j=1}^{k}{P(x_j\mid z)} \\
&&- \sum_{j=1}^{k}{P({y_{1}}_{x_{1}},...,{y_{k}}_{x_{k}}, x_j, y_j\mid z)} \\
&&+ \sum_{j=k+1}^{n}{P(x_j\mid z)} \\
&&- \sum_{j=k+1}^{n}\sum_{l=1}^{n}{P({y_{1}}_{x_{1}},...,{y_{k}}_{x_{k}}, x_j, y_l\mid z)} \\
&&+ \sum_{j=k+1}^{n}{P({y_{1}}_{x_{1}},...,{y_{k}}_{x_{k}}, x_j,y_q\mid z)}\\
&=& P({y_{1}}_{x_{1}},...,{y_{k}}_{x_{k}}\mid z) - 1 + \sum_{j=1}^{k}{P(x_j\mid z)} \\
&&- \sum_{j=1}^{k}{P({y_{1}}_{x_{1}},...,{y_{k}}_{x_{k}}, x_j, y_j\mid z)} \\
&&+ \sum_{j=k+1}^{n}{P(x_j\mid z)} \\
\end{eqnarray*}
\begin{eqnarray*}
&&- \sum_{j=k+1}^{n}\sum_{\substack{1\le l\le n,\\l\ne q}}{P({y_{1}}_{x_{1}},...,{y_{k}}_{x_{k}}, x_j, y_l\mid z)}\\
&\ge& \sum_{j=1}^{k}{P({y_j}_{x_j}\mid z)} - (k-1) - 1 + \sum_{j=1}^{k}{P(x_j\mid z)} \\
&&- \sum_{j=1}^{k}{P(x_j, y_j\mid z)} \\
&&+ \sum_{j=k+1}^{n}{P(x_j\mid z)} - \sum_{j=k+1}^{n}\sum_{\substack{1\le l\le n,\\l\ne q}}{P(x_j, y_l\mid z)} \\
&&\text{here, the equal sign holds when }
\exists j \in [1,n], \\
&&\text{ and } P({y_i}_{x_i}\mid z)=1 \text{ for } \forall i \in [1,k], i\ne j.\\
&=& \sum_{j=1}^{k}{P({y_j}_{x_j}\mid z)} - k + \sum_{j=1}^{k}{P(x_j\mid z)} \\
&&- \sum_{j=1}^{k}{P(x_j, y_j\mid z)} + \sum_{j=k+1}^{n}{P({x_{j}, y_{q}}\mid z)}\\
&=& \sum_{j=1}^{k}\left[P({y_{j}}_{x_{j}}\mid z)+P({x_{j}}\mid z)-P({x_{j}, y_{j}}\mid z)\right] \\
&&+ \sum_{\substack{k+1\le j \le n, \\ j \ne q}}{P({x_{j}, y_{q}}\mid z)} + P({x_{q}, y_{q}}\mid z) - k
\end{eqnarray*}
To summarize
\begin{eqnarray*}
&&P({y_{1}}_{x_{1}},...,{y_{k}}_{x_{k}}, y_q\mid z)\\
&\ge& \sum_{j=1}^{k}\left[P({y_{j}}_{x_{j}}\mid z)+P({x_{j}}\mid z)-P({x_{j}, y_{j}}\mid z)\right]\\
&&+ \sum_{\substack{k+1 \le j \le n \\ j \ne q}}{P({x_{j}, y_{q}}\mid z)} + P({x_{q}, y_{q}}\mid z) - k
\end{eqnarray*}
and the equality of the second lower bound holds when $\exists j \in [1,n], \text{ and } P({y_i}_{x_i}\mid z)=1 \text{ for } \forall i \in [1,k], i\ne j$.

For the third lower bound, if $1\le q\le k$
\begin{eqnarray*}
&&P({y_{1}}_{x_{1}},...,{y_{k}}_{x_{k}}, y_q\mid z)\\
&=& \sum_{j=1}^{n}{P({y_{1}}_{x_{1}},...,{y_{k}}_{x_{k}}, x_j,y_q\mid z)}\\
&=& P({y_{1}}_{x_{1}},...,{y_{q-1}}_{x_{q-1}},{y_{q+1}}_{x_{q+1}},...,{y_{k}}_{x_{k}}, x_q,y_q\mid z) \\
&&+ \sum_{j=k+1}^{n}{P({y_{1}}_{x_{1}},...,{y_{k}}_{x_{k}}, x_j,y_q\mid z)}\\
&=& P({y_{1}}_{x_{1}},...,{y_{q-1}}_{x_{q-1}},{y_{q+1}}_{x_{q+1}},...,{y_{k}}_{x_{k}}, x_q,y_q\mid z) \\
&&+ \sum_{j=k+1}^{n}{P({y_{1}}_{x_{1}},...,{y_{k}}_{x_{k}}, x_j,y_q\mid z)} \\
\end{eqnarray*}
\begin{eqnarray*}
&&+ P({y_{1}}_{x_{1}},...,{y_{q-1}}_{x_{q-1}},{y_{q+1}}_{x_{q+1}},...,{y_{k}}_{x_{k}}\mid z) \\
&&- \sum_{j=1}^{n}\sum_{l=1}^{n}{P({y_{1}}_{x_{1}},...,{y_{q-1}}_{x_{q-1}},{y_{q+1}}_{x_{q+1}},...,{y_{k}}_{x_{k}},}\\ 
&&{x_j, y_l\mid z)}\\
&&+ \sum_{j=1}^{n}{P(x_j\mid z)} - 1\\
&=& P({y_{1}}_{x_{1}},...,{y_{q-1}}_{x_{q-1}},{y_{q+1}}_{x_{q+1}},...,{y_{k}}_{x_{k}}\mid z) -1\\
&&+ \sum_{j=1}^{k}{P(x_j\mid z)} \\
&&- \sum_{\substack{1\le j\le k,\\j\ne q}}{P({y_{1}}_{x_{1}},...,{y_{q-1}}_{x_{q-1}},{y_{q+1}}_{x_{q+1}},...,{y_{k}}_{x_{k}},}\\ 
&&{x_j, y_j\mid z)} \\
&&- \sum_{l=1}^{n}{P({y_{1}}_{x_{1}},...,{y_{q-1}}_{x_{q-1}},{y_{q+1}}_{x_{q+1}},...,{y_{k}}_{x_{k}}, }\\
&&{x_q, y_l\mid z)} \\
&&+ P({y_{1}}_{x_{1}},...,{y_{q-1}}_{x_{q-1}},{y_{q+1}}_{x_{q+1}},...,{y_{k}}_{x_{k}},x_q, \\
&&y_q\mid z)\\
&&+ \sum_{j=k+1}^{n}{P(x_j\mid z)} \\
&&- \sum_{j=k+1}^{n}\sum_{l=1}^{n}{P({y_{1}}_{x_{1}},...,{y_{q-1}}_{x_{q-1}},{y_{q+1}}_{x_{q+1}},...,}\\
&&{{y_{k}}_{x_{k}}, x_j, y_l\mid z)} \\
&&+ \sum_{j=k+1}^{n}{P({y_{1}}_{x_{1}},...,{y_{k}}_{x_{k}}, x_j,y_q\mid z)} \\
&=& P({y_{1}}_{x_{1}},...,{y_{q-1}}_{x_{q-1}},{y_{q+1}}_{x_{q+1}},...,{y_{k}}_{x_{k}}\mid z) -1\\
&&+ \sum_{j=1}^{k}{P(x_j\mid z)} \\
&&- \sum_{\substack{1\le j\le k,\\j\ne q}}{P({y_{1}}_{x_{1}},...,{y_{q-1}}_{x_{q-1}},{y_{q+1}}_{x_{q+1}},...,{y_{k}}_{x_{k}},}\\
&&{x_j, y_j\mid z)} \\
&&- \sum_{\substack{1\le l\le n,\\l\ne q}}{P({y_{1}}_{x_{1}},...,{y_{q-1}}_{x_{q-1}},{y_{q+1}}_{x_{q+1}},...,{y_{k}}_{x_{k}},}\\
&&{x_q, y_l\mid z)} \\
&&+ \sum_{j=k+1}^{n}{P(x_j\mid z)} \\
&&- \sum_{j=k+1}^{n}\sum_{\substack{1\le l \le n,\\l\ne q}}{P({y_{1}}_{x_{1}},...,{y_{q-1}}_{x_{q-1}},{y_{q+1}}_{x_{q+1}},...,}\\
&&{{y_{k}}_{x_{k}}, x_j, y_l\mid z)}
\end{eqnarray*}
\begin{eqnarray*}
&\ge& \sum_{\substack{1\le j\le k,\\ j \ne q}}{P({y_{j}}_{x_{j}}\mid z)} - (k-2) -1\\
&&+ \sum_{j=1}^{k}{P(x_j\mid z)} - \sum_{\substack{1\le j\le k,\\j\ne q}}{P(x_j, y_j\mid z)} \\
&&- \sum_{\substack{1\le l \le n,\\l\ne q}}{P(x_q, y_l\mid z)} + \sum_{j=k+1}^{n}{P(x_j\mid z)} \\
&&- \sum_{j=k+1}^{n}\sum_{\substack{1
\le l\le n,\\l\ne q}}{P(x_j, y_l\mid z)} \\
&&\text{here, the equal sign holds when }
\exists q \in [1,k], \\
&&\text{ and } P({y_i}_{x_i}\mid z)=1 \text{ for } \forall i \in [1,k], i\ne q.\\
&=& \sum_{\substack{1\le j \le k,\\ j \ne q}}{P({y_{j}}_{x_{j}}\mid z)} - (k-1) \\
&&+ \sum_{\substack{1\le j\le k,\\j\ne q}}{P(x_j\mid z)} + P(x_q\mid z) \\
&&- \sum_{\substack{1\le j\le k,\\j\ne q}}{P(x_j, y_j\mid z)} - \sum_{\substack{1\le l\le n,\\l\ne q}}{P(x_q, y_l\mid z)} \\
&&+ \sum_{j=k+1}^{n}{P(x_j, y_q\mid z)} \\
&=& \sum_{\substack{1\le j\le k, \\ j \ne q}}\left[P({y_{j}}_{x_{j}}\mid z)+P({x_{j}}\mid z)-P({x_{j}, y_{j}}\mid z)\right] \\
&&- (k-1)+ P(x_q\mid z) - \sum_{\substack{1\le l\le n,\\l\ne q}}{P(x_q, y_l\mid z)} \\
&&+ \sum_{j=k+1}^{n}{P(x_j, y_q\mid z)} \\
&=& \sum_{\substack{1\le j\le k, \\ j \ne q}}\left[P({y_{j}}_{x_{j}}\mid z)+P({x_{j}}\mid z)-P({x_{j}, y_{j}}\mid z)\right] \\
&&- (k-1) + P(x_q, y_q\mid z) + \sum_{j=k+1}^{n}{P(x_j, y_q\mid z)} \\
&\ge& \sum_{\substack{1\le j\le k, \\ j \ne q}}\left[P({y_{j}}_{x_{j}}\mid z)+P({x_{j}\mid z})-P({x_{j}, y_{j}}\mid z)\right] \\
&&+ P({x_{q}, y_{q}}\mid z) - (k-1)\\
&&\text{here, the equal sign holds when }P({x_j}\mid z)=0 \\
&&\text{ for } \forall j \in [k+1,n].
\end{eqnarray*}
The equality of the third upper bound holds when $\exists q \in [1,k], \text{ and } P({y_i}_{x_i}\mid z)=1, P({x_j}\mid z)=0 \text{ for } \forall i \in [1,k], j\in [k+1, n], i\ne q$.

For the second upper bound
\begin{eqnarray*}
&&P({y_{1}}_{x_{1}},...,{y_{k}}_{x_{k}}, y_q\mid z) \\
&=& \sum_{j=1}^{n}{P({y_{1}}_{x_{1}},...,{y_{k}}_{x_{k}}, x_j,y_q\mid z)}\\
&=& \sum_{j=1}^{k}{P({y_{1}}_{x_{1}},...,{y_{k}}_{x_{k}}, x_j,y_q\mid z)} \\
&&+ \sum_{j=k+1}^{n}{P({y_{1}}_{x_{1}},...,{y_{k}}_{x_{k}}, x_j,y_q\mid z)}
\end{eqnarray*}
If $q\in [1,k]$,
\begin{eqnarray*}
&&P({y_{1}}_{x_{1}},...,{y_{k}}_{x_{k}}, y_q\mid z) \\
&=& P({y_{1}}_{x_{1}},...,{y_{k}}_{x_{k}}, x_q,y_q\mid z) \\
&&+ \sum_{j=k+1}^{n}{P({y_{1}}_{x_{1}},...,{y_{k}}_{x_{k}}, x_j,y_q\mid z)}\\
&\le& P(x_q,y_q\mid z) + \sum_{j=k+1}^{n}{P(x_j,y_q\mid z)}\\
&&\text{here, the equal sign holds when } P({y_i}_{x_i}\mid z)=1 \\
&&\text{ for } \forall i \in [1,k].\\
&=& P({x_{q}, y_{q}}\mid z) + \sum_{\substack{k+1\le j \le n, \\ j \ne q}}P({x_{j}, y_{q}}\mid z)
\end{eqnarray*}
If $q\in [k+1,n]$,
\begin{eqnarray*}
&&P({y_{1}}_{x_{1}},...,{y_{k}}_{x_{k}}, y_q\mid z) \\
&=& \sum_{j=k+1}^{n}{P({y_{1}}_{x_{1}},...,{y_{k}}_{x_{k}}, x_j,y_q\mid z)}\\
&\le& \sum_{j=k+1}^{n}{P(x_j,y_q\mid z)}\\
&&\text{here, the equal sign holds when } P({y_i}_{x_i}\mid z)=1 \\
&&\text{ for } \forall i \in [1,k].\\
&=& P({x_{q}, y_{q}}\mid z) + \sum_{\substack{k+1\le j \le n, \\ j \ne q}}P({x_{j}, y_{q}}\mid z)
\end{eqnarray*}
To summarize
\begin{eqnarray*}
&&P({y_{1}}_{x_{1}},...,{y_{k}}_{x_{k}}, y_q\mid z) \\
&\le& P({x_{q}, y_{q}}\mid z) + \sum_{\substack{k+1 \le j \le n \\ j \ne q}}P({x_{j}, y_{q}}\mid z)
\end{eqnarray*}
and the equality of the second upper bound holds when $P({y_i}_{x_i}\mid z)=1 \text{ for } \forall i \in [1,k]$.

For the remaining upper bounds, $\forall j \in [1,k]$:
\begin{eqnarray*}
&&P({y_{1}}_{x_{1}},...,{y_{k}}_{x_{k}}, y_q\mid z)\\
&=& \sum_{i=1}^{n}{P({y_{1}}_{x_{1}},...,{y_{k}}_{x_{k}}, x_i,y_q\mid z)} + P({y_j}_{x_j}\mid z) \\
&&- P({y_j}_{x_j}\mid z)\\
&=& \sum_{i=1}^{n}{P({y_{1}}_{x_{1}},...,{y_{k}}_{x_{k}}, x_i,y_q\mid z)} + P({y_j}_{x_j}\mid z) \\
&&- \sum_{\substack{\{i_1,...,i_{j-1},i_{j+1},...,i_{k+2}\} \\ \in \{1,...,n\}^{k+1}}}{P({y_{i_1}}_{x_1},...,{y_{i_{j-1}}}_{x_{j-1}},}\\
&&{{y_{j}}_{x_{j}},{y_{i_{j+1}}}_{x_{j+1}},...,{y_{i_{k}}}_{x_{k}},x_{i_{k+1}},y_{i_{k+2}}\mid z)}
\end{eqnarray*}
Since $q \ne j$ for $1\le j \le k$, 
\begin{eqnarray*}
&&P({y_{1}}_{x_{1}},...,{y_{k}}_{x_{k}}, y_q\mid z)\\
&\le& P({y_j}_{x_j}\mid z) \\
&&- \sum_{\substack{\{i_1,...,i_{j-1},i_{j+1},...,i_{k}\} \\ \in \{1,...,n\}^{k-1}}}{P({y_{i_1}}_{x_1},...,{y_{i_{j-1}}}_{x_{j-1}},{y_{j}}_{x_{j}},}\\
&&{{y_{i_{j+1}}}_{x_{j+1}},...,{y_{i_{k}}}_{x_{k}},x_{j},y_{j}\mid z)}\\
&&\text{here, the equal sign holds when } P({x_j}\mid z)=0,\\
&&P({y_j}\mid z)=0 \text{ and } P({x_{t_1}, y_{t_2}}\mid z)=0\\ &&\text{ for } \forall j \in [k+1, n], \forall t_1, t_2 \in [1, k] \text{ and } t_1\ne t_2.\\
&=& P({y_j}_{x_j}\mid z) - P({y_{j}}_{x_{j}}, x_{j}, y_{j}\mid z)\\
&=& P({y_{j}}_{x_{j}}\mid z) - P({x_{j}, y_{j}}\mid z)
\end{eqnarray*}
The equality of the third upper bound holds when $P({x_j}\mid z)=0,  P({y_j}\mid z)=0 \text{ and } P({x_{t_1}, y_{t_2}}\mid z)=0 \text{ for } \forall j \in [k+1, n], t_1, t_2 \in [1, k] \text{ and } t_1\ne t_2$.

By substituting 
\begin{align*}
{\max \left \{
\begin{array}{cc}
0, \\
\\
\displaystyle \sum_{j=1}^{k}\Big[P({y_{j}}_{x_{j}}\mid z) +P({x_{j}}\mid z)-\\
-P({x_{j}, y_{j}}\mid z)\Big]
+\displaystyle \sum_{\substack{k+1 \le j \le n \\ j \ne q}}{P({x_{j}, y_{q}}\mid z)} +\\
+ P({x_{q}, y_{q}}\mid z) - k,\\
\\
\text{If } q \in \{1,...,k\}:\\
\displaystyle \sum_{\substack{1 \le j \le k \\ j \ne q}}\Big[P({y_{j}}_{x_{j}}\mid z)+P({x_{j}}\mid z)-\\
-P({x_{j}, y_{j}}\mid z)\Big]+ P({x_{q}, y_{q}}\mid z) - (k-1) 
\end{array}
\right \}\nonumber
} \\ \le  P({y_{1}}_{x_{1}},...,{y_{k}}_{x_{k}},y_q\mid z)
\end{align*}
\begin{align*}
{\min \left \{
\begin{array}{cc}
\text{If } q \in \{1,...,k\}:\\
P({y_{q}}_{x_{q}}\mid z), \\
\\
P({x_{q}, y_{q}}\mid z) +\displaystyle \sum_{\substack{k+1 \le j \le n \\ j \ne q}}P({x_{j}, y_{q}}\mid z),\\
\\
\text{For }j \in \{1,...,k\},\text{ and } j \ne q\\
P({y_{j}}_{x_{j}}\mid z) - P({x_{j}, y_{j}}\mid z),\\
\end{array} 
\right \}\nonumber
} \\ \ge  P({y_{1}}_{x_{1}},...,{y_{k}}_{x_{k}},y_q\mid z)
\end{align*}
into equation~(\ref{nnkyq1}), and applying law of total probability, we have
\begin{eqnarray*}
    &&\sum_{z}{P({y_{1}}_{x_{1}},...,{y_{k}}_{x_{k}},y_q\mid z)} \times P(z)\\
    &\ge& \sum_{z}{0} \times P(z) \\
    &=& 0 \\
    &&\sum_{z}{P({y_{1}}_{x_{1}},...,{y_{k}}_{x_{k}},x_p\mid z)} \times P(z)\\
    &\ge& \sum_{z}\Bigg\{{\displaystyle \sum_{j=1}^{k}\Big[P({y_{j}}_{x_{j}}\mid z)+P({x_{j}}\mid z)}\\
    &&{-P({x_{j}, y_{j}}\mid z)\Big]+\displaystyle \sum_{\substack{k+1 \le j \le n \\ j \ne q}}{P({x_{j}, y_{q}}\mid z)}}\\
    &&{+ P({x_{q}, y_{q}}\mid z) - k}\Bigg\}\times P(z)\\
    &=& \displaystyle \sum_{j=1}^{k}\Big[P({y_{j}}_{x_{j}})+P({x_{j}})-P({x_{j}, y_{j}})\Big]\\
    &&+\displaystyle \sum_{\substack{k+1 \le j \le n \\ j \ne q}}{P({x_{j}, y_{q}})} + P({x_{q}, y_{q}}) - k\\
    &&\sum_{z}{P({y_{1}}_{x_{1}},...,{y_{k}}_{x_{k}},x_p\mid z)} \times P(z)\\
    &\ge& \sum_{z}\Bigg\{{\displaystyle \sum_{\substack{1 \le j \le k \\ j \ne q}}\Big[P({y_{j}}_{x_{j}}\mid z)+P({x_{j}}\mid z)}\\
    &&{-P({x_{j}, y_{j}}\mid z)\Big]+P({x_{q}, y_{q}}\mid z) - (k-1)}\Bigg\}\\
    &&\times P(z)\\
    &=& \displaystyle \sum_{\substack{1 \le j \le k \\ j \ne q}}\Big[P({y_{j}}_{x_{j}})+P({x_{j}})-P({x_{j}, y_{j}})\Big]\\
    &&+ P({x_{q}, y_{q}}) - (k-1)
\end{eqnarray*}
\begin{eqnarray*}
    &&\sum_{z}{P({y_{1}}_{x_{1}},...,{y_{k}}_{x_{k}},y_q\mid z)} \times P(z)\\
    &\le& \sum_{z}{P({y_{q}}_{x_{q}}\mid z)} \times P(z) \\
    &=&  P({y_{q}}_{x_{q}})\\
    &&\sum_{z}{P({y_{1}}_{x_{1}},...,{y_{k}}_{x_{k}},x_p\mid z)} \times P(z)\\
    &\le& \sum_{z}\left[P({x_{q}, y_{q}}\mid z) + \displaystyle \sum_{\substack{k+1 \le j \le n \\ j \ne q}}P({x_{j}, y_{q}}\mid z)\right]\\
    &&\times P(z)\\
    &=& P({x_{q}, y_{q}}) + \displaystyle \sum_{\substack{k+1 \le j \le n \\ j \ne q}}P({x_{j}, y_{q}})\\
    &&\sum_{z}{P({y_{1}}_{x_{1}},...,{y_{k}}_{x_{k}},x_p\mid z)} \times P(z)\\
    &\le& \sum_{z}\left[P({y_{j}}_{x_{j}}\mid z) - P({x_{j}, y_{j}}\mid z)\right]\times P(z)\\
    &=& P({y_{j}}_{x_{j}}) - P({x_{j}, y_{j}})
\end{eqnarray*}
Thus, we can guarantee that the proposed lower and upper bounds are no looser than those of \citet{shu2026identificationprobabilitiescausationrecursive}. Following the notation in \citet{mueller2021causeseffectslearningindividual}, $P({y_j}_{x_j}\mid z)$ denotes experimental data within the subpopulation characterized by $z$.
\end{proof}

\subsubsection{Proof of Theorem~\ref{nnk+x_p+y_q1}}
\begin{proof}
\begin{eqnarray}
    PN(k,p,q) &=& P({y_{1}}_{x_{1}},...,{y_{k}}_{x_{k}},x_p, y_q)\nonumber\\
    &=& \sum_{z}{P({y_{1}}_{x_{1}},...,{y_{k}}_{x_{k}},x_p, y_q\mid z)}\nonumber \\
    &&\times P(z) \label{nnkxpyq1}
\end{eqnarray}

First, we need to prove:
\begin{align*}
{\max \left \{
\begin{array}{cc}
0, \\
\displaystyle \sum_{j=1}^{k}\Big[P({y_{j}}_{x_{j}}\mid z)+P({x_{j}}\mid z)-\\-P({x_{j}, y_{j}}\mid z)\Big] + P({x_{p}, y_{q}}\mid z) - k\\
\end{array}
\right \}\nonumber
} \\ \le P({y_{1}}_{x_{1}},...,{y_{k}}_{x_{k}},x_p, y_q\mid z)
\end{align*}
\begin{align*}
{\min \left \{
\begin{array}{cc}
P({x_{p}},{y_{q}}\mid z), \\
\\
\text{For }j \in \{1,...,k\}:\\
P({y_{j}}_{x_{j}}\mid z) - P({x_{j}, y_{j}}\mid z) \\
\end{array} 
\right \}\nonumber
} \\ \ge P({y_{1}}_{x_{1}},...,{y_{k}}_{x_{k}},x_p, y_q\mid z)
\end{align*}
By the Fréchet Inequalities, for any $z$ with $P(z)>0$, we can obtain the first lower bound and the first upper bound, 
\begin{eqnarray*}
P({y_{1}}_{x_{1}},...,{y_{k}}_{x_{k}},{x_{p}},{y_{q}}\mid z)&\ge& 0\\
P({y_{1}}_{x_{1}},...,{y_{k}}_{x_{k}},{x_{p}},{y_{q}}\mid z)&\le& {P({x_{p}},{y_{q}}\mid z)}.
\end{eqnarray*}
The equality of the first lower bound holds when 
$\exists j\in [1,k], \text{that } P({y_j}_{x_j}\mid z)=0$ or $P(x_p\mid z) = 0$ or $P(y_q\mid z) = 0$, $p\ne j$.

The equality of the first upper bound holds when $P({y_{j}}_{x_{j}}\mid z) = 1$ for $\forall j \in [1,k]$.

For the second lower bound
\begin{eqnarray*}
&&P({y_{1}}_{x_{1}},...,{y_{k}}_{x_{k}}, x_p, y_q\mid z)\\
&=& P({y_{1}}_{x_{1}},...,{y_{k}}_{x_{k}}, x_p, y_q\mid z) + P({y_{1}}_{x_{1}},...,{y_{k}}_{x_{k}}\mid z) \\
&&- P({y_{1}}_{x_{1}},...,{y_{k}}_{x_{k}}\mid z)\\
&=& {P({y_{1}}_{x_{1}},...,{y_{k}}_{x_{k}}\mid z)} + P({y_{1}}_{x_{1}},...,{y_{k}}_{x_{k}}, x_p, y_q\mid z) \\
&&-\sum_{j=1}^{n}\sum_{l=1}^{n}{P({y_{1}}_{x_{1}},...,{y_{k}}_{x_{k}}, x_j, y_l\mid z)}\\
&=& {P({y_{1}}_{x_{1}},...,{y_{k}}_{x_{k}}\mid z)} + P({y_{1}}_{x_{1}},...,{y_{k}}_{x_{k}}, x_p, y_q\mid z) \\
&&+ \sum_{j=1}^{n}{P(x_j\mid z)} - 1\\
&&- \sum_{j=1}^{k}{P({y_{1}}_{x_{1}},...,{y_{k}}_{x_{k}}, x_j, y_j\mid z)} \\
&&- \sum_{j=k+1}^{n}\sum_{l=1}^{n}{P({y_{1}}_{x_{1}},...,{y_{k}}_{x_{k}}, x_j, y_l\mid z)} \\
&=& {P({y_{1}}_{x_{1}},...,{y_{k}}_{x_{k}}\mid z)} - 1 \\
&&+ \sum_{j=1}^{k}{P(x_j\mid z)} - \sum_{j=1}^{k}{P({y_{1}}_{x_{1}},...,{y_{k}}_{x_{k}}, x_j, y_j\mid z)} \\ 
&&+ \sum_{j=k+1}^{n}{P(x_j\mid z)} \\
&&- \sum_{j=k+1}^{n}\sum_{l=1}^{n}{P({y_{1}}_{x_{1}},...,{y_{k}}_{x_{k}}, x_j, y_l\mid z)} \\
&&+ P({y_{1}}_{x_{1}},...,{y_{k}}_{x_{k}}, x_p, y_q\mid z) \\
&=& {P({y_{1}}_{x_{1}},...,{y_{k}}_{x_{k}}\mid z)} - 1\\
&&+ \sum_{j=1}^{k}{P(x_j\mid z)} - \sum_{j=1}^{k}{P({y_{1}}_{x_{1}},...,{y_{k}}_{x_{k}}, x_j, y_j\mid z)} \\ 
&&+ \sum_{j=k+1}^{n}{P(x_j\mid z)}\\ 
&&- \sum_{\substack{k+1\le j \le n, \\ j \ne p}}\sum_{l=1}^{n}{P({y_{1}}_{x_{1}},...,{y_{k}}_{x_{k}}, x_j, y_l\mid z)} \\
&&- \sum_{\substack{1\le l\le n, \\ l\ne p}}{P({y_{1}}_{x_{1}},...,{y_{k}}_{x_{k}}, x_p, y_l\mid z)}
\end{eqnarray*}
\begin{eqnarray*}
&\ge& \sum_{j=1}^{k}{P({y_{j}}_{x_{j}}\mid z)} - (k-1) - 1 + \sum_{j=1}^{k}{P(x_j\mid z)} \\
&&- \sum_{j=1}^{k}{P(x_j, y_j\mid z)} + \sum_{j=k+1}^{n}{P(x_j\mid z)} \\
&&- \sum_{\substack{k+1\le j \le n, \\ j \ne p}}{P(x_j\mid z)} - \sum_{\substack{1\le l \le n, \\ l\ne p}}{P(x_p, y_l\mid z)}\\
&&\text{here, the equal sign holds when }
\exists j \in [1,n], \\
&&\text{ and } P({y_i}_{x_i}\mid z)=1 \text{ for } \forall i \in [1,k], i\ne j.\\
&=& \sum_{j=1}^{k}{P({y_{j}}_{x_{j}}\mid z)} - k + \sum_{j=1}^{k}{P(x_j\mid z)} \\
&&- \sum_{j=1}^{k}{P(x_j, y_j\mid z)} + P(x_p\mid z) \\
&&- \sum_{\substack{1\le l\le n, \\ l\ne p}}{P(x_p, y_l\mid z)}\\
&=& \sum_{j=1}^{k}\left[P({y_{j}}_{x_{j}}\mid z)+P({x_{j}}\mid z)-P({x_{j}, y_{j}}\mid z)\right] \\
&&+ P({x_{p}, y_{q}}\mid z) - k
\end{eqnarray*}
The equality of the second lower bound holds when $\exists j \in [1,n], P({y_i}_{x_i}\mid z)=1 \text{ for } \forall i \in [1,k], i\ne j$.

For the remaining upper bounds, $\forall j \in [1,k]$:
\begin{eqnarray*}
&&P({y_{1}}_{x_{1}},...,{y_{k}}_{x_{k}}, x_p, y_q\mid z)\\
&=& P({y_{1}}_{x_{1}},...,{y_{k}}_{x_{k}}, x_p, y_q\mid z) + P({y_j}_{x_j}\mid z) \\
&&- P({y_j}_{x_j}\mid z)\\
&=& P({y_{1}}_{x_{1}},...,{y_{k}}_{x_{k}}, x_p, y_q\mid z) + P({y_j}_{x_j}\mid z) \\
&&- \sum_{\substack{\{i_1,...,i_{j-1},i_{j+1},...,i_{k+2}\} \\ \in \{1,...,n\}^{k+1}}}{P({y_{i_1}}_{x_1},...,{y_{i_{j-1}}}_{x_{j-1}},}\\
&&{{y_{j}}_{x_{j}},{y_{i_{j+1}}}_{x_{j+1}},...,{y_{i_{k}}}_{x_{k}},x_{i_{k+1}},y_{i_{k+2}}\mid z)}
\end{eqnarray*}
Since $p \ne j$ for $1\le j \le k$, 
\begin{eqnarray*}
&&P({y_{1}}_{x_{1}},...,{y_{k}}_{x_{k}}, x_p, y_q\mid z)\\
&\le& P({y_j}_{x_j}\mid z) - \sum_{\substack{\{i_1,...,i_{j-1},i_{j+1},...,i_{k}\} \\ \in \{1,...,n\}^{k-1}}}{P({y_{i_1}}_{x_1},...,}\\
&&{{y_{i_{j-1}}}_{x_{j-1}},{y_{j}}_{x_{j}},{y_{i_{j+1}}}_{x_{j+1}},...,{y_{i_{k}}}_{x_{k}},x_{j},y_{j}\mid z)}\\
&&\text{here, the equal sign holds when } P({x_j}\mid z)=0,\\
&&P({y_j\mid z})=0 \text{ and } P({x_{t_1}, y_{t_2}}\mid z)=0\\ &&\text{ for } \forall j \in [k+1, n], \forall t_1, t_2 \in [1, k] \text{ and } t_1\ne t_2.\\
&=& P({y_j}_{x_j}\mid z) - P({y_{j}}_{x_{j}}, x_{j}, y_{j}\mid z)\\
&=& P({y_{j}}_{x_{j}}\mid z) - P({x_{j}, y_{j}}\mid z)
\end{eqnarray*}
The equality of the second upper bound holds when $P({x_j}\mid z)=0,  P({y_j}\mid z)=0 \text{ and } P({x_{t_1}, y_{t_2}}\mid z)=0 \text{ for } \forall j \in [k+1, n], t_1, t_2 \in [1, k] \text{ and } t_1\ne t_2$.

By substituting 
\begin{align*}
{\max \left \{
\begin{array}{cc}
0, \\
\displaystyle \sum_{j=1}^{k}\Big[P({y_{j}}_{x_{j}}\mid z)+P({x_{j}}\mid z)-\\-P({x_{j}, y_{j}}\mid z)\Big] + P({x_{p}, y_{q}}\mid z) - k\\
\end{array}
\right \}\nonumber
} \\ \le P({y_{1}}_{x_{1}},...,{y_{k}}_{x_{k}},x_p, y_q\mid z)
\end{align*}
\begin{align*}
{\min \left \{
\begin{array}{cc}
P({x_{p}},{y_{q}}\mid z), \\
\\
\text{For }j \in \{1,...,k\}:\\
P({y_{j}}_{x_{j}}\mid z) - P({x_{j}, y_{j}}\mid z) \\
\end{array} 
\right \}\nonumber
} \\ \ge P({y_{1}}_{x_{1}},...,{y_{k}}_{x_{k}},x_p, y_q\mid z)
\end{align*}
into equation~(\ref{nnkxpyq1}), and applying law of total probability, we have
\begin{eqnarray*}
    &&\sum_{z}{P({y_{1}}_{x_{1}},...,{y_{k}}_{x_{k}},x_p, y_q\mid z)} \times P(z)\\
    &\ge& \sum_{z}{0}\times P(z) \\
    &=& 0 \\
    &&\sum_{z}{P({y_{1}}_{x_{1}},...,{y_{k}}_{x_{k}},x_p, y_q\mid z))} \times P(z)\\
    &\ge& \sum_{z}{\Bigg\{\displaystyle \sum_{j=1}^{k}\Big[P({y_{j}}_{x_{j}}\mid z)+P({x_{j}}\mid z)}\\
    &&{-P({x_{j}, y_{j}}\mid z)\Big] + P({x_{p}, y_{q}}\mid z) - k\Bigg\}}\times P(z) \\
    &=& \displaystyle \sum_{j=1}^{k}\Big[P({y_{j}}_{x_{j}})+P({x_{j}})-P({x_{j}, y_{j}})\Big] \\
    &&+ P({x_{p}, y_{q}}) - k
\end{eqnarray*}

\begin{eqnarray*}
    &&\sum_{z}{P({y_{1}}_{x_{1}},...,{y_{k}}_{x_{k}},x_p, y_q\mid z)} \times P(z)\\ 
    &\le& P({x_{p}},{y_{q}}\mid z) \times P(z) \\
    &=& P({x_{p}},{y_{q}}) \\
    &&\sum_{z}{P({y_{1}}_{x_{1}},...,{y_{k}}_{x_{k}},x_p, y_q\mid z)} \times P(z) \\
    &\le& \left[P({y_{j}}_{x_{j}}\mid z) - P({x_{j}, y_{j}}\mid z)\right]\times P(z) \\
    &=& P({y_{j}}_{x_{j}}) - P({x_{j}, y_{j}}))\\
\end{eqnarray*}
Thus, we can guarantee that the proposed lower and upper bounds are no looser than those of \citet{shu2026identificationprobabilitiescausationrecursive}. Following the notation in \citet{mueller2021causeseffectslearningindividual}, $P({y_j}_{x_j}\mid z)$ denotes experimental data within the subpopulation characterized by $z$.
\end{proof}


\subsubsection{Proof of Theorem~\ref{nnk3}}
\begin{proof}
\begin{eqnarray*}
&&PNS(k) \\
&=& P({y_{1}}_{x_{1}},...,{y_{k}}_{x_{k}})\\
&=& \sum_{\substack{{z_1},...,{z_k} \\ \in \{1,...,n\}^{k}}} P({y_{1}}_{x_{1}},...,{y_{k}}_{x_{k}},{z_1}_{x_1},...,{z_k}_{x_k})\\
&=& \sum_{\substack{{z_1},...,{z_k} \\ \in \{1,...,n\}^{k}}} P({y_{1}}_{x_{1}},...,{y_{k}}_{x_{k}}\mid {z_1}_{x_1},...,{z_k}_{x_k})\\&&\times P({z_1}_{x_1},...,{z_k}_{x_k})\\
&\le& \sum_{\substack{{z_1},...,{z_k} \\ \in \{1,...,n\}^{k}}} \min\Big\{{P({y_{1}}_{x_{1}}\mid {z_1}_{x_1},...,{z_k}_{x_k}),...,}\\&&{P({y_{k}}_{x_{k}}\mid {z_1}_{x_1},...,{z_k}_{x_k})}\Big\}\\&&\times \min\Big\{P({z_1}_{x_1}),...,P({z_k}_{x_k})\Big\}\\
&&\text{the equality sign holds when for $\forall j\in[1,k]$,}\\
&&P({y_j}_{x_j}\mid {z_1}_{x_1},\dots,{z_k}_{x_k}) = 0 \text{ or } P({z_j}_{x_j}) = 0\\
&=& \sum_{\substack{{z_1},...,{z_k} \\ \in \{1,...,n\}^{k}}} \min\Big\{{P({y_{1}}_{x_{1}}\mid {z_1}_{x_1}),...,}\\&&{P({y_{k}}_{x_{k}}\mid {z_k}_{x_k})}\Big\}\times \min\Big\{P({z_1}_{x_1}),...,P({z_k}_{x_k})\Big\}\\
&&\text{it follows from the cross-world conditional}\\ &&\text{independences that derive from graph:}\\
&&\text{$Y_{x_j} \perp\!\!\!\perp \{Z_{x_i}:i\neq j\}\mid Z_{x_j}, \qquad j=1,\ldots,k.$}\\
&=& \sum_{\substack{{z_1},...,{z_k} \\ \in \{1,...,n\}^{k}}} \min\Big\{{P({y_{1}}}\mid {z_1}_{x_1}, {x_{1}),...,}\\&&{P({y_{k}}}\mid {z_k}_{x_k}, {x_{k})}\Big\}\times \min\Big\{P({z_1}_{x_1}),...,P({z_k}_{x_k})\Big\}\\
&&\text{since for
$\forall x_j,\; Y_{x_j} \perp\!\!\!\perp {X} \mid Z_{x_j}$.}\\
&=& \sum_{\substack{{z_1},...,{z_k} \\ \in \{1,...,n\}^{k}}} \min\Big\{{P({y_{1}}}\mid {z_1}, {x_{1}),...,}\\&&{P({y_{k}}}\mid {z_k}, {x_{k})}\Big\}\times \min\Big\{P({z_1}_{x_1}),...,P({z_k}_{x_k})\Big\}\\
\end{eqnarray*}

Hence, we can say for $j$ $\in \{1,...,k\}$, 
\begin{eqnarray*}
PNS(k)&&\nonumber\\
\le \sum_{\substack{{z_1},...,{z_k} \\ \in \{1,...,n\}^{k}}}&&
{\min_j\{P({y_j}\mid {x_j},{z_j})\}}\\
&&\times \displaystyle{\min_j \{P({z_j}_{x_j})\} } 
\end{eqnarray*}
\end{proof}

\subsubsection{Proof of Theorem~\ref{nnk+x_p3}}
\begin{proof}
\begin{eqnarray*}
&&PSub(k,p) \\
&=& P({y_{1}}_{x_{1}},...,{y_{k}}_{x_{k}}, x_p)\\
&=& \sum_{\substack{{z_1},...,{z_k} \\ \in \{1,...,n\}^{k}}} P({y_{1}}_{x_{1}},...,{y_{k}}_{x_{k}},x_p,{z_1}_{x_1},...,{z_k}_{x_k})\\
&=& \sum_{\substack{{z_1},...,{z_k} \\ \in \{1,...,n\}^{k}}} P({y_{1}}_{x_{1}},...,{y_{k}}_{x_{k}},x_p \mid {z_1}_{x_1},...,{z_k}_{x_k})\\
&&\times P({z_1}_{x_1},...,{z_k}_{x_k})\\
&=& \sum_{\substack{{z_1},...,{z_k} \\ \in \{1,...,n\}^{k}}} P({y_{1}}_{x_{1}},...,{y_{k}}_{x_{k}}\mid {z_1}_{x_1},...,{z_k}_{x_k})\\
&&\times P(x_p \mid {z_1}_{x_1},...,{z_k}_{x_k}) \times P({z_1}_{x_1},...,{z_k}_{x_k})\\
&&\text{since $\left(Y_{x_1},\ldots,Y_{x_k}\right) \perp\!\!\!\perp X \mid \left(Z_{x_1},\ldots,Z_{x_k}\right)$}\\
&=& \sum_{\substack{{z_1},...,{z_k} \\ \in \{1,...,n\}^{k}}} P({y_{1}}_{x_{1}},...,{y_{k}}_{x_{k}}\mid {z_1}_{x_1},...,{z_k}_{x_k})\\
&&\times P({z_1}_{x_1},...,{z_k}_{x_k},x_p)\\
&\le& \sum_{\substack{{z_1},...,{z_k} \\ \in \{1,...,n\}^{k}}} \min\Big\{{P({y_{1}}_{x_{1}}\mid {z_1}_{x_1},...,{z_k}_{x_k}),...,}\\&&{P({y_{k}}_{x_{k}}\mid {z_1}_{x_1},...,{z_k}_{x_k})}\Big\}\\&&\times \min\Big\{P({z_1}_{x_1}),...,P({z_k}_{x_k}),P(x_p)\Big\}\\
&&\text{the equality sign holds when for $\forall j\in[1,k]$,}\\
&&P({y_j}_{x_j}\mid {z_1}_{x_1},\dots,{z_k}_{x_k}) = 0\\
&&\text{ or } P({z_j}_{x_j}) = 0 \text{ or } P(x_p) = 0\\
&=& \sum_{\substack{{z_1},...,{z_k} \\ \in \{1,...,n\}^{k}}} \min\Big\{{P({y_{1}}_{x_{1}}\mid {z_1}_{x_1}),...,}\\&&{P({y_{k}}_{x_{k}}\mid {z_k}_{x_k})}\Big\}\\
&&\times \min\Big\{P({z_1}_{x_1}),...,P({z_k}_{x_k}),P(x_p)\Big\}\\
&&\text{as for $j\in[1,k]$: $Y_{x_j} \perp\!\!\!\perp \{Z_{x_i}:i\neq j\}\mid Z_{x_j}$}\\
&=& \sum_{\substack{{z_1},...,{z_k} \\ \in \{1,...,n\}^{k}}} \min\Big\{{P({y_{1}}}\mid {z_1}_{x_1}, {x_{1}),...,}\\&&{P({y_{k}}}\mid {z_k}_{x_k}, {x_{k})}\Big\}\\
&&\times \min\Big\{P({z_1}_{x_1}),...,P({z_k}_{x_k}),P(x_p)\Big\}\\
&&\text{since for
$\forall x_j,\; Y_{x_j} \perp\!\!\!\perp {X} \mid Z_{x_j}$.}\\
&=& \sum_{\substack{{z_1},...,{z_k} \\ \in \{1,...,n\}^{k}}} \min\Big\{{P({y_{1}}}\mid {z_1}, {x_{1}),...,}\\&&{P({y_{k}}}\mid {z_k}, {x_{k})}\Big\}\\
&&\times \min\Big\{P({z_1}_{x_1}),...,P({z_k}_{x_k}),P(x_p)\Big\}
\end{eqnarray*}
Hence, we can say for $j$ $\in \{1,...,k\}$, 
\begin{eqnarray*}
PSub(k,p)&&\\ 
\le \displaystyle \sum_{\substack{{z_1},...,{z_k} \\ \in \{1,...,n\}^{k}}}
&&{\min_j\{P({y_j}\mid {x_j},{z_j})\}} \\
&&\times{\displaystyle \min_j \{P({z_j}_{x_j}), P(x_p)\} }
\end{eqnarray*}
\end{proof}

\subsubsection{Proof of Theorem~\ref{nnk+y_q3}}
\begin{proof}
\begin{eqnarray*}
&&PRep(k,q) \\
&=& P({y_{1}}_{x_{1}},...,{y_{k}}_{x_{k}}, y_q)\\
&\le& P({y_{1}}_{x_{1}},...,{y_{k}}_{x_{k}})\\
&&\text{the equality sign holds when } P(y_q) = 0\\
&\le& \displaystyle \sum_{\substack{{z_1},...,{z_k} \\ \in \{1,...,n\}^{k}}}{\min_j\{P({y_j}\mid {x_j},{z_j})\}} \times \displaystyle{\min_j \{P({z_j}_{x_j})\} }
\end{eqnarray*}
Hence, we can say for $j$ $\in \{1,...,k\}$, 
\begin{eqnarray*}
PRep(k,q)&& \\
\le \displaystyle \sum_{\substack{{z_1},...,{z_k} \\ \in \{1,...,n\}^{k}}}
&&{\min_j\{P({y_j}\mid {x_j},{z_j})\}} \times \displaystyle{\min_j \{P({z_j}_{x_j})\} }\\
\end{eqnarray*}
\end{proof}

\subsubsection{Proof of Theorem~\ref{nnk+x_p+y_q3}}
\begin{proof}
\begin{eqnarray*}
&&PN(k,p,q)\\ 
&=& P({y_{1}}_{x_{1}},...,{y_{k}}_{x_{k}}, x_p, y_q)\\
&=& P({y_{1}}_{x_{1}},...,{y_{k}}_{x_{k}}, {y_{q}}_{x_{p}}, x_p)\\
&=& \sum_{\substack{{z_1},...,{z_k},{z_p} \\ \in \{1,...,n\}^{k+1}}} {P({y_{1}}_{x_{1}},...,{y_{k}}_{x_{k}},{y_{q}}_{x_{p}},x_p,{z_1}_{x_1},...,}\\
&&{{z_k}_{x_k},{z_p}_{x_p})}\\
&=& \sum_{\substack{{z_1},...,{z_k},{z_p} \\ \in \{1,...,n\}^{k+1}}} P({y_{1}}_{x_{1}},...,{y_{k}}_{x_{k}},{y_{q}}_{x_{p}},x_p \mid {z_1}_{x_1},...,\\
&&{z_k}_{x_k},{z_p}_{x_p})\\
&&\times P({z_1}_{x_1},...,{z_k}_{x_k},{z_p}_{x_p})\\
&=& \sum_{\substack{{z_1},...,{z_k},{z_p} \\ \in \{1,...,n\}^{k+1}}} P({y_{1}}_{x_{1}},...,{y_{k}}_{x_{k}},{y_{q}}_{x_{p}}\mid {z_1}_{x_1},...,{z_k}_{x_k},\\
&&{z_p}_{x_p})\\
&&\times P(x_p \mid {z_1}_{x_1},...,{z_k}_{x_k},{z_p}_{x_p}) \\
&&\times P({z_1}_{x_1},...,{z_k}_{x_k},{z_p}_{x_p})\\
&&\text{since } \left(Y_{x_1},\ldots,Y_{x_k},Y_{x_p}\right) \perp\!\!\!\perp X \mid (Z_{x_1},\ldots,Z_{x_k},\\
&&Z_{x_p})
\end{eqnarray*}
\begin{eqnarray*}
&=& \sum_{\substack{{z_1},...,{z_k},{z_p} \\ \in \{1,...,n\}^{k+1}}} P({y_{1}}_{x_{1}},...,{y_{k}}_{x_{k}},{y_{q}}_{x_{p}}\mid {z_1}_{x_1},...,{z_k}_{x_k},\\
&&{z_p}_{x_p})\\
&&\times P({z_1}_{x_1},...,{z_k}_{x_k},{z_p}_{x_p},x_p)\\
&\le& \sum_{\substack{{{z_1},...,{z_k},{z_p}} \\ {\in \{1,...,n\}^{k+1}}}} \min\Big\{{P({y_{1}}_{x_{1}}\mid {z_1}_{x_1},...,{z_k}_{x_k},{z_p}_{x_p}),}\\
&&{...,P({y_{k}}_{x_{k}}\mid {z_1}_{x_1},...,{z_k}_{x_k},{z_p}_{x_p})}, \\
&&{P({y_{q}}_{x_{p}}\mid {z_1}_{x_1},...,{z_k}_{x_k},{z_p}_{x_p})}\Big\}\\
&&\times \min\Big\{P({z_1}_{x_1}),...,P({z_k}_{x_k}),P({z_p}_{x_p}),P(x_p)\Big\}\\
&&\text{the equality sign holds when for $\forall j\in[1,k]$,}\\
&&P({y_j}_{x_j}\mid {z_1}_{x_1},\dots,{z_k}_{x_k},{z_p}_{x_p}) = 0 \\
&&\text{ or } P({y_q}_{x_p}\mid {z_1}_{x_1},\dots,{z_k}_{x_k},{z_p}_{x_p}) = 0\\
&&\text{ or } P({z_j}_{x_j}) = 0 \text{ or } P({z_p}_{x_p}) = 0 \text{ or } P(x_p) = 0\\
&=& \sum_{\substack{{z_1},...,{z_k},{z_p} \\ \in \{1,...,n\}^{k+1}}} \min\Big\{{P({y_{1}}_{x_{1}}\mid {z_1}_{x_1}),...,}\\&&{P({y_{k}}_{x_{k}}\mid {z_k}_{x_k}),P({y_q}_{x_p}\mid{z_p}_{x_p})}\Big\}\\
&&\times \min\Big\{P({z_1}_{x_1}),...,P({z_k}_{x_k}),P({z_p}_{x_p}),P(x_p)\Big\}\\
&&\text{as for $j\in\{1,\dots,k,p\}$: $Y_{x_j} \perp\!\!\!\perp \{Z_{x_i}:i\neq j\}\mid Z_{x_j}$}\\
&=& \sum_{\substack{{z_1},...,{z_k},{z_p} \\ \in \{1,...,n\}^{k+1}}} \min\Big\{{P({y_{1}}}\mid {z_1}_{x_1}, {x_{1}),...,}\\&&{P({y_{k}}}\mid {z_k}_{x_k}, {x_{k}),P({y_q}\mid{z_p}_{x_p},{x_p})}\Big\}\\
&&\times \min\Big\{P({z_1}_{x_1}),...,P({z_k}_{x_k}),P({z_p}_{x_p}),P(x_p)\Big\}\\
&&\text{since for
$\forall x_j,\; Y_{x_j} \perp\!\!\!\perp {X} \mid Z_{x_j}$.}\\
&=& \sum_{\substack{{z_1},...,{z_k},{z_p} \\ \in \{1,...,n\}^{k+1}}} \min\Big\{{P({y_{1}}}\mid {z_1}, {x_{1}),...,}\\&&{P({y_{k}}}\mid {z_k}, {x_{k}),P({y_q}\mid{z_p},{x_p})}\Big\}\\
&&\times \min\Big\{P({z_1}_{x_1}),...,P({z_k}_{x_k}),P({z_p}_{x_p}),P(x_p)\Big\}\\
\end{eqnarray*}
Hence, we can say for $j$ $\in \{1,...,k\}$, 
\begin{eqnarray*}
PN(k,p,q)&&\\
\le \displaystyle \sum_{\substack{{z_1},...,{z_k},{z_p} \\ \in \{1,...,n\}^{k+1}}}
&&{\min_j\{P({y_j}\mid {x_j},{z_j}),P({y_q}\mid {x_p},{z_p})\}}\\
&&\times {\displaystyle \min_j \{P({z_j}_{x_j}), P({z_p}_{x_p}), P(x_p)\} }\\
\end{eqnarray*}
\end{proof}

\subsubsection{Proof of Theorem~\ref{nnk4}}
\begin{proof}
\begin{eqnarray*}
&&PNS(k) \\
&=& P({y_{1}}_{x_{1}},...,{y_{k}}_{x_{k}})\\
&=& \sum_{\substack{{z_1},...,{z_k} \\ \in \{1,...,n\}^{k}}} P({y_{1}}_{z_{1}},...,{y_{k}}_{z_{k}},{z_1}_{x_1},...,{z_k}_{x_k})\\
&=& \sum_{\substack{{z_1}\ne...\ne{z_k}}} P({y_{1}}_{z_{1}},...,{y_{k}}_{z_{k}})\times P({z_1}_{x_1},...,{z_k}_{x_k})\\
&\le& \sum_{\substack{{z_1}\ne...\ne{z_k}}} \min\{P({y_{1}}_{z_{1}}),...,P({y_{k}}_{z_{k}})\}\\&&\times \min\{P({z_1}_{x_1}),...,P({z_k}_{x_k})\}\\
&&\text{the equality sign holds when for $\forall j\in[1,k]$,}\\
&& P({y_j}_{z_j}) = 0 \text{ or } P({z_j}_{x_j}) = 0\\
&=& \sum_{\substack{{z_1}\ne...\ne{z_k}}} \min\{P({y_{1}}\mid{z_{1}}),...,P({y_{k}}\mid{z_{k}})\}\\&&\times \min\{P({z_1}\mid{x_1}),...,P({z_k}\mid{x_k})\}
\end{eqnarray*}
Hence, we can say for $j \in \{1,...,k\}$,
\begin{eqnarray*}
PNS(k)&& \\
\le \displaystyle \sum_{\substack{{z_1}\neq...\neq{z_k} \\ \in \{1,...,n\}^{k}}}
&&{\min_j\{P({y_j}\mid {z_j})\}} \times \displaystyle{\min_j \{P({z_j}\mid {x_j})\} }
\end{eqnarray*}
\end{proof}

\subsubsection{Proof of Theorem~\ref{nnk+x_p4}}
\begin{proof}
\begin{eqnarray*}
&&PSub(k,p) \\
&=& P({y_{1}}_{x_{1}},...,{y_{k}}_{x_{k}}, x_p)\\
&=& \sum_{\substack{{z_1},...,{z_k} \\ \in \{1,...,n\}^{k}}} P({y_{1}}_{z_{1}},...,{y_{k}}_{z_{k}},x_p,{z_1}_{x_1},...,{z_k}_{x_k})\\
&=& \sum_{\substack{{z_1}\ne...\ne{z_k}}} P({y_{1}}_{z_{1}},...,{y_{k}}_{z_{k}})\times P({z_1}_{x_1},...,{z_k}_{x_k})\\
&&\times P(x_p)\\
&&\text{as ${X} \perp\!\!\!\perp \{Y_{z_j},Z_{x_j}\}$}\\
&\le& \sum_{\substack{{z_1}\ne...\ne{z_k}}} \min\{P({y_{1}}_{z_{1}}),...,P({y_{k}}_{z_{k}})\}\\
&&\times \min\{P({z_1}_{x_1}),...,P({z_k}_{x_k})\}\times P(x_p)\\
&&\text{the equality sign holds when for $\forall j\in[1,k]$,}\\
&& P({y_j}_{z_j}) = 0 \text{ or } P({z_j}_{x_j}) = 0\\
&=& \sum_{\substack{{z_1}\ne...\ne{z_k}}} \min\{P({y_{1}}\mid{z_{1}}),...,P({y_{k}}\mid{z_{k}})\}\\
&&\times \min\{P({z_1}\mid{x_1}),...,P({z_k}\mid{x_k})\}\times P(x_p)\\
\end{eqnarray*}
Hence, we can say for $j \in \{1,...,k\}$,
\begin{eqnarray*}
&&PSub(k,p) \\
&\le& \displaystyle \sum_{\substack{{z_1}\neq...\neq{z_k} \\ \in \{1,...,n\}^{k}}}{\min_j\{P({y_j}\mid {z_j})\}}\\
&&\times \displaystyle{\min_j \{P({z_j}\mid {x_j})\} }\times P(x_p)
\end{eqnarray*}
\end{proof}

\subsubsection{Proof of Theorem~\ref{nnk+y_q4}}
\begin{proof}
\begin{eqnarray*}
&&PRep(k,q)\\
&=& P({y_{1}}_{x_{1}},...,{y_{k}}_{x_{k}}, y_q) \\
&\le& P({y_{1}}_{x_{1}},...,{y_{k}}_{x_{k}})\\
&&\text{the equality sign holds when } P(y_q) = 0\\
&\le& \displaystyle \sum_{\substack{{z_1}\neq...\neq{z_k} \\ \in \{1,...,n\}^{k}}}{\min_j\{P({y_j}\mid {z_j})\}} \times {\min_j \{P({z_j}\mid {x_j})\} }
\end{eqnarray*}
Hence, we can say for $j \in \{1,...,k\}$,
\begin{eqnarray*}
PRep(k,q)&& \\
\le \displaystyle \sum_{\substack{{z_1}\neq...\neq{z_k} \\ \in \{1,...,n\}^{k}}}
&&{\min_j\{P({y_j}\mid {z_j})\}} \times {\min_j \{P({z_j}\mid {x_j})\} }
\end{eqnarray*}
\end{proof}

\subsubsection{Proof of Theorem~\ref{nnk+x_p+y_q4}}
\begin{proof}
\begin{eqnarray*}
&&PN(k,p,q)\\
&=& P({y_{1}}_{x_{1}},...,{y_{k}}_{x_{k}}, x_p, y_q)\\
&=& P({y_{1}}_{x_{1}},...,{y_{k}}_{x_{k}}, {y_q}_{x_p}, x_p)\\
&=& \sum_{\substack{{z_1},...,{z_k},{z_p} \\ \in \{1,...,n\}^{k+1}}} P({y_{1}}_{z_{1}},...,{y_{k}}_{z_{k}},{{y_q}_{z_p}},{z_1}_{x_1},...,{z_k}_{x_k},\\
&&{z_p}_{x_p},x_p)\\
&=& \sum_{\substack{{z_1}\ne...\ne{z_k}\ne{z_p}}} P({y_{1}}_{z_{1}},...,{y_{k}}_{z_{k}},{y_q}_{z_p})\\
&&\times P({z_1}_{x_1},...,{z_k}_{x_k},{z_p}_{x_p})\times P(x_p)\\
&&\text{as ${X} \perp\!\!\!\perp \{Y_{z_j},Z_{x_j}\}$}\\
&\le& \sum_{\substack{{z_1}\ne...\ne{z_k}\ne{z_p}}} \min\{P({y_{1}}_{z_{1}}),...,P({y_{k}}_{z_{k}}),P({y_q}_{z_p})\}\\
&&\times \min\{P({z_1}_{x_1}),...,P({z_k}_{x_k}),P({z_p}_{x_p})\}\times P(x_p)\\
&&\text{the equality sign holds when for $\forall j\in[1,k]$,}\\
&& P({y_j}_{z_j}) = 0 \text{ or } P({y_q}_{z_p}) = 0 \\
&&\text{ or } P({z_j}_{x_j}) = 0 \text{ or } P({z_p}_{x_p}) = 0\\
\end{eqnarray*}
\begin{eqnarray*}
&=& \sum_{\substack{{z_1}\ne...\ne{z_k}\ne{z_p}}} \min\{P({y_{1}}\mid{z_{1}}),...,P({y_{k}}\mid{z_{k}}),\\
&&P({y_q}\mid{z_p})\}\\
&&\times \min\{P({z_1}\mid{x_1}),...,P({z_k}\mid{x_k}),P({z_p}\mid{x_p})\}\\
&&\times P(x_p)
\end{eqnarray*}

Hence, we can say for $j \in \{1,...,k\}$,
\begin{eqnarray*}
PN(k,p,q)&& \\
\le \displaystyle \sum_{\substack{{z_1}\neq...\neq{z_k}\neq{z_p} \\ \in \{1,...,n\}^{k+1}}}
&&{\min_j\{P({y_j}\mid {z_j}),P({y_q}\mid {z_p})\}} \\
&&\times {\displaystyle\min_j \{P({z_j}\mid {x_j}),P({z_p}\mid {x_p})\} } \\
&&\times P(x_p)
\end{eqnarray*}
\end{proof}

\subsection{Examples}
\subsubsection{Example 1:~\ref{ex1}}
From Table~\ref{tb1}, we can collect all data we want.

Since each entry in the table represents the count $n(x,y,z)$, for each fixed value $Z=z$, we have
\[
P(x\mid z)
=
\frac{\sum_y n(x,y,z)}
{\sum_{x,y}n(x,y,z)},
\]
\[
P(y\mid x,z)
=
\frac{n(x,y,z)}
{\sum_y n(x,y,z)},
\]
and
\[
P(y,x\mid z)
=
\frac{n(x,y,z)}
{\sum_{x,y}n(x,y,z)}.
\]
These quantities satisfy
\[
P(y,x\mid z)
=
P(y\mid x,z)P(x\mid z).
\]

For \(z_1\), we have
\begin{align*}
&P(z_1)=\frac{2}{5},\\
&P(x_1\mid z_1)=\frac{1}{6},
\quad
P(x_2\mid z_1)=\frac{1}{3},
\quad
P(x_3\mid z_1)=\frac{1}{2}.
\end{align*}
\begin{align*}
P(y_1\mid x_1,z_1) &= \frac{23}{30},&
P(y_1\mid x_2,z_1) &= \frac{1}{30},\\
P(y_1\mid x_3,z_1) &= \frac{9}{30},&
P(y_2\mid x_1,z_1) &= \frac{6}{30},\\
P(y_2\mid x_2,z_1) &= \frac{28}{30},&
P(y_2\mid x_3,z_1) &= \frac{20}{30},\\
P(y_3\mid x_1,z_1) &= \frac{1}{30},&
P(y_3\mid x_2,z_1) &= \frac{1}{30},\\
P(y_3\mid x_3,z_1) &= \frac{1}{30}.&
\end{align*}

\begin{align*}
P(x_1,y_1\mid z_1) &= \frac{23}{180},&
P(x_2,y_1\mid z_1) &= \frac{2}{180},\\
P(x_3,y_1\mid z_1) &= \frac{27}{180},&
P(x_1,y_2\mid z_1) &= \frac{6}{180},\\
P(x_2,y_2\mid z_1) &= \frac{56}{180},&
P(x_3,y_2\mid z_1) &= \frac{60}{180},\\
P(x_1,y_3\mid z_1) &= \frac{1}{180},&
P(x_2,y_3\mid z_1) &= \frac{2}{180},\\
P(x_3,y_3\mid z_1) &= \frac{3}{180}.&
\end{align*}

For \(z_2\), we have
\begin{align*}
&P(z_2)=\frac{1}{5},\\
&P(x_1\mid z_2)=\frac{1}{2},
\quad
P(x_2\mid z_2)=\frac{1}{6},
\quad
P(x_3\mid z_2)=\frac{1}{3}.
\end{align*}
\begin{align*}
P(y_1\mid x_1,z_2) &= \frac{1}{30},&
P(y_1\mid x_2,z_2) &= \frac{3}{30},\\
P(y_1\mid x_3,z_2) &= \frac{1}{30},&
P(y_2\mid x_1,z_2) &= \frac{25}{30},\\
P(y_2\mid x_2,z_2) &= \frac{25}{30},&
P(y_2\mid x_3,z_2) &= \frac{1}{30},\\
P(y_3\mid x_1,z_2) &= \frac{4}{30},&
P(y_3\mid x_2,z_2) &= \frac{2}{30},\\
P(y_3\mid x_3,z_2) &= \frac{28}{30}.&
\end{align*}

\begin{align*}
P(x_1,y_1\mid z_2) &= \frac{3}{180},&
P(x_2,y_1\mid z_2) &= \frac{3}{180},\\
P(x_3,y_1\mid z_2) &= \frac{2}{180},&
P(x_1,y_2\mid z_2) &= \frac{75}{180},\\
P(x_2,y_2\mid z_2) &= \frac{25}{180},&
P(x_3,y_2\mid z_2) &= \frac{2}{180},\\
P(x_1,y_3\mid z_2) &= \frac{12}{180},&
P(x_2,y_3\mid z_2) &= \frac{2}{180},\\
P(x_3,y_3\mid z_2) &= \frac{56}{180}.&
\end{align*}

For \(z_3\), we have
\begin{align*}
&P(z_3)=\frac{2}{5},\\
&P(x_1\mid z_3)=\frac{2}{3},
\quad
P(x_2\mid z_3)=\frac{1}{6},
\quad
P(x_3\mid z_3)=\frac{1}{6}.
\end{align*}
\begin{align*}
P(y_1\mid x_1,z_3) &= \frac{24}{30},&
P(y_1\mid x_2,z_3) &= \frac{22}{30},\\
P(y_1\mid x_3,z_3) &= \frac{2}{30},&
P(y_2\mid x_1,z_3) &= \frac{3}{30},\\
P(y_2\mid x_2,z_3) &= \frac{1}{30},&
P(y_2\mid x_3,z_3) &= \frac{1}{30},\\
P(y_3\mid x_1,z_3) &= \frac{3}{30},&
P(y_3\mid x_2,z_3) &= \frac{7}{30},\\
P(y_3\mid x_3,z_3) &= \frac{27}{30}.&
\end{align*}

\begin{align*}
P(x_1,y_1\mid z_3) &= \frac{96}{180},&
P(x_2,y_1\mid z_3) &= \frac{22}{180},\\
P(x_3,y_1\mid z_3) &= \frac{2}{180},&
P(x_1,y_2\mid z_3) &= \frac{12}{180},\\
P(x_2,y_2\mid z_3) &= \frac{1}{180},&
P(x_3,y_2\mid z_3) &= \frac{1}{180},\\
P(x_1,y_3\mid z_3) &= \frac{12}{180},&
P(x_2,y_3\mid z_3) &= \frac{7}{180},\\
P(x_3,y_3\mid z_3) &= \frac{27}{180}.&
\end{align*}

After applying the back-door adjustment formula, which gives
\begin{align*}
P({y_j}_{x_j}\mid z)=P(y_j\mid x_j,z).
\end{align*}
Substituting this equality into Theorems~\ref{nnk1} yields a new theorem that requires only observational data, without the need for experimental data. The new equation is listed below:
\begin{align*}
\sum_{z}{\max \left \{
\begin{array}{cc}
0, \\
\\
\displaystyle \sum_{j = 1}^{k}P(y_{j}\mid x_{j}, z) - k + 1, \\
\\
\text{For }i \in  \{1, ..., k\}:\\
\displaystyle \sum_{\substack{1 \le j \le k \\ j \ne i}}
\Big[P({y_{j}}\mid {x_{j}},z)+\\+P({x_{j}\mid z})-P({x_{j}, y_{j}}\mid z)\Big]+\\
+ P({x_{i}, y_{i}}\mid z) - k + 1
\end{array}
\right \}\nonumber
} \times P(z) \\ \le PNS(k)
\end{align*}
\begin{align*}
\sum_{z}{\min \left \{
\begin{array}{cc}
\displaystyle \sum_{j = 1}^{k}P({x_{j}, y_{j}}\mid z) +\\
+ \displaystyle \sum_{j = k+1}^{n}P({x_{j}}\mid z), \\
\\
\text{For }j \in \{1,...,k\}:\\
P({y_{j}}\mid {x_{j}},z),\\
\\
\text{For }m\in\{1,...,k-1\},\\
t_j\in\{1,...,k\}:\\
\displaystyle \frac{1}{m}\Big[\sum_{j = 0}^{m}P({y_{t_j}}\mid {x_{t_j}},z) -\\- P({x_{t_j}, y_{t_j}\mid z})\Big]
\end{array} 
\right \}\nonumber
} \times P(z) \\ \ge PNS(k)
\end{align*}

By applying the new bounds, we can derive

\begin{align*}
\sum_z {\max \left \{
\begin{array}{cc}
0, \\
\displaystyle P(y_{1}\mid x_{1}, z) +P(y_{2}\mid x_{2}, z)\\
+P(y_{3}\mid x_{3}, z)- 2, \\
\Big[P({y_{2}}\mid {x_{2}},z)+P({x_{2}\mid z})\\
-P({x_{2}, y_{2}}\mid z)\Big]+
\Big[P({y_{3}}\mid {x_{3}},z)\\
+P({x_{3}\mid z})-P({x_{3}, y_{3}}\mid z)\Big]\\
+ P({x_{1}, y_{1}}\mid z) - 2,\\
\Big[P({y_{1}}\mid {x_{1}},z)+P({x_{1}\mid z})\\
-P({x_{1}, y_{1}}\mid z)\Big]+
\Big[P({y_{3}}\mid {x_{3}},z)\\
+P({x_{3}\mid z})-P({x_{3}, y_{3}}\mid z)\Big]\\
+ P({x_{2}, y_{2}}\mid z) - 2,\\
\Big[P({y_{1}}\mid {x_{1}},z)+P({x_{1}\mid z})\\
-P({x_{1}, y_{1}}\mid z)\Big]+
\Big[P({y_{2}}\mid {x_{2}},z)\\
+P({x_{2}\mid z})-P({x_{2}, y_{2}}\mid z)\Big]\\
+ P({x_{3}, y_{3}}\mid z) - 2
\end{array}
\right \}\nonumber
} \times P(z)
\\ \le PNS(k)
\end{align*}

\begin{align*}
\sum_{z}{\min \left \{
\begin{array}{cc}
P({x_{1}, y_{1}}\mid z) +P({x_{2}, y_{2}}\mid z)\\
+P({x_{3}, y_{3}}\mid z),\\
P({y_{1}}\mid {x_{1}},z),\\
P({y_{2}}\mid {x_{2}},z),\\
P({y_{3}}\mid {x_{3}},z),\\
\Big[P({y_{1}}\mid {x_{1}},z) -P({x_{1}, y_{1}\mid z})\\
+P({y_{2}}\mid {x_{2}},z) -P({x_{2}, y_{2}\mid z})\Big],\\
\Big[P({y_{1}}\mid {x_{1}},z) -P({x_{1}, y_{1}\mid z})\\
+P({y_{3}}\mid {x_{3}},z) -P({x_{3}, y_{3}\mid z})\Big],\\
\Big[P({y_{2}}\mid {x_{2}},z) -P({x_{2}, y_{2}\mid z})\\
+P({y_{3}}\mid {x_{3}},z) -P({x_{3}, y_{3}\mid z})\Big],\\
\displaystyle \frac{1}{2}\Big[P({y_{1}}\mid {x_{1}},z) -P({x_{1}, y_{1}\mid z})\\
+P({y_{2}}\mid {x_{2}},z)-P({x_{2}, y_{2}\mid z})\\
+P({y_{3}}\mid {x_{3}},z) -P({x_{3}, y_{3}\mid z})\Big]
\end{array} 
\right \}\nonumber
} \times P(z) \\ \ge PNS(k)
\end{align*}

After applying all the required data we obtained before, we can derive

\begin{align*}
0
\leq
PNS(3)
\leq
0.033.
\end{align*}

\subsubsection{Example 2:~\ref{ex2}}
From Table~\ref{tb2}, we can collect all data we want.

Since each cell in the table represents the joint count $n(x,y,z)$, we obtain
\[
P(z\mid x)
=
\frac{\sum_y n(x,y,z)}
{\sum_{z,y}n(x,y,z)}.
\]
That is, for a fixed $X=x$, we first sum over $Y$ within each level $Z=z$ and then divide by the total number of individuals with $X=x$.

Similarly,
\[
P(y\mid z)
=
\frac{\sum_x n(x,y,z)}
{\sum_{x,y}n(x,y,z)}.
\]
That is, for a fixed $Z=z$, we sum over $X$ for each outcome $Y=y$ and then divide by the total number of individuals in the stratum $Z=z$.

\begin{align*}
P(z_1\mid x_1) &= \frac{28}{30},&
P(z_2\mid x_1) &= \frac{1}{30},\\
P(z_3\mid x_1) &= \frac{1}{30},&
P(z_1\mid x_2) &= \frac{19}{30},\\
P(z_2\mid x_2) &= \frac{8}{30},&
P(z_3\mid x_2) &= \frac{3}{30},\\
P(z_1\mid x_3) &= \frac{1}{30},&
P(z_2\mid x_3) &= \frac{7}{30},\\
P(z_3\mid x_3) &= \frac{22}{30}.&
\end{align*}

\begin{align*}
P(y_1\mid z_1) &= \frac{16}{30},&
P(y_2\mid z_1) &= \frac{13}{30},\\
P(y_3\mid z_1) &= \frac{1}{30},&
P(y_1\mid z_2) &= \frac{2}{30},\\
P(y_2\mid z_2) &= \frac{27}{30},&
P(y_3\mid z_2) &= \frac{1}{30},\\
P(y_1\mid z_3) &= \frac{6}{30},&
P(y_2\mid z_3) &= \frac{3}{30},\\
P(y_3\mid z_3) &= \frac{21}{30}.&
\end{align*}

Following the example, we can apply Thm.~\ref{nnk4}:
\begin{align*}
\min \left \{
\begin{array}{cc}
{PNS}_{{UB}},\\
\\
{\min\{P({y_1}\mid {z_1}),P({y_2}\mid {z_2}),P({y_3}\mid {z_3})\}} \times \\\displaystyle{\min\{P({z_1}\mid {x_1}),P({z_2}\mid {x_2}),P({z_3}\mid {x_3})\} }\\
+\\
+{\min\{P({y_1}\mid {z_1}),P({y_2}\mid {z_3}),P({y_3}\mid {z_2})\}} \times \\\displaystyle{\min\{P({z_1}\mid {x_1}),P({z_3}\mid {x_2}),P({z_2}\mid {x_3})\} }\\
+\\
+{\min\{P({y_1}\mid {z_2}),P({y_2}\mid {z_1}),P({y_3}\mid {z_3})\}} \times \\\displaystyle{\min\{P({z_2}\mid {x_1}),P({z_1}\mid {x_2}),P({z_3}\mid {x_3})\} }\\
+\\
+{\min\{P({y_1}\mid {z_2}),P({y_2}\mid {z_3}),P({y_3}\mid {z_1})\}} \times \\\displaystyle{\min\{P({z_2}\mid {x_1}),P({z_3}\mid {x_2}),P({z_1}\mid {x_3})\} }\\
+\\
+{\min\{P({y_1}\mid {z_3}),P({y_2}\mid {z_1}),P({y_3}\mid {z_2})\}} \times \\\displaystyle{\min\{P({z_3}\mid {x_1}),P({z_1}\mid {x_2}),P({z_2}\mid {x_3})\} }\\
+\\
+{\min\{P({y_1}\mid {z_3}),P({y_2}\mid {z_2}),P({y_3}\mid {z_1})\}} \times \\\displaystyle{\min\{P({z_3}\mid {x_1}),P({z_2}\mid {x_2}),P({z_1}\mid {x_3})\} }
\end{array} 
\right \}\nonumber
\\ \ge PNS(3)
\end{align*}

By applying the data, we have:
\begin{align*}
&\min \left\{
\begin{array}{c}
0.507,\\
\displaystyle
\min\left\{\frac{16}{30},\frac{27}{30},\frac{21}{30}\right\}
\times
\min\left\{\frac{28}{30},\frac{8}{30},\frac{22}{30}\right\}\\[2mm]
\displaystyle
+
\min\left\{\frac{16}{30},\frac{3}{30},\frac{1}{30}\right\}
\times
\min\left\{\frac{28}{30},\frac{3}{30},\frac{7}{30}\right\}\\[2mm]
\displaystyle
+
\min\left\{\frac{2}{30},\frac{13}{30},\frac{21}{30}\right\}
\times
\min\left\{\frac{1}{30},\frac{19}{30},\frac{22}{30}\right\}\\[2mm]
\displaystyle
+
\min\left\{\frac{2}{30},\frac{3}{30},\frac{1}{30}\right\}
\times
\min\left\{\frac{1}{30},\frac{3}{30},\frac{1}{30}\right\}\\[2mm]
\displaystyle
+
\min\left\{\frac{6}{30},\frac{13}{30},\frac{1}{30}\right\}
\times
\min\left\{\frac{1}{30},\frac{19}{30},\frac{7}{30}\right\}\\[2mm]
\displaystyle
+
\min\left\{\frac{6}{30},\frac{27}{30},\frac{1}{30}\right\}
\times
\min\left\{\frac{1}{30},\frac{8}{30},\frac{1}{30}\right\}
\end{array}
\right\}
\\
=&\min \left\{
\begin{array}{c}
0.507,\\
\\
\displaystyle\frac{16}{30}\frac{8}{30} + \frac{1}{30}\frac{3}{30} + \frac{2}{30}\frac{1}{30} + \frac{1}{30}\frac{1}{30} + \frac{1}{30}\frac{1}{30} \\
\\
\displaystyle+ \frac{1}{30}\frac{1}{30}
\end{array}
\right\}\\
=&0.151.
\end{align*}

Hence, we can conclude:
\begin{align*}
0 \leq \mathrm{PNS}(3)\ \leq 0.151.
\end{align*}

\subsection{Simulation Algorithm}
We used the following algorithm to generate samples and conduct the
simulations. $Q$ selects the causal quantity; $G$ and $Z$ select the structure.
Every theorem in the paper is one choice of $(Q, G, Z)$. Following \citet{mueller2021causeseffectslearningindividual}, we use the same procedure to generate the conditional probability tables.
\begin{algorithm}[!b]
\caption{Generate simulation data for \(Q\) under \(G\)}
\label{alg:simulation}

\textbf{Input}: Number of output samples \(n\)\\
\phantom{\textbf{Input}:} Causal diagram \(G\)\\
\phantom{\textbf{Input}:} Variable to condition on \(Z\)\\
\phantom{\textbf{Input}:} Quantity \(Q\), where\\
\phantom{\textbf{Input}:}
\(\quad Q\in\{\mathrm{PNS}(k),\mathrm{PSub}(k,p),\)\\
\phantom{\textbf{Input}:}
\(\qquad\quad \mathrm{PRep}(k,q),\mathrm{PN}(k,p,q)\}\)\\
\textbf{Output}: List of 4-tuples containing the general\\
\phantom{\textbf{Output}:} lower bound, graph-based lower bound,\\
\phantom{\textbf{Output}:} graph-based upper bound, and general\\
\phantom{\textbf{Output}:} upper bound

\begin{algorithmic}[1]

\FOR{\(i\gets1\) to \(n\)}

    \STATE
    \(\mathrm{cpt}\gets
    \textsc{generate-cpt}(G,\operatorname{Exp}(1))\)

    \STATE
    \textit{// General (Shu et al.) lower and upper bounds}

    \STATE
    \((\mathrm{lb},\mathrm{ub})
    \gets
    \textsc{q-bounds}(Q,\mathrm{cpt})\)

    \STATE
    \textit{// Graph-based lower and upper bounds}

    \STATE
    \((\mathrm{lb}_G,\mathrm{ub}_G)
    \gets
    \textsc{q-graph}(Q,\mathrm{cpt},G,Z)\)

    \STATE
    \(\textsc{append-result}
    (\mathrm{lb},\mathrm{lb}_G,\mathrm{ub}_G,\mathrm{ub})\)

\ENDFOR

\STATE \textbf{return} output list

\end{algorithmic}
\end{algorithm}

\end{document}